\newcolumntype{x}[1]{%
	>{\centering\hspace{0pt}}p{#1}}%
\newcommand{\Epi}{\mathcal{E}pi}
\DeclareMathOperator*{\Lim}{Lim}
\DeclareMathOperator*{\Limsup}{Limsup}
\newcommand{\EQ}{\begin{eqnarray}}
	\newcommand{\EN}{\end{eqnarray}}
\newcommand{\EQQ}{\begin{eqnarray*}}
	\newcommand{\ENN}{\end{eqnarray*}}
\newcommand{\real}{\mathbb{R}}
\renewcommand{\natural}{\mathbb{N}}
\newcommand{\natzero}{\mathbb{N}_{\geq0}}
\newcommand{\BB}{\mathcal{B}}
\newcommand{\CC}{\mathcal{C}}
\newcommand{\DD}{\mathcal{D}}
\newcommand{\FF}{\mathcal{F}}
\newcommand{\HH}{\mathcal{H}}
\newcommand{\II}{\mathcal{I}}
\newcommand{\TT}{\mathcal{T}}
\newcommand{\UU}{\mathcal{U}}
\newcommand{\XX}{\mathcal{X}}
\newcommand{\YY}{\mathcal{Y}}
\newcommand{\ZZ}{\mathcal{Z}}
\newcommand{\aA}{\mathfrak{A}}
\newcommand{\bB}{\mathfrak{B}}
\newcommand{\cC}{\mathfrak{C}}
\newcommand{\dD}{\mathfrak{D}}
\newcommand{\tT}{\mathfrak{T}}
\newcommand{\dsT}{\mathds{T}}
\newcommand{\dist}{\operatorname{dist}}
\newcommand{\argmax}{\operatorname{argmax}}
\newtheorem{theorem}{\bf Theorem}[section]
\newtheorem{corollary}{\bf Corollary}[section]
\newtheorem{lemma}{\bf Lemma}[section]
\newtheorem{remark}{\bf Remark}[section]
\newtheorem{assumption}{\bf Assumption}[section]
\newtheorem{claim}{\bf Claim}[section]
\title{
iPolicy: Incremental Policy Algorithms for Feedback Motion Planning
}
\begin{document}

\author{\begin{tabular}{cccc}
    Guoxiang Zhao$^{1, *}$ & Devesh K. Jha$^{2, *}$ & Yebin Wang$^2$  & Minghui Zhu$^3$\\	 
	\end{tabular}\\ \vspace{3pt}
    \thanks{$^*$ Equal contributions.}
    \thanks{$^1$ Guoxiang Zhao is with School of Future Technology and Institute of Artificial Intelligence, Shanghai University, Shanghai, 200444, China. (email: \texttt{gxzhao@shu.edu.cn})}
	\thanks{$^2$ Devesh K. Jha and Yebin Wang are with Mitsubishi Electric Research Laboratories, Cambridge, MA 02139, USA. (email: \texttt{jha@merl.com}, \texttt{yebinwang@ieee.org})}
	\thanks{$^3$ Minghui Zhu is with School of Electrical Engineering and Computer Science, Pennsylvania State University, University Park, PA 16802, USA. This work by M. Zhu was partially supported by the grant U.S. NSF CNS1830390. (email: \texttt{muz16@psu.edu})}
}

\maketitle
\thispagestyle{empty}
\pagestyle{empty}

\begin{abstract}

This paper presents policy-based motion planning for robotic systems. The motion planning literature has been mostly focused on open-loop trajectory planning which is followed by tracking online. In contrast, we solve the problem of path planning and controller synthesis simultaneously by solving the related feedback control problem. We present a novel incremental policy (iPolicy) algorithm for motion planning, which integrates sampling-based methods and set-valued optimal control methods to compute feedback controllers for the robotic system. In particular, we use sampling to incrementally construct the state space of the system. Asynchronous value iterations are performed on the sampled state space to synthesize the incremental policy feedback controller. We show the convergence of the estimates to the optimal value function in continuous state space. Numerical results with various different dynamical systems (including nonholonomic systems) verify the optimality and effectiveness of iPolicy. 

\end{abstract}

\section{Introduction}\label{sec:Intro}
\IEEEPARstart{I}{nformally} speaking, given a robot with a description of its dynamics, a description of its environment and a set of goal states, the motion planning problem is to find a sequence of control inputs so as to guide the robot from the initial state to one of the goal states while avoiding collision in the cluttered environment. It is well-known that robotic motion planning is at least as difficult as the generalized piano mover's problem, which has been proven to be PSPACE-hard~\cite{JHR:79}. 
Many planning algorithms have been proposed.
The discrete approaches, such as Dijkstra's Algorithm and A*~\cite{L06}, usually depend on the structured environment and apply graph search to find the shortest path.
The optimality of their returned trajectories are formally guaranteed for discrete problems but their heavy dependency on structured environment makes them suffer from inaccuracies and infeasibility of the solution when dealing with continuous systems~\cite{L11}.
Recently, sampling-based geometric planning algorithms such as the rapidly-exploring random trees (RRT)~\cite{LK01} and its optimal variant RRT*~\cite{KF11} are arguably the most influential and widely-used motion planning algorithms since the last two decades. They are shown to compute quickly in high-dimensional continuous space and possess theoretical guarantees such as probabilistic completeness and optimality. 

Robots, in most of the practical applications, have stringent differential constraints on their motion which need to be properly incorporated during motion planning. Motion planning for dynamical systems has a rich history and much work has been done on this topic~\cite{MS93,LSL98,L06, wang2017two}.  However, the problem of motion planning for nonholonomic systems is still open in many aspects~\cite{schmerling2015optimal}. Sampling-based algorithms have received a lot of attention for their efficiency in handling obstacle-cluttered environments.
There are two main bodies of research to make these methods more suitable for dealing with differential constraints: one develops steering functions for dynamical systems using concepts from feedback control theory (see e.g.,~\cite{PPKKL12, WV13}); the other direction of work improves the computational effectiveness of the algorithms using geometric planning (see e.g.,~\cite{schmerling2015optimal, KF13, hauser2015lazy}). Despite the tremendous body of work on this topic, most of the work focuses on planning open-loop trajectories.

In practical implementation, the planned open-loop trajectories are tracked by a feedback controller.~\cite{8718025}.
The presence of differential, state and input constraints, however, makes the design of the feedback tracking controller for highly nonlinear and complex robots still a difficult problem.
Furthermore, the performance guarantees of the closed-loop trajectories could be lost in the presence of these constraints, especially in terms of minimizing the aggregated cost. 
It necessitates feedback motion planning which explicitly takes into account the feedback tracking during the planning process. 
The feedback planner computes the mapping from state space to control space subject to constraints and equivalently searches for the solution for every initial condition of the robot; this facilitates the completeness and performance guarantee of the planner.
However, the feedback motion planning problem is challenging in various aspects and most of the computational issues involving feedback planning are still unexplored and open. Interested readers are referred to~\cite{L11} for a comprehensive survey of the challenges in feedback motion planning. Most of the previous approaches for feedback motion planning struggle with computational tractability in continuous state-space or local minima issues~\cite{L06}.
In this paper, we present a method for feedback motion planning of dynamical systems which makes use of sampling-based methods and value iterations to recover the optimal feedback controller. 
As a synthesis of the computational advantages of sampling-based algorithms and the approximation consistency of set-valued algorithms \cite{cardaliaguet1999set}, the proposed iPolicy algorithm is an incremental and anytime feedback motion planner with formal guarantee of asymptotic optimality.

\textbf{Contributions.} 
This paper presents a sampling-based algorithm for feedback motion planning for a class of nonlinear robotic systems. 
In this work, we leverage the existing set-valued analysis tool for motion planning and asynchronous value iterations to synthesize feedback motion planners in an incremental fashion. 
In particular, the continual sampling creates an approximation of the original minimal time problem in every iteration and the policy on the discretized space provides an incrementally refined estimate of the value function. 
Using value iterations in an asynchronous and incremental manner limits the computational requirements while retaining optimality conditions. 
We show the convergence of the estimated value functions to the optimal value function for the motion planning problem in the continuous state space. 
For clarification of presentation, the various salient features of the algorithms are demonstrated with simulations using point mass system. 
Some numerical simulations are then provided for feedback motion planning of simple car and Dubins car (these are well studied nonholonomic systems~\cite{L06, wang2017two}) using the proposed motion planning algorithm. 
Several simulations are provided to show the value functions calculated by the proposed algorithm and trajectories under the guidance of the computed controller in cluttered environments. 
Through simulation study of various different dynamical systems, we show the applicability of the proposed algorithms across a wide range of robotic systems (two different classic nonholonomic systems). 

\textbf{Organization.} This paper is organized in eight sections including the current one. We present work related to our proposed problem in Section~\ref{sec:related_work}. Commonly sued notations and notions are clarified in Section~\ref{sec:Notations}. In Section~\ref{sec:ProblemStatement}, we present a formal statement of the problem solved in this paper. The main algorithms of the paper are presented in Section~\ref{sec:Algorithm} followed by analysis of the same in Section~\ref{sec:Analysis}. Numerical results for the proposed algorithm are presented in Section~\ref{sec:Results} with related interpretation and discussion. The paper is finally concluded with a summary and future work in Section~\ref{sec:conclusions}.

\section{Related Work}\label{sec:related_work}
Motion planning is central to robotics. Consequently, it has received a lot of attention in robotics and controls community. Broadly speaking, there are mostly three kinds of approaches for motion planning -- optimization-based methods, gradient-based methods and sampling-based methods. 

Some of the most popular optimization-based approaches could be found in~\cite{5980280, 5152817, doi:10.1137/16M1062569, schulman2014motion, DBLP:journals/corr/abs-2106-03220}. The main idea of these approaches is to formulate a dynamic optimization problem with smooth formulation of constraints (like collision, etc.). The resulting optimization problem can then be solved to generate an optimal trajectory. However, the main shortcoming of these optimization-based techniques is that they struggle to find solution in obstacle cluttered, non-convex spaces. Trajectory optimization techniques using optimal control literature can generate control trajectories in presence of state and input constraints for nonlinear dynamical systems -- however, they may not be able to find feasible trajectories in the presence of arbitrary obstacles ~\cite{doi:10.1002/rnc.4488, tassa2012synthesis}. These methods are mostly used to compute trajectories under dynamical constraints which can be followed online using a trajectory tracking controller using desirable constraints for execution~\cite{zhang2021trajectory, de2020tunempc, majumdar2017funnel}. 

Gradient-based approaches, such as potential fields \cite{khatib1986realtime, BOREN91} and navigation functions \cite{koditschek1990robot}, consider the composition of a repulsive component for collision avoidance and an attractive component for reaching goal, and derive the control law based on the gradient of the synthesized field.
The computations of both fields usually only depend on local information and this brings computational efficiency to gradient-based approaches; however, the cancellation of different components in the composition can also trap the robot at local minima~\cite{L06}. Thus, these approaches tend to struggle when a robot has to operate in obstacle cluttered environment.

Sampling-based approaches are shown to be able to successfully address irregularly shaped environments and arguably the most widely used methods for motion planning in robotics~\cite{L06, doi:10.1146/annurev-control-061623-094742}. However, most of these methods are used to compute open-loop plans for the robot to follow.
It is, in general, desirable that robots operate in a feedback fashion using state estimates during execution of a task. Consequently, there has been some work on using sampling-based algorithms to develop feedback planners by using different metrics to select the best path. Some methods to solve for feedback planners are presented in~\cite{TMTR10, T09, PPKKL12, MSD11, HKF12}. 
However, very little work has been done on using these algorithms to incrementally build the state-space of systems to solve for continuous time and space optimal control problems for the dynamical systems. So, even though the motion planning problem has been thoroughly studied in literature, there seems to be lack of techniques and algorithms which can compute feedback motion planners for dynamical systems. We make an attempt to address this problem in this paper.

Our approach is closely related to the value iterations-based approach in~\cite{JZR15A}; however, we use asynchronous value iterations, provide results for nonholonomic systems and also provide rigorous numerical simulation results.
Our results on optimal performance do not require the explicit description of steering functions between any two states of the system in the collision-free space. 
As such, we expect our results to cater to a rich set of robotic motion planning problems. 
Our approach is also related to \cite{zhao2020pareto} but differs in the way to solve the approximate motion planning problem, where \cite{zhao2020pareto} spends heavy computations searching for the optimal solution over the entire graph before refinement while our approach swiftly solves for a small subset of the graph and proceeds to finer graphs.
This brings the continuously increasing optimality to our approach and it incrementally improves solutions once more computational resources are given.
The problem and algorithms presented in this paper are different from some other sampling-based feedback planning like~\cite{ACM14}, and the proposed algorithms have the advantage of being incremental (instead of batch). Furthermore, optimal performance guarantees for the feedback control problem have been provided.

\section{Notations and notions}\label{sec:Notations}
Let $\|\cdot\|$ be the $2$-norm in $\real^n$.
Define the distance from a point $x\in\real^n$ to a set $\XX\subseteq\real^n$ by $\dist(x, \XX)\triangleq\inf_{x'\in\XX}\|x-x'\|$.
Given a compact set $\XX\subseteq\real^n$ and a function $v:\XX\to\real$, denote the supremum norm over $\XX$ by $\|v\|_\XX\triangleq\sup_{x\in\XX}|v(x)|$.
Let the Lebesgue measure of the set $\XX$ be $\mu(\XX)$.
Denote the unit ball in $\real^n$ by $\BB_n$ and the volume of the unit ball by $C_n\triangleq\frac{\pi^{n/2}}{(n/2)!}$.
When no ambiguity is caused, the dimensionality in the subscript may be omitted.
Define the Minkowski sum in $\real^n$ for $\XX\subseteq\real^n$ and $\YY\subseteq\real^n$ by $\XX+\YY\triangleq\{x+y|x\in\XX, y\in\YY\}$.
When $\XX=\{x\}$ is a singleton, with slight abuse of notations, the Minkowski sum can be written as $x+\YY\triangleq\{x+y|y\in\YY\}$.
Define the value assignment operator by $a\leftarrow b$ where the value of $b$ is assigned to $a$.

\section{Problem Formulation}\label{sec:ProblemStatement}

Consider a robot associated with a time-invariant dynamic system governed by the following differential equation:
\begin{equation}\label{eqn:systemeqn}
\dot{x}(t) = f(x(t), u(t)),
\end{equation}
where $x(t)\in
\XX\subseteq\real^{n}$ is the state and $u(t)\in\UU\subseteq\real^m$ is the control of robot. 
The following mild assumptions are imposed throughout the paper and they are inherited from \cite{cardaliaguet1999set}.

\begin{assumption}\label{asmp:system}
    The following assumptions hold for \eqref{eqn:systemeqn}:
	\begin{enumerate}[label=\textbf{(A\arabic*)}, leftmargin = *, align=left]
	\item \label{asmp:compactness}$\XX$ and $\UU$ are non-empty and compact;
	\item \label{asmp:continuity}$f$ is continuous in $u$ and Lipschitz continuous in $x$ for any $u\in\UU$ with a Lipschitz constant $l$;
	\item $f$ is linear growth; i.e., $\exists c\geq0$ s.t. $\forall x\in\XX, u\in\UU, \|f(x, u)\|\leq c(\|x\|+\|u\|+1)$;
	\item \label{asmp:convexity}$\bigcup_{u\in\UU}f(x, u)$ is a convex set for any $x\in\XX$.
	\end{enumerate}
\end{assumption}
Assumptions~\ref{asmp:compactness} and \ref{asmp:continuity} imply $\|f(x, u)\|$ is bounded.
Denote the upper bound of $\|f(x, u)\|$ by $M\triangleq\max_{x\in\XX, u\in\UU}\|f(x, u)\|$.

Let $\XX_{\mathrm{obs}}$ and $\XX_{\mathrm{goal}}$ be the obstacle region and the goal region, respectively. 
Define the obstacle free region as $\XX_{\mathrm{free}}\triangleq\XX\setminus\XX_{\mathrm{obs}}$. 
Denote the trajectory of system~\eqref{eqn:systemeqn} as $\phi(\cdot;x,\pi):\real\to\XX$ given the initial state $x$ and a state feedback controller $\pi:\XX\to\UU$.
Let $\dsT^\pi(x)$ be the first time when the trajectory $\phi(\cdot;x,\pi)$ hits $\XX_{\mathrm{goal}}$ while staying in $\XX_{\mathrm{free}}$ before $\dsT^\pi(x)$; i.e., 
\begin{equation*}
\begin{split}
    \dsT^\pi(x)& \triangleq \inf\{t\geq 0|\phi(t;x,\pi)\in \XX_{\mathrm{goal}},\\
& \phi(\tau;x,\pi)\in \XX_{\mathrm{free}},\forall\tau\in [0,t]\}.
\end{split}
\end{equation*}
The problem of interest in this paper is to find the control policy $\pi^*:\XX\to\UU$ for system~\eqref{eqn:systemeqn} that incurs the minimal traveling time $\dsT^*$ for every $x\in\XX$; that is,
\begin{align*}
    \dsT^*(x)=\min_{\pi}\dsT^\pi(x), \forall x\in\XX
\end{align*}
and the minimizer $\pi^*$ is referred to as the optimal policy.

Notice that the minimal traveling time function $\dsT^*$ is not necessarily bounded, as $\XX_\mathrm{goal}$ may not be reachable for every $x\in\XX_\mathrm{free}$, and $\dsT^*(x)$ would be positive infinity for such states.
The positive infinity can cause difficulties in both computational expressions and mathematical analysis, and we address this issue by leveraging the Kruzhkov transform $\Psi:\real_{\geq0}\bigcup\{+\infty\}\to[0,1]$, defined as
\begin{align*}
    (\Psi\circ\dsT)(x)\triangleq 1-\exp(-\dsT(x)),
\end{align*}
and we let the transformed value function be denoted by $\varTheta\triangleq\Psi\circ\dsT$ inheriting all subscripts and superscripts of $\dsT$.
Notice that the Kruzhkov transform is bijective and monotonically increasing.
The positive infity is transformed to $1$ after the Kruzhkov transform while $0$ remains the same.
The objective of this paper is equivalently to find the transformed minimal traveling time function $\varTheta^*$ and the optimal policy $\pi^*$.

\section{The Incremental Policy (iPolicy) Algorithm}\label{sec:Algorithm}
In this section, we present the incremental policy algorithm iPolicy.
We leverage sampling-based methods in \cite{KF11,KF13,SJP14} and set-valued methods in \cite{cardaliaguet1999set,zhao2020pareto} along with asynchronous value iterations to incrementally approximate the minimal traveling time function.
In particular, iPolicy consists of two components: graph expansion and minimal traveling time estimation.
In graph expansion, iPolicy continually samples the free region $\XX_{\mathrm{free}}$ and constructs a graph using the set-valued method to approximate the continuous time dynamic of system \eqref{eqn:systemeqn}; in minimal traveling time estimation, asynchronous value iterations are executed over the approximate graph in a back propogation fashion, where only stale values are updated to save computations, to obtain an estimate of the minimal traveling time function.
As more samples are added, iPolicy iteratively expands graph and estimates the minimal traveling time function until a certain limit (e.g., time limit, approximation error threshold) is reached.
Denote the set of sampled states from $\XX_{\mathrm{free}}$ after $k$ iterations of iPolicy by $V_k$.
With slight abuse of notations, denote the estimate of minimal traveling time, also known as the value function, after $k$ iterations by $\dsT_k: V_k\to[0, +\infty]$.
The staleness function $\FF_k:V_k\to\natzero$ is defined as the number of iterations since the last update of a sample $x\in V_k$.
We proceed to explain iPolicy in the rest of this section.

\begin{algorithm}[h] \small
    \textit{Initialization}\\
    \For{$x\in V_0$}{\label{alg:iFPA:init:begin}
        $\varTheta_0(x)\leftarrow0$\;
        $\FF_0(x)\leftarrow P$\label{alg:iFPA:initializeStaleness}\;
    }\label{alg:iFPA:init:end}
    $k\leftarrow1$\;
    \textit{Main Loop}\label{alg:iFPA:mainLoopStart}\\
	\While {$k < K$}{
        $x_{\mathrm{new}}\leftarrow\texttt{Sample}(\XX_{\mathrm{free}}+d_k\BB)$\;\label{alg:iFPA:sampling}
        $V_k\leftarrow V_{k-1}\cup\{x_{\mathrm{new}}\}$\;\label{alg:iFPA:expansion}
        \If{$x_{\mathrm{new}}\in\XX_\mathrm{goal}+(M\epsilon_k+d_k)\BB$\label{alg:iFPA:mainloop:init:begin}}{
            $\hat{\varTheta}_k(x_{\mathrm{new}})\leftarrow0$\label{alg:iFPA:mainloop:init:value0}\;
        }\Else{
            $\hat{\varTheta}_k(x_{\mathrm{new}})\leftarrow1$\label{alg:iFPA:mainloop:init:value1}\;
        }
        $\hat{\FF}_k(x_{\mathrm{new}})\leftarrow P$\;\label{alg:iFPA:flagged}
        \For{$x\in V_{k-1}$}{
            $\hat{\varTheta}_k(x)\leftarrow \varTheta_{k-1}(x)$\;
            $\hat{\FF}_k(x)\leftarrow\FF_{k-1}(x)$\;
            Compute $F_k(x)$ in \eqref{eq:onehops}\;
        }\label{alg:iFPA:mainloop:init:end}
        $(\varTheta_k,\FF_k)\leftarrow {\texttt{ValueIteration}} {(V_k, \hat{\varTheta}_k,\hat{\FF}_k, m_k)}$\label{alg:iFPA:VI}\;
        $k\leftarrow{k+1}$;
	}
    \Return{$V_k$}\;
	\caption{Incremental Policy (iPolicy) Algorithm}
	\label{algorithm:iFPA}
\end{algorithm}

\SetKwProg{Def}{def}{:}{}
\begin{algorithm}[h] \small
    \caption{The \texttt{ValueIteration} procedure}\label{alg:VI}
	\textbf{Input:} Samples $V$, transformed value function $\varTheta$, staleness function $\FF$, recursion allowance $m$\;
    \Def{\textup{\texttt{ValueIteration}($V, \varTheta, \FF, m$)}}{
        $\Delta\leftarrow\Psi(\epsilon-d), \beta=1-\Delta$\;
        $S\leftarrow\{x\in V|\FF(x)=P, x\in\XX_{\mathrm{free}}+d\BB\}$\label{alg:VI:getStalenessSet}\;
        \For{$x\in S$}{
            $\varTheta(x)\leftarrow\texttt{BackProp}(x, m, \varTheta)$\;\label{alg:VI:backprop}
            $\FF(x)\leftarrow0$\;\label{alg:VI:clearStaleness}
        }
        \For{$x\in V\setminus S$}{
            $\FF(x)\leftarrow\FF(x)+1$\;\label{alg:VI:increaseStaleness}
        }
        \Return{$(\varTheta, \FF)$}
    }
\end{algorithm}	

\begin{algorithm}[h!] \small
    \caption{The \texttt{BackProp} procedure}\label{alg:backprop}
    \textbf{Input:} State $x\in V$, recursion allowance $m$, transformed value function $\varTheta^m$\;
    \Def{\textup{\texttt{BackProp}}\textup{($x$, $m$, $\varTheta^m$)}}{
        \If{$x\in\XX_{\mathrm{goal}}+(M\epsilon+d)\BB$ \textbf{or} $m=0$\label{alg:backprop:boundary}}{
        \Return $\varTheta^m(x)$\;
        }
        \For{$x'\in F(x)$\label{alg:backprop:onehops}}{
            $\varTheta^{m-1}(x)\leftarrow\texttt{BackProp}(x', m-1, \varTheta^m)$\;\label{alg:backprop:recursion}
        }
        $\displaystyle \varTheta^m(x)\leftarrow\Delta+\beta\min_{x'\in F(x)}\varTheta^{m-1}(x')$\;\label{alg:backprop:bellman}
        \Return $\varTheta^m(x)$
    }
\end{algorithm}

\subsection{Algorithm statement}
We initialize iPolicy with a set of states $V_0$, which we assume it intersects with the goal set $\XX_{\mathrm{goal}}$ to ensure the availability of boundary condition in value iteration execution.
Each sampled state $x\in V_0$ is initialized with its value set by $\varTheta_0(x)=0$ and the its staleness by a staleness threshold $P\geq0$, meaning their values shall be updated at the beginning of the main loop of iPolicy.
See lines \ref{alg:iFPA:init:begin}-\ref{alg:iFPA:init:end} in Algorithm~\ref{algorithm:iFPA}.

Then iPolicy enters the main loop in line~\ref{alg:iFPA:mainLoopStart}.
Similar to exploration, the main loop samples $\XX_{\mathrm{new}}$ and add newly sampled state $x_{\mathrm{new}}$ to $V_k$ as line~\ref{alg:iFPA:sampling}. 
Then the value and staleness of $x_{\mathrm{new}}$ are initialized in the same way as those in the initialization stage, while for sampled states from the last grid $V_{k-1}$, we retain their staleness and values as $\FF_{k-1}$ and $\varTheta_{k-1}$.
Since the new sample may incur paths with lower cost, value iterations shall be executed on $V_k$ as line~\ref{alg:iFPA:VI} in Algorithm~\ref{algorithm:iFPA} and Algorithm~\ref{alg:VI} to obtain a better estimate of the minimal traveling time $\dsT_k$.
The conventional value iteration is executed in a synchronous fashion, where the value at every sampled state on $V_k$ is updated in every iteration, and its computations grow quickly as more and more samples are added to $V_k$.
We employ the asynchronous value iteration to mitigate the computational complexity.
Particularly, the value at $x$ is updated when two circumstances occur: first, a sampled state is newly added to $V_k$, since its initial value may not be accurate; second, the value of a sampled state is too stale to reflect the true estimate of the minimal traveling time.
Recall the staleness function $\FF_k:V_k\to\natzero$ represents the number of iterations since the last update of a sample $x\in V_k$.
Let $\hat{\FF}_k$ be the staleness function before the update at iteration $k$.
Then $\hat{\FF}_k(x)=\FF_{k-1}(x), \forall x\in V_{k-1}$ and for $x\in V_k\setminus V_{k-1}$, we have $\hat{\FF}_k(x)=P$.
The staleness threshold $P\in\natzero$ is the maximum number of value iterations a sampled state can skip.
When $P=0$, the asynchronous update is identical to the synchronous update, while when $P$ is positive infinity, each sample will only be updated once.
In Algorithm~\ref{alg:VI} \texttt{ValueItertion}, the set of stale samples $S_k\triangleq\{x\in V_k|\hat{\FF}_k(x)=P, x\in\XX_{\mathrm{free}}+d_k\BB\}$ are updated using Algorithm~\ref{alg:backprop} \texttt{BackProp} that recursively searches for the minimum values from one's neighbors.
After updates, the staleness for the samples in $S_k$ will be cleared as line~\ref{alg:VI:clearStaleness} in Algorithm~\ref{alg:VI} as they have been freshly updated while staleness of other samples $V_k\setminus S_k$ would be increased by $1$ as line~\ref{alg:VI:increaseStaleness}.

Algorithm~\ref{alg:backprop} \texttt{BackProp} leverages the Bellman equation to recursively update values and partially solve for the estimate of minimal traveling time on $V_k$.
For simplicity, we drop the dependency on $k$ in the psuedo codes of \texttt{BackProp}.
The vanilla update procedure of value iteration is summarized as the Bellman operator $\TT_k$:
\begin{equation}\label{eqn:BellmanOperator}
(\TT_k\circ\dsT_k)(x)\triangleq\begin{cases}
    &\displaystyle\epsilon_k-d_k+\min_{x'\in F_k(x)}\dsT_k(x'), \\
        &\quad\text{if }x\in V_k\setminus(\XX_{\mathrm{goal}}+(M\epsilon_k+d_k)\BB);\\
    &\dsT_k(x), \text{otherwise},
\end{cases}
\end{equation}
where 
\begin{equation}\label{eq:onehops}
    \begin{aligned}
    F_k(x)\triangleq&(x+\epsilon_k\bigcup_{u\in\UU}f(x, u)+\rho(d_k, \epsilon_k)\BB)\\
    &\cap(\XX_{\mathrm{free}}+d_k\BB)\cap V_k
    \end{aligned}
\end{equation} 
is the set of one-hop neighbors of $x$ using Euler discretiztion and $\rho(d, k)\triangleq2d_k+l\epsilon_k(d_k+M\epsilon_k)$ is the perturbation radius inherited from \cite{cardaliaguet1999set}.
See Figure~\ref{fig:insights} for the visualization of the discretization, where the robot transits from $x$ to $x'$ after applying a constant control $u$ for a fixed time $\epsilon_k$, and we treat all sampled states (orange dots) in the circle of radius $\rho$ centered at $x'$ as the one-hop neighbors of $x$, as $x'$ is not necessarily in $V_k$.
With the Kruzhkov transform, one may rewrite the update procedure \eqref{eqn:BellmanOperator} and define a tranformed Bellman operator as:
\begin{align*}
    (\tT_k\circ\varTheta_k)(x)\triangleq
    \begin{cases}
        &\displaystyle\Delta_k+\beta_k\min_{x'\in F_k(x)}\varTheta_k(x'), \\
        &\quad\text{if }x\in V_k\setminus(\XX_{\mathrm{goal}}+(M\epsilon_k+d_k)\BB);\\
        &\varTheta_k(x), \text{ otherwise},
    \end{cases}
\end{align*}
where $\Delta_k\triangleq\Psi(\epsilon_k-d_k)$ is the transformed running cost, $\beta_k\triangleq1-\Delta_k$ is the total discount factor in the transformed form and $\tT_k$ is the transformed Bellman operator.
For each $x\in S$, \texttt{BackProp} computes $\varTheta^{m_k}(x)$ in a recursive manner.
See Figure~\ref{fig:insights} again for the visualization of \texttt{BackProp}.
Algorithm~\ref{alg:backprop} \texttt{BackProp} first checks whether the goal region $\XX_{\mathrm{goal}}$ is reached or the recursion allowance $m_k=0$ runs out as line~\ref{alg:backprop:boundary}, which are the boundary condition of Bellman equation and the limit of computations for each iteration, respectively.
If neither is satisfied, \texttt{BackProp} will check and compute $\varTheta^{m_k-1}(x')$ for each neighbor $x'\in F_k(x)$ by calling itself as line~\ref{alg:backprop:recursion}.
This will temporarily halt the exeuction of current \texttt{BackProp} (recursion allowance $m_k$) with all values saved, while processor will create a duplicate of \texttt{BackProp} with a reduced recursion allowance $m_k-1$ and continues executing the new \texttt{BackProp}
When the new \texttt{BackProp} finishes according to the boundary conditions in line~\ref{alg:backprop:boundary}, the processor will exit it and resume the execution of the old \texttt{BackProp} with recursion allowance $m_k$ with its previously saved values and the returned value at a one-hop neighbor $\varTheta^{m_k-1}(x')$ from the new \texttt{BackProp}.
With all values at one-hop neighbors being ready, \texttt{BackProp} computes $\varTheta^m(x)$ using the Bellman operator as line~\ref{alg:backprop:bellman}.
In other words, a recursion with maximum allowance $m_k$ starting from $x$ updates the value at $x$ by propogating values at neighbors of $x$ within $m_k$ hops.
This limits the search to a subset of $V_k$ to save computations.
When $m_k\geq|V_k|$, \texttt{BackProp} updates all sampled states on $V_k$ in the depth-first search manner.

\begin{figure}[t]
    \includegraphics[width=0.48\textwidth]{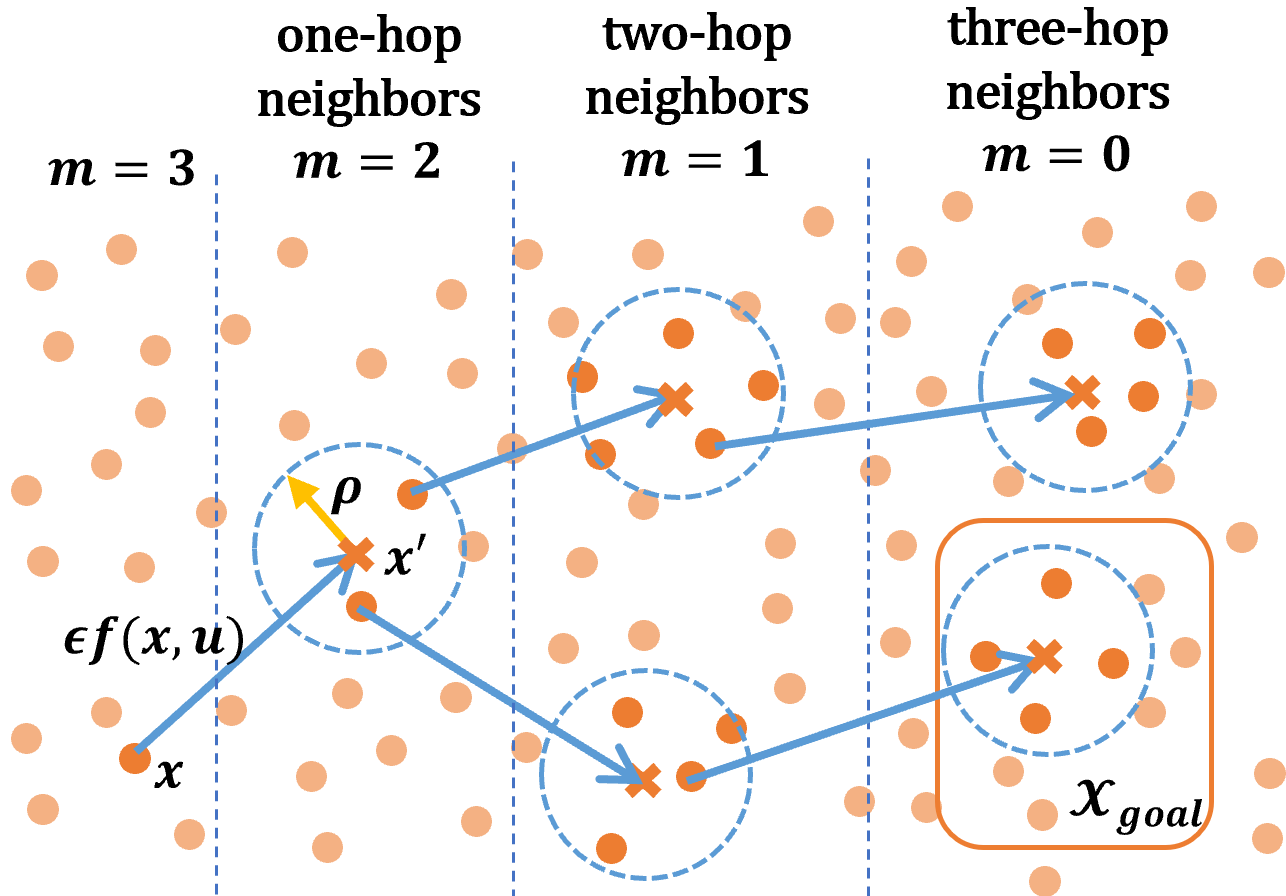}
    \caption{Illustration of the approximation of dynamic and \texttt{BackProp}. Orange dots are sampled states, orange crosses are resting states when applying constant control $u$ for time $\epsilon$ and blue arrows imply discrete time transition.}\label{fig:insights}
\end{figure}

The consistency of iPolicy follows the discretization scheme in \cite{cardaliaguet1999set}, where convergence rate conditions are imposed for the temporal and spatial resolutions.
Specifically, let $d_k$ be a conservative estimate of the spatial dispersion, $\epsilon_k$ be the associated temporal resolution and $\rho(\epsilon_k, d_k)$ be the perturbation radius.
Then $d_k$ is a distance within which every point $x\in\XX$ can find a sampled state in $V_k$ and $\epsilon_k$ would be the minimal time elapse system \eqref{eqn:systemeqn} can transit.
Since system \eqref{eqn:systemeqn} may not fall on a sample in $V_k$ if it takes a constant control for $\epsilon_k$ time units, the discrepancy of approximation of the dynamic system is accommodated by the perturbation $\rho(d_k, \epsilon_k)$ so that the one-hop neighbors of state $x$ with pertubation would surely fall on $V_k$; i.e., $(x+\epsilon_kf(x, u)+\rho(d_k, \epsilon_k)\BB)\cap V_k\neq\emptyset$.
The choice of $d_k$ and $\epsilon_k$ is not unique but is subject to the following assumption for the purpose of consistent approximation.
\begin{assumption}\label{asmp:resolutions}
    The selection of $d_k$ and $\epsilon_k$ satsifies the following relations:
    \begin{enumerate}[label=\textbf{(A\arabic*)}, leftmargin = *, align=left]
        \setcounter{enumi}{4}
        \item \label{asmp:resolutions:spatial} $d_k\geq B\Big(\frac{\log|V_k|}{|V_k|}\Big)^{1/n}, \forall k\in\natzero$, where $B>(\mu(\XX)/C_n)^{1/n}$;
        \item \label{asmp:resolutions:convergence} $\epsilon_k$ and $d_k/\epsilon_k$ monotonically decreases to $0$ as $k\to+\infty$;
        \item \label{asmp:resolutions:epsilonLarger} $\epsilon_k>d_k, \forall k\in\natzero$;
        \item \label{asmp:resolutions:rho}$\rho(d_k, \epsilon_k)\geq d_{k-1}, \forall k\in\natural$.
    \end{enumerate}
\end{assumption}
Assumption~\ref{asmp:resolutions:spatial} follows from \cite{KF11} and implies the lower bound of the spatial resolution that the union of balls centered at every sample $x\in V_k$ with radius $d_k$ covers the whole region $\XX$. 
Assumption~\ref{asmp:resolutions:convergence} characterizes the convergence rate for temporal and spatial resolutions.
Notice that the perturbation radius $\rho(d_k, \epsilon_k)$ is dimishing faster than $\epsilon_k$; this ensures the discrete time transition always falls on the graph for sufficiently small temporal-spatial resolutions.
In Assumption~\ref{asmp:resolutions:convergence}, the faster convergence of $d_k$ compared to $\epsilon_k$ indicates the discrete time transition on $V_k$ would be close to the discrete time transition on $\XX$ for sufficiently small $\epsilon_k$; and $\epsilon_k\to0$ finalizes the consistency of approximation.
Assumptions~\ref{asmp:resolutions:epsilonLarger} and \ref{asmp:resolutions:rho} are purely for the validity of computations and analysis of iPolicy.
Assumption~\ref{asmp:resolutions:epsilonLarger} trivially holds for sufficiently large $k$ by Assumption~\ref{asmp:resolutions:convergence} while assumption~\ref{asmp:resolutions:rho} holds when $d_k$ takes the lower bound in Assumption~\ref{asmp:resolutions:spatial}.

\subsection{Performance guarantee}
The consistency of the estimates requires proper selection of temporal resolution $\epsilon_k$ and sufficient value iterations ${m_k}$ executed on $V_k$.
The following assumption is adapted from Assumption IV.1 in \cite{zhao2020pareto} and it imposes the minimum required value iterations $\{m_k\}$ executed on each graph with convergence guarantee.
\begin{assumption}\label{asmp:discountWindow}
    There exists a graph index $k_0$, an upper bound of intervals $T\geq0$ and an upper bound of accumulated discount $\bar{\beta}\in(0, 1)$ such that $\forall\bar{k}\geq k_0$, the total numbers of value iterations on each graph $\{m_k\}$ satisfy
    \begin{align*}
        \prod_{t=0}^T\max_{\ell\in\{0, \dots, P\}}\beta_{\breve{k}(t, \ell)}^{m_{\breve{k}(t, \ell)}}\leq\bar{\beta}<1,
    \end{align*}
    where the graph index is denoted by $\breve{k}(t, \ell)\triangleq \bar{k}+t(P+1)+\ell$.
\end{assumption}
In particular, Assumption~\ref{asmp:discountWindow} partitions the graph indices into intervals of equal length $P+1$, and the aggregation of the worst-case discount on each interval $\breve{k}\in[\bar{k}+t(P+1), \bar{k}+t(P+1)+P]$ over $T+1$ consecutive intervals should be strictly lower than $1$.
This implies the distance between the estimated value function $\varTheta_k^{m_k}$ and the fixed point $\varTheta_k^*$ is strictly decreasing in a window of $(T+1)(P+1)$ graphs.

The main result is summarized below, where the last estimate of minimal traveling time $\varTheta^{m_k}_k$ converges to the ground truth pointwise with probability one.
\begin{theorem}\label{theorem:mainResult}
    Suppose Assumptions~\ref{asmp:system}, \ref{asmp:resolutions} and \ref{asmp:discountWindow} hold.
    Then the following holds with probability one:
    \begin{align*}
        \lim_{k\to+\infty}\min_{x'\in(x+d_k\BB)\cap V_k}\varTheta^{m_k}_k=\varTheta^*(x), \forall x\in\XX_{\mathrm{free}}\setminus\ZZ,
    \end{align*}
    where $\ZZ$ is a set of sufficiently small measure.
\end{theorem}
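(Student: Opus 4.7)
The plan is to decompose the approximation error via the triangle inequality into two components: a value iteration error measuring the distance between the computed estimate $\varTheta^{m_k}_k$ and the true fixed point $\varTheta^*_k$ of the transformed Bellman operator $\tT_k$ on $V_k$, and a discretization error measuring the distance between $\varTheta^*_k$ and the continuous-space optimum $\varTheta^*$. Writing, for any nearest-neighbor $x' \in (x+d_k\BB) \cap V_k$,
\begin{equation*}
|\varTheta^{m_k}_k(x') - \varTheta^*(x)| \;\leq\; |\varTheta^{m_k}_k(x') - \varTheta^*_k(x')| \;+\; |\varTheta^*_k(x') - \varTheta^*(x)|,
\end{equation*}
reduces the proof to showing each term tends to zero.

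For the first term, I would exploit the contraction-like structure induced by Assumption~\ref{asmp:discountWindow}. The operator $\tT_k$ is a $\beta_k$-contraction with respect to $\|\cdot\|_{V_k}$, and \texttt{BackProp} with recursion allowance $m_k$ amounts to at most $m_k$ compositions of $\tT_k$ restricted to the stale set $S_k$, while leaving fresh samples untouched. Because any sample can be skipped for at most $P$ consecutive iterations (line~\ref{alg:iFPA:initializeStaleness}, line~\ref{alg:VI:increaseStaleness}), every sample is refreshed at least once within any $P+1$ consecutive iterations. Grouping iterations into windows of length $(T+1)(P+1)$ and invoking Assumption~\ref{asmp:discountWindow}, the worst-case per-window discount is bounded by $\bar{\beta} < 1$, so $\|\varTheta^{m_k}_k - \varTheta^*_k\|_{V_k} \to 0$ geometrically along the subsequence of window endpoints, and hence along $k \to \infty$.

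For the second term, I would invoke the set-valued discretization machinery from~\cite{cardaliaguet1999set}. Under Assumption~\ref{asmp:system}, the one-hop map $F_k(x)$ in~\eqref{eq:onehops} is a consistent, monotone, and stable approximation of the reachable set of~\eqref{eqn:systemeqn} over time $\epsilon_k$: consistency follows because Assumption~\ref{asmp:resolutions:convergence} forces the perturbation radius $\rho(d_k,\epsilon_k)$ to be $o(\epsilon_k)$, while Assumptions~\ref{asmp:resolutions:spatial} and~\ref{asmp:resolutions:rho} ensure the sampled graphs almost surely cover $\XX_{\mathrm{free}}$ with dispersion $d_k$ and that the discrete transition lands back on $V_k$. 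The standard epi-convergence / viscosity-solution argument of~\cite{cardaliaguet1999set} then yields $\varTheta^*_k \to \varTheta^*$ pointwise on $\XX_{\mathrm{free}} \setminus \ZZ$, where $\ZZ$ collects the discontinuity locus of $\varTheta^*$ (which has Lebesgue measure zero under Assumption~\ref{asmp:convexity}). Combining with the nearest-neighbor estimate $\|x-x'\| \leq d_k \to 0$ and continuity of $\varTheta^*$ outside $\ZZ$ closes the second term. The probability-one qualifier enters through Assumption~\ref{asmp:resolutions:spatial}, which by the covering argument in~\cite{KF11} guarantees that the i.i.d.\ samples in line~\ref{alg:iFPA:sampling} achieve dispersion $d_k$ almost surely.

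The main obstacle will be the bookkeeping that couples the growing graph $V_k$, the asynchronous staleness mechanism, and the depth-bounded recursion of \texttt{BackProp}. In particular, one must argue that (i) freshly injected samples, initialized to $0$ or $1$ as in lines~\ref{alg:iFPA:mainloop:init:value0}--\ref{alg:iFPA:mainloop:init:value1}, do not cause the contraction estimate to blow up between windows, and (ii) the depth-$m_k$ recursion can be lower-bounded by $m_k$ applications of $\tT_k$ on a function that agrees with $\varTheta^{m_{k-1}}_{k-1}$ on $V_{k-1}$. I expect to resolve this by constructing an auxiliary synchronous value-iteration sequence on the asymptotic graph $\bigcup_k V_k$ and comparing it to $\varTheta^{m_k}_k$ via a monotonicity/sandwich argument based on the Bellman operator's order-preserving property, thereby transferring the per-window contraction to the actual asynchronous, graph-growing iterates.
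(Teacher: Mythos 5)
Your top-level decomposition is the paper's: the error is split into a value-iteration term $\|\varTheta^{m_k}_k-\varTheta^*_k\|_{V_k}$ and a discretization term comparing $\varTheta^*_k$ with $\varTheta^*$ (these are the quantities $\aA_k$ and $\bB_k$ in the proof of Theorem~\ref{theorem:contractiveOverPeriods}), and your use of the staleness bound $P$ together with Assumption~\ref{asmp:discountWindow} over windows of length $(T+1)(P+1)$ mirrors Lemma~\ref{lemma:iterationIntervalInclusion} and Corollary~\ref{corollary:errorAlignedTowardsPeriodStart}. The gaps are in how you close each term. For the first term you treat $\varTheta^*_k$ as a fixed target and claim geometric convergence, but the fixed point drifts with every new sample: the paper must carry the drift terms $c_k=\|\tilde{\varTheta}^*_{k-1}-\tilde{\varTheta}^{*,-}_k\|_{\XX\setminus\ZZ}$ (Lemma~\ref{lemma:errorAlignedOverGrid}, Corollaries~\ref{corollary:errorAlignedOverGrid} and \ref{corollary:errorAlignedTowardsPeriodStart}) and the effect of the freshly injected sample initialized to $0$ or $1$, and what results is only a contraction-with-vanishing-perturbation recursion $\aA_{\breve{k}(\tau)}\leq\bar{\beta}\,\aA_{\breve{k}(\tau-1)}+\dD_{\breve{k}(\tau)}$, closed by Lemma~VII.5 of \cite{zhao2020pareto}, not a geometric rate. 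Your proposed repair, an auxiliary synchronous value iteration on $\bigcup_k V_k$ compared by a monotonicity sandwich, does not obviously work here: the Bellman operator itself changes with $k$ (through $\epsilon_k$, $d_k$, $\rho(d_k,\epsilon_k)$ and hence $F_k$), and order-preservation is disturbed by new nodes initialized to $1$; the paper instead bounds the new node's effect by the explicit two-case argument inside Lemma~\ref{lemma:errorAlignedOverGrid} and shows $c_k\to0$ via Corollary~\ref{corollary:uniformConvergence}.

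For the second term there is a genuine gap in your argument: the point $x'$ at which $\varTheta^*_k$ is evaluated moves with $k$, so pointwise convergence of $\varTheta^*_k$ at a fixed $x$ together with continuity of $\varTheta^*$ does not suffice; moreover $\varTheta^*$ is in general discontinuous for minimal-time problems with obstacles, and nothing in Assumption~\ref{asmp:convexity} makes its discontinuity locus Lebesgue-null, so your identification of $\ZZ$ with that locus is unsupported. The paper handles this differently: it proves a perturbed pointwise-convergence statement directly, i.e.\ $\min_{x'\in(x+\eta_k\BB)\cap V_k}\varTheta^*_k(x')\to\varTheta^*(x)$ (Theorem~\ref{theorem:pointwiseConvergence}, Lemma~\ref{lemma:transformedPointwiseConvergence}), and then upgrades it to almost-uniform convergence by Egoroff's theorem (Corollary~\ref{corollary:uniformConvergence}); the exceptional set $\ZZ$ in the statement is the Egoroff set, and this uniformity is also what makes the drift terms $c_k$ and the term $\bB_k$ vanish in the first part, so the two halves of the proof cannot be decoupled in the purely pointwise way you propose.
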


\section{Analysis}\label{sec:Analysis}

In this section, we analyze the performance of iPolicy.
Towards the proof of Theorem~\ref{theorem:mainResult}, we proceed with the following result with a supplementary definition.
\begin{theorem}\label{theorem:contractiveOverPeriods}
    Suppose Assumptions~\ref{asmp:system}, \ref{asmp:resolutions} and \ref{asmp:discountWindow} hold and let $\varTheta_k^*(x)=\varTheta_k^m(x)=1, \forall x\in\XX\setminus(\XX_{\mathrm{free}}+d_k\BB), k\geq1$ and $0\leq m\leq m_k$.
    Then it holds with probability one:
    \begin{align*}
        \lim_{k\to+\infty}\min_{x'\in(x+d_k\BB)\cap V_k}\varTheta^{m_k}_k=\varTheta^*(x), \forall x\in\XX\setminus\ZZ,
    \end{align*}
    where $\ZZ$ is a set of sufficiently small measure.
\end{theorem}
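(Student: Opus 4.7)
The plan is to decompose the pointwise error into a \emph{discretization} term and a \emph{computation} term, then bound each separately. For any $x \in \XX \setminus \ZZ$ and any $x' \in (x + d_k\BB) \cap V_k$, the triangle inequality gives
\[
|\varTheta_k^{m_k}(x') - \varTheta^*(x)| \leq |\varTheta_k^{m_k}(x') - \varTheta_k^*(x')| + |\varTheta_k^*(x') - \varTheta^*(x)|,
\]
where $\varTheta_k^*$ denotes the unique fixed point of the transformed Bellman operator $\tT_k$ on $V_k$. Taking the minimum over $x'$ and letting $k \to \infty$, it suffices to show that both terms vanish with probability one.

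For the second \emph{discretization} term, I would invoke the set-valued consistency results of~\cite{cardaliaguet1999set}, which are already tailored to the Euler discretization with perturbation radius $\rho(d_k,\epsilon_k)$ used in~\eqref{eq:onehops}. Under Assumption~\ref{asmp:resolutions}, $\epsilon_k \to 0$, $d_k/\epsilon_k \to 0$, and $\rho(d_k,\epsilon_k)\to0$ faster than $\epsilon_k$, rendering the discrete-time approximation consistent with~\eqref{eqn:systemeqn}. Combined with Assumption~\ref{asmp:resolutions:spatial} (which, following~\cite{KF11}, guarantees that $\bigcup_{v \in V_k}(v + d_k\BB) \supseteq \XX$ with probability one for all sufficiently large $k$), the nearest-sample value $\min_{x'\in(x+d_k\BB)\cap V_k}\varTheta_k^*(x')$ converges to $\varTheta^*(x)$ on the continuity set of $\varTheta^*$. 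The exceptional set $\ZZ$ absorbs the discontinuities of $\varTheta^*$, which have sufficiently small measure under standard small-time controllability/regularity conditions.

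For the \emph{computation} term, the leverage is Assumption~\ref{asmp:discountWindow}: over any window of $(T+1)(P+1)$ consecutive graphs, the accumulated contraction is at most $\bar{\beta}<1$, despite the staleness threshold $P$ and the bounded recursion depth $m_k$. Letting $e_k \triangleq \|\varTheta_k^{m_k} - \varTheta_k^*\|_{V_k}$, I would establish a recursion of the form
\[
e_{k+(T+1)(P+1)} \leq \bar{\beta}\,e_k + r_k,
\]
where $r_k$ is a drift term capturing (i) the displacement of the fixed point as new samples arrive, (ii) the re-initialization of newly sampled states on lines~\ref{alg:iFPA:mainloop:init:value0}--\ref{alg:iFPA:mainloop:init:value1} of Algorithm~\ref{algorithm:iFPA}, and (iii) the truncation of \texttt{BackProp} at $m_k$ hops. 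A telescoping/Gr\"onwall-style argument then yields $e_k \to 0$ as long as $r_k \to 0$.

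The main obstacle will be controlling $r_k$, since it couples the \emph{moving target} $\varTheta_k^*$ to the asynchronous updates: within each window of $P+1$ iterations some samples may remain stale while $\tT_k$ is itself being perturbed by arriving samples and by the shrinking parameters $(\Delta_k,\beta_k,F_k)$. I plan to bound the fixed-point drift by a perturbation argument comparing $\tT_k$ and $\tT_{k-1}$ on the common domain $V_{k-1} \subseteq V_k$, converting operator discrepancy into fixed-point discrepancy via the contraction modulus $\beta_k$. The staleness contribution is absorbed by the window length $P+1$ in Assumption~\ref{asmp:discountWindow}, which forces every sample to be refreshed at least once per window, and the $m_k$-hop truncation is rendered harmless in the limit because the discretized value function becomes uniformly well-approximated by finite-horizon tails as $\beta_k \to 1^-$ with $\epsilon_k \to 0$. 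Combining the two convergent bounds and invoking a Borel-Cantelli argument for the almost-sure coverage property completes the proof off the exceptional set $\ZZ$.
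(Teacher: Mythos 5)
Your overall architecture matches the paper's: the same split of the error into a computation term $\|\varTheta_k^{m_k}-\varTheta_k^*\|_{V_k}$ and a discretization term comparing $\varTheta_k^*$ to $\varTheta^*$, the same use of the set-valued consistency results plus the sampling/dispersion bound for the latter, and the same windowed recursion $e_{k+(T+1)(P+1)}\leq\bar\beta e_k+r_k$ driven by Assumption~\ref{asmp:discountWindow} for the former (the paper does this via Lemma~\ref{lemma:errorAlignedOverGrid}, Corollaries~\ref{corollary:errorAlignedOverGrid}--\ref{corollary:errorAlignedTowardsPeriodStart}, the per-window refresh guarantee of Lemma~\ref{lemma:iterationIntervalInclusion}, and the recursion lemma of \cite{zhao2020pareto}). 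The genuine gap is in how you propose to show $r_k\to0$. You plan to bound the fixed-point drift by comparing the operators $\tT_k$ and $\tT_{k-1}$ on $V_{k-1}$ and ``converting operator discrepancy into fixed-point discrepancy via the contraction modulus $\beta_k$.'' The standard conversion introduces a factor $1/(1-\beta_k)=1/\Delta_k$, and here $\Delta_k=\Psi(\epsilon_k-d_k)\to0$, so the contraction margin vanishes exactly as the graphs refine; this step does not obviously yield a vanishing drift, and you give no argument that the operator discrepancy shrinks fast enough to beat the exploding factor. The paper avoids this entirely: it defines the drift as $c_k=\|\tilde{\varTheta}^*_{k-1}-\tilde{\varTheta}^{*,-}_k\|_{\XX\setminus\ZZ}$ and bounds it by the triangle inequality through the \emph{continuous} value function, $c_k\leq\|\tilde{\varTheta}^*_{k-1}-\varTheta^*\|_{\XX\setminus\ZZ}+\|\varTheta^*-\tilde{\varTheta}^{*,-}_k\|_{\XX\setminus\ZZ}$, both of which vanish by the almost-uniform (Egoroff) convergence of Corollary~\ref{corollary:uniformConvergence}; this is also where the exceptional set $\ZZ$ actually comes from (Egoroff), not from discontinuities of $\varTheta^*$ as you suggest.

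A secondary issue: your claim that the $m_k$-hop truncation of \texttt{BackProp} is ``rendered harmless in the limit because the discretized value function becomes uniformly well-approximated by finite-horizon tails as $\beta_k\to1^-$'' is backwards --- as $\beta_k\to1^-$ a fixed finite horizon approximates the fixed point \emph{worse}, which is precisely why Assumption~\ref{asmp:discountWindow} must force the products $\beta_k^{m_k}$ over a window below $\bar\beta<1$ (i.e., $m_k$ must grow suitably); the truncation is handled by that assumption, not by a limit argument. Likewise, the guarantee that every sample is refreshed at least once per window of length $P+1$ comes from the staleness mechanism of the algorithm (Lemma~\ref{lemma:iterationIntervalInclusion}), not from Assumption~\ref{asmp:discountWindow} itself. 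With the drift term re-routed through $\varTheta^*$ as in the paper, and these two points corrected, your outline would close.
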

\begin{remark}
Theorem~\ref{theorem:contractiveOverPeriods} extends Theorem~\ref{theorem:mainResult} to the whole state space $\XX$ by manually setting $\varTheta_k^{m_k}(x)$ as $1$ for every $x\in\XX_{\mathrm{obs}}$, indicating there is no feasible path connecting the goal set $\XX_{\mathrm{goal}}$. 
This practice consolidates the obstacle regions $\XX_{\mathrm{obs}}$ where $\varTheta^m_k$ is only defined with the free regions $\XX_{\mathrm{free}}$ and simplifies analysis by reducing the total number of cases considered in the analysis.
\end{remark}
The following lemma shows the supplementary definition in Theorem~\ref{theorem:contractiveOverPeriods} is not involved in any part in the execution of the proposed algorithms.
\begin{lemma}\label{lemma:supplementaryDefinition}
    For any $k\geq0$ and $x\in\XX\setminus(\XX_{\mathrm{free}}+d_k\BB)$, it holds that $x\notin S_\kappa$ and $x\notin F_\kappa(x')$, $\forall\kappa\geq k$ and $x'\in V_k\setminus\{x\}$. 
\end{lemma}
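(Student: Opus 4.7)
The plan is to unfold the definitions of $S_\kappa$ and $F_\kappa(x')$ and exploit monotonicity of the spatial resolution $d_k$ in $k$. The first step is to observe that $d_k$ is itself monotonically decreasing. This is not stated explicitly in Assumption~\ref{asmp:resolutions}, but follows from Assumption~\ref{asmp:resolutions:convergence}: writing $d_k = \epsilon_k \cdot (d_k/\epsilon_k)$ expresses $d_k$ as the product of two positive monotonically decreasing sequences, so the product is also monotonically decreasing. Consequently, $d_\kappa \leq d_k$ for every $\kappa \geq k$, which gives the inclusion $\XX_{\mathrm{free}} + d_\kappa \BB \subseteq \XX_{\mathrm{free}} + d_k \BB$. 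Since $x \in \XX \setminus (\XX_{\mathrm{free}} + d_k\BB)$, this inclusion yields $x \notin \XX_{\mathrm{free}} + d_\kappa\BB$ for every $\kappa \geq k$.

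The two claimed non-memberships then follow by inspection of the relevant definitions. For the first, the stale set $S_\kappa$ constructed on line~\ref{alg:VI:getStalenessSet} of Algorithm~\ref{alg:VI} is defined as $\{x \in V_\kappa \mid \hat{\FF}_\kappa(x)=P,\ x \in \XX_{\mathrm{free}} + d_\kappa\BB\}$, which is a subset of $\XX_{\mathrm{free}} + d_\kappa\BB$, so $x \notin S_\kappa$ is immediate. For the second, the one-hop neighbor set in \eqref{eq:onehops} is an intersection that explicitly contains $\XX_{\mathrm{free}} + d_\kappa\BB$ as one of its factors; hence $F_\kappa(x') \subseteq \XX_{\mathrm{free}} + d_\kappa\BB$, and therefore $x \notin F_\kappa(x')$ for every admissible $x'$.

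I do not anticipate any substantive obstacle in carrying out this argument. The only step that requires a moment's thought is the implicit monotonicity of $d_k$; everything else reduces to reading off the definitions of $S_\kappa$ in Algorithm~\ref{alg:VI} and $F_\kappa(x')$ in \eqref{eq:onehops} and noting that both sets are constrained to lie inside $\XX_{\mathrm{free}} + d_\kappa\BB$. This lemma is really a bookkeeping statement confirming that the supplementary definition introduced in Theorem~\ref{theorem:contractiveOverPeriods}---extending $\varTheta_k^{m_k}(x)$ to $1$ on $\XX \setminus (\XX_{\mathrm{free}} + d_k\BB)$---is inert, because no state outside $\XX_{\mathrm{free}} + d_k\BB$ ever enters the stale set that triggers \texttt{BackProp} or appears as a neighbor whose value is consulted via the Bellman update on line~\ref{alg:backprop:bellman} of Algorithm~\ref{alg:backprop}.
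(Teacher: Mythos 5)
Your proposal is correct and follows essentially the same route as the paper's own proof: establish that $d_k$ is monotonically decreasing (hence $x\notin\XX_{\mathrm{free}}+d_\kappa\BB$ for all $\kappa\geq k$), then read off the definitions of $S_\kappa$ in line~\ref{alg:VI:getStalenessSet} of Algorithm~\ref{alg:VI} and of $F_\kappa$ in \eqref{eq:onehops}, both of which are contained in $\XX_{\mathrm{free}}+d_\kappa\BB$. Your only addition is spelling out why $d_k$ decreases (as the product $\epsilon_k\cdot(d_k/\epsilon_k)$ of two positive decreasing sequences from Assumption~\ref{asmp:resolutions:convergence}), a detail the paper simply asserts.
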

\begin{proof}
It follows from Assumption~\ref{asmp:resolutions} that $d_k$ is monotonically decreasing.
Then for every $x\in\XX\setminus(\XX_{\mathrm{free}}+d_k\BB)$, $x\notin\XX_{\mathrm{free}}+d_\kappa\BB, \forall\kappa\geq k$.
This implies $x$ will never be included in $S_\kappa, \forall \kappa\geq k$ per line~\ref{alg:VI:getStalenessSet} in Algorithm~\ref{alg:VI}.
It follows from the definition of $F_k(x)$ in \eqref{eq:onehops} that $x\notin F_\kappa(x'), \forall x'\in V_\kappa$ and $\kappa\geq k$.
This completes the proof.
\end{proof}
Lemma~\ref{lemma:supplementaryDefinition} shows that the supplementary definition of $\varTheta_k^m$ and $\varTheta_k^*$ over $\XX\setminus(\XX_{\mathrm{free}}+d_k\BB)$ will not affect computations of $\varTheta_k^m$ and $\varTheta_k^*$ over the free region, and Theorem~\ref{theorem:contractiveOverPeriods} is identical to Theorem~\ref{theorem:mainResult} over $\XX_{\mathrm{free}}+d_k\BB$.
Throughout the analysis in this section, we assume all conditions in Theorem~\ref{theorem:contractiveOverPeriods} hold.

\subsection{Preliminaries}

In this subsection, we introduce a couple of preliminary results that facilicate the proof of our main theorem.
The following lemma is borrowed from Lemma VII.4 in \cite{zhao2020pareto} and is listed below for the completeness of the paper.
\begin{lemma}[Lemma VII.4 in \cite{zhao2020pareto}]\label{lemma:perturbedNormInequality}
    Given $\XX_1, \XX_2\subseteq\XX$, consider $V_i:\XX\to [0, 1]$ and $\eta_i>0$ s.t. $(x+\eta_i\BB)\cap\XX_i\neq\emptyset, \forall x\in\XX$, where $i\in\{1, 2\}$.
    Then for any set-valued map $Y:\XX\rightrightarrows\XX$ s.t. $Y(x)\neq\emptyset, \forall x\in\XX$, we have
    \begin{equation}
        \begin{aligned}\label{eq:perturbed2perturbed}
            &\|\min_{\tilde{x}\in(Y(x)+\eta_1\BB)\cap\XX_1}V_1(\tilde{x})-\min_{\tilde{x}\in(Y(x)+\eta_2\BB)\cap\XX_2}V_2(\tilde{x})\|_\XX\\
            \leq&\|\min_{\tilde{x}\in(x+\eta_1\BB)\cap\XX_1}V_1(\tilde{x})-\min_{\tilde{x}\in(x+\eta_2\BB)\cap\XX_2}V_2(\tilde{x})\|_\XX
        \end{aligned}
    \end{equation}
    If $\XX_1=\XX_2\triangleq\bar{\XX}$ and $\eta_1=\eta_2\triangleq\eta$, then
    \begin{align}\label{eq:perturbed2exact}
        \|\min_{\tilde{x}\in(x+\eta\BB)\cap\bar{\XX}}V_1(\tilde{x})-\min_{\tilde{x}\in(x+\eta\BB)\cap\bar{\XX}}V_2(\tilde{x})\|_{\XX}\leq\|V_1-V_2\|_{\bar{\XX}}.
    \end{align}
\end{lemma}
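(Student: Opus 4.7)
The plan is to reduce both inequalities to a single elementary fact: for any pair of real-valued functions $f_1,f_2$ on a common nonempty set $Z$ on which their minima (or infima) are attained,
\begin{equation*}
\bigl|\min_{z\in Z}f_1(z)-\min_{z\in Z}f_2(z)\bigr|\le\sup_{z\in Z}|f_1(z)-f_2(z)|.
\end{equation*}
I would prove this first. Without loss of generality assume the left-hand minimum is the larger one, let $z^\star\in\argmin_{z\in Z}f_2(z)$, and observe
\begin{equation*}
\min_{z\in Z}f_1(z)-\min_{z\in Z}f_2(z)\le f_1(z^\star)-f_2(z^\star)\le\sup_{z\in Z}|f_1-f_2|.
\end{equation*}
If infima are not attained, the same argument goes through with an $\varepsilon$-minimizer and letting $\varepsilon\downarrow0$; the hypothesis $(x+\eta_i\BB)\cap\XX_i\neq\emptyset$ and $Y(x)\neq\emptyset$ guarantees every infimum in the lemma is over a nonempty set.

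Next, for inequality \eqref{eq:perturbed2perturbed}, I would introduce the auxiliary functions
\begin{equation*}
g_i(z)\triangleq\min_{\tilde{x}\in(z+\eta_i\BB)\cap\XX_i}V_i(\tilde{x}),\qquad i\in\{1,2\},\,z\in\XX,
\end{equation*}
and note the key identity
\begin{equation*}
\min_{\tilde{x}\in(Y(x)+\eta_i\BB)\cap\XX_i}V_i(\tilde{x})=\min_{y\in Y(x)}g_i(y),
\end{equation*}
obtained by swapping the two nested minimizations, since $(Y(x)+\eta_i\BB)\cap\XX_i=\bigcup_{y\in Y(x)}\bigl((y+\eta_i\BB)\cap\XX_i\bigr)$. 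With this rewriting, the elementary fact applied to $f_i=g_i$ on $Z=Y(x)$ gives, for every $x\in\XX$,
\begin{equation*}
\bigl|\min_{y\in Y(x)}g_1(y)-\min_{y\in Y(x)}g_2(y)\bigr|\le\sup_{y\in Y(x)}|g_1(y)-g_2(y)|\le\sup_{y\in\XX}|g_1(y)-g_2(y)|,
\end{equation*}
where the last bound uses $Y(x)\subseteq\XX$. Taking the supremum over $x$ yields precisely \eqref{eq:perturbed2perturbed}, since the right-hand side is $\|g_1-g_2\|_\XX$.

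Finally, for \eqref{eq:perturbed2exact}, specialize to $\XX_1=\XX_2=\bar{\XX}$ and $\eta_1=\eta_2=\eta$. Now the two inner minimizations are over the \emph{same} set $Z=(x+\eta\BB)\cap\bar{\XX}$, so the elementary fact applies directly to $f_i=V_i|_Z$, giving
\begin{equation*}
\bigl|\min_{\tilde{x}\in Z}V_1(\tilde{x})-\min_{\tilde{x}\in Z}V_2(\tilde{x})\bigr|\le\sup_{\tilde{x}\in Z}|V_1(\tilde{x})-V_2(\tilde{x})|\le\|V_1-V_2\|_{\bar{\XX}},
\end{equation*}
and taking the supremum over $x\in\XX$ completes the proof. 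I do not expect any serious obstacle: the only subtlety is that the bound in \eqref{eq:perturbed2perturbed} is \emph{not} $\|V_1-V_2\|_{\XX_1\cup\XX_2}$ but rather the supremum of the perturbed minima, which is why the substitution $g_i$ and the swap of the nested minima is the essential step; once that rewriting is made the rest is just the standard stability of the $\min$ operator under uniform perturbations.
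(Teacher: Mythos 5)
Your proof is correct. Note that the paper itself does not prove this lemma at all: it is imported verbatim from Lemma VII.4 of the cited reference \cite{zhao2020pareto} and stated only ``for completeness,'' so there is no in-paper argument to compare against; your write-up supplies a valid self-contained proof. The two ingredients you use are exactly the natural ones: the elementary stability bound $|\min_Z f_1-\min_Z f_2|\le\sup_Z|f_1-f_2|$ (with the $\varepsilon$-minimizer remark correctly covering the case where the infima are not attained, since no compactness or continuity is assumed on $\XX_i$ or $V_i$), and the identity $(Y(x)+\eta_i\BB)\cap\XX_i=\bigcup_{y\in Y(x)}\bigl((y+\eta_i\BB)\cap\XX_i\bigr)$, which lets you swap the nested minimizations and recognize the right-hand side of \eqref{eq:perturbed2perturbed} as $\|g_1-g_2\|_\XX$; the nonemptiness hypotheses $(y+\eta_i\BB)\cap\XX_i\neq\emptyset$ and $Y(x)\neq\emptyset$ are used exactly where needed to make every infimum well defined. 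The specialization giving \eqref{eq:perturbed2exact} is likewise correct, using $Z=(x+\eta\BB)\cap\bar{\XX}\subseteq\bar{\XX}$.
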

The following lemma shows that the maximum over a finite horizon of a convergent sequence is also convergent.
\begin{lemma}\label{lemma:maxConvergence}
    Given a sequence $\{a_k\}$ s.t. $\lim_{k\to+\infty}a_k=0$, consider a new sequence $\{b_k\}$ s.t. $b_k=\max_{\ell\in\II}a_{k+\ell}$, where $\II\subseteq\natzero$ is finite set of integers.
    Then $\lim_{k\to+\infty}b_k=0$.
\end{lemma}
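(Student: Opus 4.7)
The plan is to argue directly from the $\varepsilon$--$N$ definition of the limit, exploiting the finiteness of $\II$ so that the shifted sequences $\{a_{k+\ell}\}_{\ell\in\II}$ can be controlled uniformly in $\ell$. Since $\II\subseteq\natzero$ is finite, it has a maximum element, and more importantly every $\ell\in\II$ is nonnegative. First I would fix an arbitrary $\varepsilon>0$ and use the hypothesis $\lim_{k\to+\infty}a_k=0$ to select $N_\varepsilon\in\natzero$ such that $|a_k|<\varepsilon$ for all $k\geq N_\varepsilon$. Then for any $k\geq N_\varepsilon$ and any $\ell\in\II$, the inequality $k+\ell\geq k\geq N_\varepsilon$ immediately yields $|a_{k+\ell}|<\varepsilon$.

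The second step is to lift this uniform bound through the max. Since $\II$ is finite, $b_k=\max_{\ell\in\II}a_{k+\ell}$ is attained, and I would use the elementary estimate $|\max_{\ell\in\II}a_{k+\ell}|\leq\max_{\ell\in\II}|a_{k+\ell}|<\varepsilon$ to conclude $|b_k|<\varepsilon$ for all $k\geq N_\varepsilon$. As $\varepsilon>0$ was arbitrary, this gives $\lim_{k\to+\infty}b_k=0$.

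There is no real obstacle here: the only ingredients are the definition of a null limit, the fact that $\ell\geq0$ so that shifting forward preserves the tail property, and the finiteness of $\II$ so that the max behaves well under uniform bounds. I would write the proof as essentially two short lines, making sure to explicitly note that nonnegativity of the shifts $\ell$ is what ensures $k+\ell\geq N_\varepsilon$ whenever $k\geq N_\varepsilon$, since this is the only place where the assumption $\II\subseteq\natzero$ (as opposed to $\II\subseteq\integer$) is used.
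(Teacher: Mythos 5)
Your proposal is correct and follows essentially the same route as the paper's proof: fix $\varepsilon>0$, pick the tail index from $\lim_{k\to+\infty}a_k=0$, use $\ell\geq 0$ to keep $k+\ell$ in the tail, and bound $|b_k|\leq\max_{\ell\in\II}|a_{k+\ell}|\leq\varepsilon$. Your explicit remark that nonnegativity of the shifts is what makes the tail argument work is a small clarification the paper leaves implicit, but the argument is the same.
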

\begin{proof}
    Since $\lim_{k\to+\infty}a_k=0$, then $\forall\epsilon>0$, $\exists K>0$ s.t. $\forall k\geq K$, $|a_k|\leq\epsilon$.
    Then it holds that $|b_k|=|\max_{\ell\in\II}a_{k+\ell}|\leq\max_{\ell\in\II}|a_{k+\ell}|\leq\epsilon$.
    This implies $\lim_{k\to+\infty}b_k=0$.
\end{proof}

The following lemma from Lemma~7.2 in \cite{MZKF13} characterizes the upper bound of the spatial resolution $d_k$.
\begin{lemma}\label{lemma:dispersion}
    Consider an estimate of spatial dispersion $d\geq B(\frac{\log|V|}{|V|})^{1/n}$.
    Then it holds that 
    \begin{align*}
        \limsup_{|V|\to+\infty}\mathbb{P}[\exists x\in\XX\text{ s.t. }(x+d\BB)\cap V=\emptyset]=0.
    \end{align*}
\end{lemma}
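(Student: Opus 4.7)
The plan is to reduce the event $\{\exists x\in\XX: (x+d\BB)\cap V=\emptyset\}$ to a finite union of ``empty-ball'' events via a covering of $\XX$, and then bound each event's probability by a Chernoff-type estimate. This is the standard covering strategy used in asymptotic-optimality analyses of RRT$^\star$-type planners.

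First I would construct a $d/2$-cover $\{x_i\}_{i=1}^{N}$ of $\XX$, that is, a set such that every $x\in\XX$ lies within $d/2$ of some $x_i$. A standard volume-packing argument yields $N\leq C\,\mu(\XX)/(C_n(d/2)^n)$ for an absolute constant $C$. The key geometric observation is a triangle-inequality chain: if every ball $x_i+(d/2)\BB$ contains at least one sample $v\in V$, then for any $x\in\XX$ we can pick the nearest cover point $x_i$ and the sample $v$ in its ball to obtain $\|x-v\|\leq d/2+d/2=d$. Hence the bad event is contained in $\bigcup_i\{(x_i+(d/2)\BB)\cap V=\emptyset\}$, and a union bound gives
\begin{align*}
\mathbb{P}\bigl[\exists x\in\XX: (x+d\BB)\cap V=\emptyset\bigr] \leq N\cdot\max_i\mathbb{P}\bigl[(x_i+(d/2)\BB)\cap V=\emptyset\bigr].
\end{align*}

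Second, assuming the samples in $V$ are i.i.d.\ uniform on $\XX$, each covering ball intersects $\XX$ in a set of Lebesgue measure at least a constant fraction of $C_n(d/2)^n$, so the single-ball empty-probability is at most $\exp(-c\,d^n|V|/\mu(\XX))$ for some $c>0$. Substituting $d\geq B(\log|V|/|V|)^{1/n}$ yields $d^n|V|\geq B^n\log|V|$, which makes this bound $|V|^{-cB^n/\mu(\XX)}$. Multiplying by $N=O(|V|/\log|V|)$ gives an overall bound of order $|V|^{1-cB^n/\mu(\XX)}/\log|V|$, which tends to $0$ as $|V|\to\infty$ precisely when $B$ exceeds the threshold $(\mu(\XX)/C_n)^{1/n}$ imposed in Assumption~\ref{asmp:resolutions:spatial}. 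Taking $\limsup$ then yields the claim.

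The main obstacle is threading the geometric constants so that exactly $B>(\mu(\XX)/C_n)^{1/n}$ suffices rather than a strictly larger threshold such as $2(\mu(\XX)/C_n)^{1/n}$; a naive use of radius-$d/2$ covering balls wastes a factor of $2^n$ in the volume estimate. Removing this slack requires either a more efficient Besicovitch-type covering, a grid aligned to $\XX$, or a careful interior/boundary decomposition so that the effective covering radius is $d$ rather than $d/2$. Since this precise optimization has been carried out in Lemma~7.2 of \cite{MZKF13}, I would invoke that result for the tight constant and devote the write-up to the covering-and-Chernoff structure sketched above.
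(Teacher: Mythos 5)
The paper itself offers no proof of this lemma—it is imported verbatim as Lemma~7.2 of \cite{MZKF13}—and since your argument also ends by invoking that same result for the sharp constant, your treatment is essentially the paper's. Your covering-and-Chernoff sketch is a correct account of the standard argument behind the cited lemma, and you rightly flag the two points where the naive version falls short of the stated threshold $B>(\mu(\XX)/C_n)^{1/n}$: the $2^n$ slack from radius-$d/2$ covering balls and the boundary-volume fraction (which, in the algorithm, is handled by sampling from the inflated set $\XX_{\mathrm{free}}+d_k\BB$).
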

\begin{remark}
    Lemma~\ref{lemma:dispersion} indicates that for any $x\in\XX$, there always is at lease one node in $V_k$ within the spatial dispersion $d_k$ of $x$ with probability one in an asymptotic sense.
    Via the spatial dispersion, $V_k$ can be treated as a grid in \cite{cardaliaguet1999set} and we can leverage all set-valued tools for analysis.
\end{remark}

The following lemma shows the whole graph will be periodically updated within an interval of length $P$.
\begin{lemma}\label{lemma:iterationIntervalInclusion}
    It holds that $V_{k+P}\subseteq\bigcup_{\ell=k}^{k+P}S_\ell$.
\end{lemma}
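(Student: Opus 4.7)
The plan is to fix an arbitrary $x \in V_{k+P}$ and exhibit an index $\ell \in \{k, \ldots, k+P\}$ with $x \in S_\ell$, splitting by the iteration $j$ at which $x$ was first sampled (so that $x \in V_j \setminus V_{j-1}$). In the easy case $j \in \{k, \ldots, k+P\}$, the conclusion is read directly off Algorithm~\ref{algorithm:iFPA}: line~\ref{alg:iFPA:flagged} initializes $\hat{\FF}_j(x) = P$, and the sampling on line~\ref{alg:iFPA:sampling} draws $x$ from $\XX_{\mathrm{free}} + d_j\BB$, so both defining clauses of $S_j$ on line~\ref{alg:VI:getStalenessSet} of Algorithm~\ref{alg:VI} are satisfied and $x \in S_j$ immediately.

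In the harder case $j < k$, I would track the staleness trajectory of $x$ across iterations. The governing invariant is $\FF_\kappa(x) \in \{0, 1, \ldots, P\}$ for every $\kappa \geq j$: the staleness is initialized to $P$, reset to $0$ upon inclusion in $S$ by line~\ref{alg:VI:clearStaleness}, and then incremented by $1$ per iteration by line~\ref{alg:VI:increaseStaleness} until it again attains $P$, which triggers another inclusion in $S$ and restarts the cycle. Setting $i \triangleq \FF_{k-1}(x) \in \{0, \ldots, P\}$, either $x \in S_\ell$ already for some $\ell \in \{k, \ldots, k+P-i-1\}$, or else unrolling line~\ref{alg:VI:increaseStaleness} across those iterations gives $\hat{\FF}_{k+P-i}(x) = P$, whence $x \in S_{k+P-i}$ with $k \leq k+P-i \leq k+P$. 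Either subcase produces the desired $\ell$.

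The main subtlety I anticipate is the free region condition $x \in \XX_{\mathrm{free}} + d_\ell\BB$ that appears in the definition of $S_\ell$. Because $d_\ell$ monotonically decreases under Assumption~\ref{asmp:resolutions:convergence}, an $x$ that once lay in $\XX_{\mathrm{free}} + d_j\BB$ may fail to lie in $\XX_{\mathrm{free}} + d_\ell\BB$ at a later $\ell$, which would break the staleness cycle above. I would resolve this by invoking Lemma~\ref{lemma:supplementaryDefinition}: any such $x$ is permanently excluded from every subsequent $S_\kappa$ and from every neighbor set $F_\kappa$, so it is inert to the computation and implicitly outside the effective scope of the lemma's claim, leaving the argument above to handle all computationally relevant samples of $V_{k+P}$.
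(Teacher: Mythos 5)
Your proof follows essentially the same route as the paper's: a case split on the iteration at which $x$ was first sampled, with newly added samples entering $S$ immediately (staleness initialized to $P$) and older samples caught by counting the staleness up from $\FF_{k-1}(x)$ until it reaches $P$ within at most $P$ iterations. Your additional handling of the free-region clause $x\in\XX_{\mathrm{free}}+d_\ell\BB$ in the definition of $S_\ell$ is in fact more careful than the paper's own proof, which silently omits that condition; treating such samples as inert and outside the lemma's effective scope via Lemma~\ref{lemma:supplementaryDefinition} is consistent with how the paper itself uses the result.
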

\begin{proof}
    Fix $x\in V^{k+P}$ and three cases arise:
    \begin{itemize}
        \item Case 1: $x\in S_k$. Then the lemma trivially holds;
        \item Case 2: $x\in V_k\setminus S_k$. 
        This implies $\hat{\FF}_k(x)<P$. 
        Then the staleness of $x$ increases to $P$ at iteration $P-\hat{\FF}_k(x)+k$.
        With that being said, $x\in S_{P-\hat{\FF}_k(x)+k}\subseteq\bigcup_{\ell=k}^{k+P}S_\ell$ and this completes the proof in Case 2;
        \item Case 3: $x\in V_{k+P}\setminus V_k$. 
        Then $x$ is a newly added node at some iteration $k'\in[k+1, k+P]$ and $x\in S_{k'}\bigcup_{\ell=k}^{k+P}S_\ell$. 
        This completes the proof in Case 3.
    \end{itemize}
    Since the above holds $\forall x\in V^{k+P}$, proof is completed.
\end{proof}

The following lemma characterizes the values of nodes within goal regions.
\begin{lemma}\label{lemma:goalNodes}
    Given $k\geq1$ and any $x\in\XX+(M\epsilon_k+d_k)\BB$, it holds that $\varTheta_k^\ell(x)=\varTheta_k^*(x)=0$.
\end{lemma}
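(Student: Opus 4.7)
My plan is to exploit the algorithmic fact that a node lying in the enlarged goal region $\XX_{\mathrm{goal}}+(M\epsilon_k+d_k)\BB$ (I read the $\XX$ in the statement as a typo for $\XX_{\mathrm{goal}}$) has its value pinned to $0$ at the moment it enters the sample set and is never touched afterward. The argument will be essentially by inspection of Algorithms \ref{algorithm:iFPA}--\ref{alg:backprop}, combined with a monotonicity observation about the enlargement radius.

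First I will record the key monotonicity fact: both $\epsilon_k$ and $d_k$ are nonincreasing in $k$. Direct monotonicity of $\epsilon_k$ is Assumption~\ref{asmp:resolutions:convergence}, and for $d_k$ I write $d_k=(d_k/\epsilon_k)\cdot\epsilon_k$, the product of two positive monotonically decreasing sequences (again Assumption~\ref{asmp:resolutions:convergence}). Consequently, the enlarged goal region $\XX_{\mathrm{goal}}+(M\epsilon_k+d_k)\BB$ is nested decreasingly in $k$. Hence any $x\in\XX_{\mathrm{goal}}+(M\epsilon_k+d_k)\BB$ also belongs to $\XX_{\mathrm{goal}}+(M\epsilon_\kappa+d_\kappa)\BB$ for every $\kappa\le k$.

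Next I will fix such an $x$ and split on how it enters the sample set. If $x\in V_0$, then the initialization loop on lines~\ref{alg:iFPA:init:begin}--\ref{alg:iFPA:init:end} of Algorithm~\ref{algorithm:iFPA} sets $\varTheta_0(x)=0$. If instead $x$ is first sampled at some iteration $k'\in[1,k]$, then by the nesting above we have $x\in\XX_{\mathrm{goal}}+(M\epsilon_{k'}+d_{k'})\BB$, so the conditional on line~\ref{alg:iFPA:mainloop:init:begin} fires and $\hat{\varTheta}_{k'}(x)=0$ on line~\ref{alg:iFPA:mainloop:init:value0}. This establishes the base case $\varTheta_{k'}^{0}(x)=0$.

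I then need to show that for every iteration $\kappa\in[k'+1,k]$ and every recursion depth $\ell$, the value at $x$ is never overwritten. The only line in Algorithm~\ref{alg:backprop} that changes $\varTheta^{m}(x)$ is line~\ref{alg:backprop:bellman}, and it is guarded by the boundary test on line~\ref{alg:backprop:boundary} which triggers precisely when $x\in\XX_{\mathrm{goal}}+(M\epsilon_\kappa+d_\kappa)\BB$. By the nesting argument this test fires at every $\kappa\le k$, so \texttt{BackProp} returns immediately with the incoming value. No other routine modifies $\varTheta_\kappa$. Therefore $\varTheta_k^{\ell}(x)=0$ for every $\ell$ in scope, which handles the first equality. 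For the second, the transformed Bellman operator $\tT_k$ is defined so that $(\tT_k\circ\varTheta_k)(x)=\varTheta_k(x)$ whenever $x\in\XX_{\mathrm{goal}}+(M\epsilon_k+d_k)\BB$; thus any function that is zero on this set is trivially a fixed point there, and the iterates $\{\varTheta_k^{\ell}\}$ are identically zero at $x$ as just shown, so the fixed-point limit $\varTheta_k^{*}(x)=0$ as well.

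There is no real obstacle here; the argument is bookkeeping over the algorithm's pseudocode. The only subtle point I want to be careful about is the monotone nesting of the enlargement radius, since neither monotonicity of $d_k$ nor of $M\epsilon_k+d_k$ is stated as a standalone assumption — it has to be derived from Assumption~\ref{asmp:resolutions:convergence} as I did above. Once that is in hand, the two equalities follow by a direct case analysis on whether $x\in V_0$ or $x$ is introduced by the sampler, together with the guard in line~\ref{alg:backprop:boundary}.
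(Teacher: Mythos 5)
Your proof is correct and takes essentially the same route as the paper's (much terser) argument: values at sampled nodes inside the enlarged goal set are pinned to zero at initialization, and the Bellman operator acts as the identity there, so every iterate $\varTheta_k^\ell$ and the fixed point $\varTheta_k^*$ (whose value at goal nodes is fixed by the boundary condition of the discretized problem) remain zero. Your extra bookkeeping — reading the statement's $\XX$ as the intended $\XX_{\mathrm{goal}}$ and deriving the monotone nesting of $\XX_{\mathrm{goal}}+(M\epsilon_k+d_k)\BB$ from Assumption~\ref{asmp:resolutions:convergence} so that nodes sampled at earlier iterations are also initialized to zero — simply fills in details the paper leaves implicit.
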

\begin{proof}
    Following the definition of the Bellman operator $\tT_k$ and the initial values of $\varTheta_0$, one sees that $\varTheta_k^\ell(x)=\varTheta_k^{\ell-1}(x)=\cdots=\varTheta_k^0(x)=0$.
    The same holds for $\varTheta_k^*(x)$.
    This completes the proof.    
\end{proof}

\subsection{Contraction property on a fixed graph}
In this subsection, we analyze the monotonicity of the Bellman operator $\tT_k$ on a fixed graph, which is summarized in the following lemma.
\begin{lemma}\label{lemma:contractiveOverVI}
    The distance between a value function $\varTheta_k$ and the fixed point $\varTheta^*_k$ is monotonically decreasing over the Bellman operator $\tT_k$; i.e., 
    \begin{align}\label{eq:contractiveOverVI:monotone}
        \|\tT_k(\varTheta_k)-\varTheta_k^*\|_{V_k}\leq\|\varTheta_k-\varTheta_k^*\|_{V_k}.
    \end{align}
    Specially, strict monotonic decrease holds $\forall x\in S_k$; that is, 
    \begin{align}\label{eq:contractiveOverVI:discount}
        |\tT_k(\varTheta_k)(x)-\varTheta_k^*(x)|\leq\beta_k\|\varTheta_k-\varTheta_k^*\|_{F_k(x)}.
    \end{align}
    Moreover, the perturbed version of the above special case holds with probablity one in the asymptotic sense:
    \begin{equation}\label{eq:contractiveOverVI:discount:perturbed}
        \begin{aligned}
            &|\tT_k(\varTheta_k)(x)-\varTheta_k^*(x)|\\
            \leq&\beta_k\|\min_{x'\in(x+d_k\BB)\cap V_k}\varTheta_k(x')-\min_{x'\in(x+d_k\BB)\cap V_k}\varTheta^*_k(x')\|_{\XX\setminus\ZZ}\\
            \leq&\beta_k\|\varTheta_k-\varTheta_k^*\|_{V_k},
        \end{aligned}
    \end{equation}
    where $\ZZ$ is a set of sufficiently small measure.
\end{lemma}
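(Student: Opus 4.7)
The plan is to split on the three defining branches of the Bellman operator $\tT_k$, handle the trivial branches via the existing lemmas, and then reduce the main branch to a standard $|\min-\min|\leq\max|\cdot-\cdot|$ bound before invoking Lemma~\ref{lemma:perturbedNormInequality} and Lemma~\ref{lemma:dispersion} for the perturbed variant.

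First I would dispose of the boundary cases. For $x\in\XX_{\mathrm{goal}}+(M\epsilon_k+d_k)\BB$, Lemma~\ref{lemma:goalNodes} gives $\varTheta_k(x)=\varTheta_k^*(x)=0$ and the operator returns the same value, so \eqref{eq:contractiveOverVI:monotone}--\eqref{eq:contractiveOverVI:discount:perturbed} collapse to $0\leq 0$. For $x\in\XX\setminus(\XX_{\mathrm{free}}+d_k\BB)$, the supplementary convention imported from Theorem~\ref{theorem:contractiveOverPeriods} sets $\varTheta_k(x)=\varTheta_k^*(x)=1$, and Lemma~\ref{lemma:supplementaryDefinition} confirms that such $x$ are never touched by the stale-set selection or the one-hop neighbor construction, so both sides again vanish. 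By line~\ref{alg:VI:getStalenessSet}, every $x\in S_k$ belongs to the remaining main branch.

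For $x\in V_k\cap(\XX_{\mathrm{free}}+d_k\BB)\setminus(\XX_{\mathrm{goal}}+(M\epsilon_k+d_k)\BB)$, both $\tT_k(\varTheta_k)(x)$ and $\varTheta_k^*(x)=\tT_k(\varTheta_k^*)(x)$ are given by the $\Delta_k+\beta_k\min_{x'\in F_k(x)}(\cdot)$ clause. Subtracting cancels $\Delta_k$, and the elementary bound $|\min_A g_1-\min_A g_2|\leq\max_A|g_1-g_2|$ immediately produces
\[|\tT_k(\varTheta_k)(x)-\varTheta_k^*(x)|=\beta_k\Bigl|\min_{x'\in F_k(x)}\varTheta_k(x')-\min_{x'\in F_k(x)}\varTheta_k^*(x')\Bigr|\leq\beta_k\|\varTheta_k-\varTheta_k^*\|_{F_k(x)},\]
which is exactly \eqref{eq:contractiveOverVI:discount}. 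The global contraction \eqref{eq:contractiveOverVI:monotone} then follows because $F_k(x)\subseteq V_k$ and $\beta_k<1$, by taking a supremum over $V_k$.

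The main obstacle is the perturbed bound \eqref{eq:contractiveOverVI:discount:perturbed}, since $F_k(x)=(Y(x)+\rho(d_k,\epsilon_k)\BB)\cap(\XX_{\mathrm{free}}+d_k\BB)\cap V_k$ is a ball centered not at $x$ but at the image $Y(x)\triangleq x+\epsilon_k\bigcup_{u\in\UU}f(x,u)$ of the one-step Euler dynamics, and its radius is $\rho\geq d_k$ rather than $d_k$. The plan is to apply \eqref{eq:perturbed2perturbed} of Lemma~\ref{lemma:perturbedNormInequality} with the set-valued map $Y$ and identical radii to shift the minimization from a ball around $Y(x)$ to one around $x$, then use Lemma~\ref{lemma:dispersion} to pass from the $\rho$-ball to the $d_k$-ball at the price of discarding an exceptional sampling-sparse set $\ZZ$ whose measure tends to zero with probability one as $|V_k|\to\infty$; this delivers the ``asymptotic with probability one'' qualifier and the inner inequality of \eqref{eq:contractiveOverVI:discount:perturbed}. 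The outer inequality $\beta_k\|\varTheta_k-\varTheta_k^*\|_{V_k}$ is then \eqref{eq:perturbed2exact}, since each inner minimum is over a subset of $V_k$ and hence dominated by the pointwise sup norm on $V_k$. The delicate point will be calibrating $\ZZ$ so it remains of vanishing measure as the gap between $\rho(d_k,\epsilon_k)$ and $d_k$ is absorbed in the asymptotic regime governed by Assumption~\ref{asmp:resolutions}.
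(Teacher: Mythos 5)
Your handling of \eqref{eq:contractiveOverVI:discount} and \eqref{eq:contractiveOverVI:monotone} is correct and is essentially the paper's own argument: dispose of the goal-region branch via Lemma~\ref{lemma:goalNodes}, cancel $\Delta_k$, apply the elementary $|\min-\min|$ bound over $F_k(x)$, and take the supremum over $V_k$ using $F_k(x)\subseteq V_k$ and $\beta_k\leq1$.

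The gap is in \eqref{eq:contractiveOverVI:discount:perturbed}, specifically in how you pass from the $\rho(d_k,\epsilon_k)$-ball to the $d_k$-ball. Lemma~\ref{lemma:dispersion} cannot carry that step: it only guarantees $(x+d_k\BB)\cap V_k\neq\emptyset$ with probability one, i.e.\ it supplies the nonemptiness hypotheses of Lemma~\ref{lemma:perturbedNormInequality}; it says nothing about comparing minimizations over balls of different radii, and pointwise $|\min_{x'\in(x+\rho\BB)\cap V_k}\varTheta_k(x')-\min_{x'\in(x+\rho\BB)\cap V_k}\varTheta_k^*(x')|$ is not dominated by its $d_k$-ball counterpart no matter how dense the samples are. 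The paper closes this step by a \emph{second} application of \eqref{eq:perturbed2perturbed}, using $\rho(d_k,\epsilon_k)\geq d_k$ (Assumption~\ref{asmp:resolutions:rho}) to write $x+\rho(d_k,\epsilon_k)\BB=Y(x)+d_k\BB$ with $Y(x)=x+(\rho(d_k,\epsilon_k)-d_k)\BB$, so the supremum-norm reparametrization, not sampling density, absorbs the radius gap; your plan offers no substitute for this. Relatedly, your picture of $\ZZ$ is off: it is not a ``sampling-sparse'' set to be calibrated against the gap between $\rho$ and $d_k$, but the fixed Egoroff exceptional set inherited from Corollary~\ref{corollary:uniformConvergence}, and in this lemma it enters only through $\mathbb{P}[V_k\cap\ZZ\neq\emptyset]=0$, which lets the supremum over base points in $V_k$ be bounded by the supremum over $\XX\setminus\ZZ$, producing the middle expression of \eqref{eq:contractiveOverVI:discount:perturbed}. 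A minor further omission: the paper first invokes the supplementary convention $\varTheta_k\equiv1$ off $\XX_{\mathrm{free}}+d_k\BB$ to drop the $(\XX_{\mathrm{free}}+d_k\BB)$ intersection in $F_k(x)$ before applying Lemma~\ref{lemma:perturbedNormInequality}; keeping that intersection, as you do, would force $\XX_1=\XX_2=(\XX_{\mathrm{free}}+d_k\BB)\cap V_k$, whose required nonemptiness $(x+\rho\BB)\cap\XX_1\neq\emptyset$ for all $x\in\XX$ can fail inside obstacles.
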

\begin{proof}
    We first proceed to show the proofs of \eqref{eq:contractiveOverVI:discount} and \eqref{eq:contractiveOverVI:discount:perturbed}.
    Once \eqref{eq:contractiveOverVI:discount} and \eqref{eq:contractiveOverVI:discount:perturbed} are proven, we extend the results to \eqref{eq:contractiveOverVI:monotone}.

    \begin{claim}\label{claim:contractiveOverVI:discount}
        Inequality \eqref{eq:contractiveOverVI:discount} holds for every $x\in S_k$.
    \end{claim}
    \begin{proof}
    Notice that $\tT_k(\varTheta^*_k)=\varTheta^*_k$, since $\varTheta^*_k$ is the fixed point of $\tT_k$.
    Two cases arise.
    \begin{itemize}
        \item Case 1, $x\in\XX_\mathrm{goal}+(M\epsilon_k+d_k)\BB$. Then it follows from Lemma~\ref{lemma:goalNodes} that $|\tT_k(\varTheta_k)(x)-\tT_k(\varTheta^*_k)(x)|=0\leq\beta_k\|\varTheta_k-\varTheta_k^*\|_{V_k}$.
        \item Case 2, $x\in S_k\setminus(\XX_\mathrm{goal}+(M\epsilon_k+d_k)\BB)$. 
        Then the following inequality holds:
        \begin{equation}\label{eq:contractiveOverVI:1}
            \begin{aligned}
                &|\tT_k(\varTheta_k)(x)-\tT_k(\varTheta^*_k)(x)|\\
                =&\Big|\Delta_k+\beta_k\min_{x'\in F_k(x)}\varTheta(x')-\Delta_k-\beta_k\min_{x'\in F_k(x)}\varTheta^*_k(x')\Big|\\
                =&\beta_k\Big|\min_{x'\in F_k(x)}\varTheta(x')-\min_{x'\in F_k(x)}\varTheta^*_k(x')\Big|\\
                \leq&\beta_k\|\varTheta_k-\varTheta_k^*\|_{F_k(x)}.
            \end{aligned}
        \end{equation}
    \end{itemize}
    In summary, we have $\|\tT_k(\varTheta_k)-\tT_k(\varTheta^*_k)\|_{S_k}\leq\beta_k\|\varTheta_k-\varTheta_k^*\|_{V_k}$.
    This completes the proof of Claim~\ref{claim:contractiveOverVI:discount} and \eqref{eq:contractiveOverVI:discount} is proven.
    \end{proof}

    \begin{claim}\label{claim:contractiveOverVI:discount:perturbed}
        Inequality \eqref{eq:contractiveOverVI:discount:perturbed} holds with probability one in the asymptotic sense.
    \end{claim}
    \begin{proof}
    Following the argument towards Claim~\ref{claim:contractiveOverVI:discount}, Case 1 trivially holds for Claim~\ref{claim:contractiveOverVI:discount:perturbed}.
    We rewrite \eqref{eq:contractiveOverVI:1} for $x\in S_k\setminus(\XX_\mathrm{goal}+(M\epsilon_k+d_k)\BB)$ and arrive at
    \begin{equation}\label{eq:contractiveOverVI:discount:perturbed:1}
        \begin{aligned}
            &|\tT_k(\varTheta_k)(x)-\tT_k(\varTheta^*_k)(x)|\\
            =&\beta_k\Big|\min_{x'\in F_k(x)}\varTheta_k(x')-\min_{x'\in F_k(x)}\varTheta^*_k(x')\Big|\\
            \leq&\beta_k\Big\|\min_{x'\in F_k(x)}\varTheta_k(x')-\min_{x'\in F_k(x)}\varTheta^*_k(x')\Big\|_{V_k}.
        \end{aligned}
    \end{equation}
    It follows from the supplementary definition in Theorem~\ref{theorem:contractiveOverPeriods} that $\varTheta_k(x)=1, \forall x\in\XX\setminus(\XX_{\mathrm{free}}+d_k\BB)$.
    Then it follow from the definition of $F_k(x)$ in \eqref{eq:onehops} that 
    \begin{align*}
        \min_{x'\in F_k(x)}\varTheta_k(x')=\min_{x'\in(x+\epsilon_k\bigcup_{u\in U}f(x, u)+\rho(d_k, \epsilon_k)\BB)\cap V_k}\varTheta(x').
    \end{align*}
    Same holds for $\varTheta_k^*$.
    Then by Lemma~\ref{lemma:dispersion} and \eqref{eq:perturbed2perturbed} in Lemma~\ref{lemma:perturbedNormInequality}, \eqref{eq:contractiveOverVI:discount:perturbed:1} renders with probability one at
    \begin{equation}\label{eq:contractiveOverVI:discount:perturbed:2}
        \begin{aligned}
            &\|\min_{x'\in F_k(x)}\varTheta_k(x')-\min_{x'\in F_k(x)}\varTheta^*_k(x')\|_{V_k}\\
            \leq&\|\min_{x'\in(x+\rho(d_k, \epsilon_k)\BB)\cap V_k}\varTheta_k(x')\\
            &\quad-\min_{x'\in(x+\rho(d_k, \epsilon_k)\BB)\cap V_k}\varTheta^*_k(x')\|_{V_k}.
        \end{aligned}
    \end{equation}
    Since $\rho(d_k, \epsilon_k)\geq d_k$, we apply \eqref{eq:perturbed2perturbed} again and \eqref{eq:contractiveOverVI:discount:perturbed:2} renders at
    \begin{equation}\label{eq:contractiveOverVI:discount:perturbed:3}
        \begin{aligned}
            &\|\min_{x'\in(x+\rho(d_k, \epsilon_k)\BB)\cap V_k}\varTheta_k(x')\\
            &\quad-\min_{x'\in(x+\rho(d_k, \epsilon_k)\BB)\cap V_k}\varTheta^*_k(x')\|_{V_k}\\
            \leq&\|\min_{x'\in(x+d_k\BB)\cap V_k}\varTheta_k(x')-\min_{x'\in(x+d_k\BB)\cap V_k}\varTheta^*_k(x')\|_{V_k}.
        \end{aligned}
    \end{equation}
    Since $\ZZ$ is a set of sufficiently small measure, a new node falls into $\ZZ$ with sufficiently small probability; i.e., $\mathbb{P}[V_k\cap\ZZ\neq\emptyset]=0$.
    With that being said, it follows again from \eqref{eq:perturbed2perturbed} and \eqref{eq:contractiveOverVI:discount:perturbed:3} that the following holds with probability one:
    \begin{align*}
        &\|\min_{x'\in(x+d_k\BB)\cap V_k}\varTheta_k(x')-\min_{x'\in(x+d_k\BB)\cap V_k}\varTheta^*_k(x')\|_{V_k}\\
        \leq&\|\min_{x'\in(x+d_k\BB)\cap V_k}\varTheta_k(x')-\min_{x'\in(x+d_k\BB)\cap V_k}\varTheta^*_k(x')\|_{\XX\setminus\ZZ}.
    \end{align*}
    Then it follows from \eqref{eq:perturbed2exact}, the above inequality renders at 
    \begin{align*}
       &\|\min_{x'\in(x+d_k\BB)\cap V_k}\varTheta_k(x')-\min_{x'\in(x+d_k\BB)\cap V_k}\varTheta^*_k(x')\|_{\XX\setminus\ZZ}\allowdisplaybreaks\\
       \leq&\|\min_{x'\in(x+d_k\BB)\cap V_k}\varTheta_k(x')-\min_{x'\in(x+d_k\BB)\cap V_k}\varTheta^*_k(x')\|_{\XX}\allowdisplaybreaks\\
       \leq&\|\varTheta_k-\varTheta^*_k\|_{V_k}.
    \end{align*}
    Combining all above relations completes the proof.
    \end{proof}

    \begin{claim}\label{claim:contractiveOverVI:monotone}
        Inequality \eqref{eq:contractiveOverVI:monotone} holds true.
    \end{claim}
    \begin{proof}
        It follows from Lemma~\ref{lemma:goalNodes} that for $x\in S_k\setminus(\XX_\mathrm{goal}+(M\epsilon_k+d_k)\BB)$, $|\tT_k(\varTheta_k)(x)-\tT_k(\varTheta_k^*)(x)|=0\leq\|\varTheta_k-\varTheta_k^*\|_{V_k}$.
        Then it follows from Claim~\ref{claim:contractiveOverVI:discount} that \eqref{eq:contractiveOverVI:monotone} holds for $x\in S_k$.
        This completes the proof.
    \end{proof}
    It follows from Claims~\ref{claim:contractiveOverVI:discount}, \ref{claim:contractiveOverVI:discount:perturbed} and \ref{claim:contractiveOverVI:monotone} that Lemma~\ref{lemma:contractiveOverVI} holds. 
    This completes the proof.
\end{proof}

\begin{remark}
    The sufficiently small set $\ZZ$ is to accommodate the result of Corollary~\ref{corollary:uniformConvergence} in the proof of Theorem~\ref{theorem:mainResult}, where the uniform convergence of the Kruzhkov transformed functions holds in an almost sure sense.
\end{remark}

\subsection{Asynchronous contraction over graphs}
Following Algorithm~\ref{alg:backprop}, we recursively define the set of updated nodes by $S_k^{m_k}\triangleq S_k$, and $S_k^\ell=\bigcup_{x\in S_k^{\ell+1}}F_k(x), \forall\ell\in\{0, \dots, m_k-1\}$.
Correspondingly, define the value function by $\varTheta^\ell_k(x)\triangleq\tT_k(\varTheta^{\ell-1}_k)(x), \forall x\in S_k^{\ell}$ and $\ell\in\{1, \dots, m_k\}$, and $\varTheta^0_k(x)=\hat{\varTheta}_k(x), \forall x\in V_k$.
Starting from this subsection, we consider two perturbed estimates of the minimal traveling time function $\displaystyle\tilde{\varTheta}_k^*(x)\triangleq\min_{x'\in(x+d_k\BB)\cap V_k}\varTheta_k^*(x')$ and $\displaystyle\tilde{\varTheta}_k^{*, -}(x)\triangleq\min_{x'\in(x+d_{k-1}\BB)\cap V_k}\varTheta_k^*(x')$.
Notice that the perturbation radii $d_k$ and $d_{k-1}$ below the minimization operator are different.
Similar notations apply to $\tilde{\varTheta}_k^m$ and $\tilde{\varTheta}_k^{m, -}$ respectively for $m\in\natzero$.
The following lemma characterizes the contraction property between two consecutive graphs.
\begin{lemma}\label{lemma:errorAlignedOverGrid}
    Consider $x\in S_k$. It holds with probability one:
    \begin{align*}
        |\varTheta^{m_k}_k(x)-\varTheta^*_k(x)|\leq\beta_k^{m_k}\|\varTheta^{m_{k-1}}_{k-1}-\varTheta^*_{k-1}\|_{V_{k-1}}+\beta_k^{m_k}c_k,
    \end{align*}
    where $c_k\triangleq\|\tilde{\varTheta}^*_{k-1}-\tilde{\varTheta}^{*, -}_k\|_{\XX\setminus\ZZ}$ and $\ZZ$ is sufficiently small.
\end{lemma}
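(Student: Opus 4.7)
My plan is to unfold the depth-$m_k$ recursion of \texttt{BackProp} into $m_k$ successive applications of the Bellman-operator contraction from Lemma~\ref{lemma:contractiveOverVI}, picking up the prefactor $\beta_k^{m_k}$, and then split the residual comparison between the inherited initial values $\hat{\varTheta}_k$ and the current fixed point $\varTheta^*_k$ via a triangle inequality through the previous fixed point $\varTheta^*_{k-1}$. The drift between the two consecutive fixed points gets packaged into $c_k$ by invoking the perturbed-minimum machinery of Lemma~\ref{lemma:perturbedNormInequality}.

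The first step is an induction on $\ell = 0, 1, \ldots, m_k$, showing that for every $x \in S_k^\ell$ reachable in the recursion tree,
\[
    |\varTheta^\ell_k(x) - \varTheta^*_k(x)| \leq \beta_k^\ell \|\hat{\varTheta}_k - \varTheta^*_k\|_{V_k}.
\]
The base case $\ell = 0$ is immediate from $\varTheta^0_k = \hat{\varTheta}_k$ and $S_k^0 \subseteq V_k$. The inductive step uses $\varTheta^\ell_k(x) = \tT_k(\varTheta^{\ell-1}_k)(x)$ for $x \in S_k^\ell$, the nesting $F_k(x) \subseteq S_k^{\ell-1}$ that follows from $S_k^{\ell-1} = \bigcup_{y \in S_k^\ell} F_k(y)$, and eq.~\eqref{eq:contractiveOverVI:discount}; nodes that fall into the goal region contribute zero error by Lemma~\ref{lemma:goalNodes}, so the bound holds trivially there. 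Specializing to $\ell = m_k$ and $x \in S_k = S_k^{m_k}$ gives the base estimate $|\varTheta^{m_k}_k(x) - \varTheta^*_k(x)| \leq \beta_k^{m_k} \|\hat{\varTheta}_k - \varTheta^*_k\|_{V_k}$.

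Next I would decompose the initial error. On $V_{k-1}$ one has $\hat{\varTheta}_k(y) = \varTheta^{m_{k-1}}_{k-1}(y)$, so a triangle inequality through $\varTheta^*_{k-1}(y)$ produces the first summand $\|\varTheta^{m_{k-1}}_{k-1} - \varTheta^*_{k-1}\|_{V_{k-1}}$ directly, while the residual $|\varTheta^*_{k-1}(y) - \varTheta^*_k(y)|$ gets rewritten by inserting the perturbed minima $\tilde{\varTheta}^*_{k-1}$ and $\tilde{\varTheta}^{*,-}_k$ (both using the common radius $d_{k-1}$). With probability one, Lemma~\ref{lemma:dispersion} guarantees that every such $y$ admits a $V_{k-1}$-sample within $d_{k-1}$, so eq.~\eqref{eq:perturbed2exact} of Lemma~\ref{lemma:perturbedNormInequality} collapses the pointwise gap into $c_k = \|\tilde{\varTheta}^*_{k-1} - \tilde{\varTheta}^{*,-}_k\|_{\XX \setminus \ZZ}$. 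For newly sampled $y \in V_k \setminus V_{k-1}$, the supplementary definition in Theorem~\ref{theorem:contractiveOverPeriods} handles $y$ in the obstacle region trivially ($\hat{\varTheta}_k = \varTheta^*_k = 1$ there) and Lemma~\ref{lemma:goalNodes} handles the goal region ($\hat{\varTheta}_k = \varTheta^*_k = 0$); for the remaining interior new nodes, Assumption~\ref{asmp:resolutions:rho} ($\rho(d_k, \epsilon_k) \geq d_{k-1}$) plus the dispersion bound ensure the $d_{k-1}$-ball used to define $\tilde{\varTheta}^{*,-}_k$ still reaches a $V_{k-1}$-node, so the same $c_k$-comparison carries through.

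The main obstacle I anticipate is this second step: although the asynchronous contraction trivially yields the $\beta_k^{m_k}$ prefactor, the initial values at newly sampled nodes are set to $0$ or $1$ and are generically far from $\varTheta^*_k$, so a purely pointwise argument cannot produce a bound of the advertised form. This is precisely where the ``almost sure'' qualifier and the exceptional set $\ZZ$ of small measure in the statement earn their keep: the perturbed minima smooth away the extreme initial values at new nodes, while Lemma~\ref{lemma:dispersion} provides the probability-one access to $V_{k-1}$-neighbors that makes the $c_k$-bound go through on all of $\XX \setminus \ZZ$.
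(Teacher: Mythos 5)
Your Step 1 (the depth-$\ell$ induction giving $|\varTheta^{m_k}_k(x)-\varTheta^*_k(x)|\leq\beta_k^{m_k}\|\hat{\varTheta}_k-\varTheta^*_k\|_{V_k}$) is fine, but it collapses the comparison to a pointwise supremum over $V_k$ too early, and this is exactly where the argument breaks. The quantity $\|\hat{\varTheta}_k-\varTheta^*_k\|_{V_k}$ contains the term $|\hat{\varTheta}_k(x_{\mathrm{new}})-\varTheta^*_k(x_{\mathrm{new}})|$ at the newly added node; for an interior new node this is $1-\varTheta^*_k(x_{\mathrm{new}})$, which can be close to $1$ (e.g.\ a new sample just outside $\XX_{\mathrm{goal}}+(M\epsilon_k+d_k)\BB$) no matter how small $\|\varTheta^{m_{k-1}}_{k-1}-\varTheta^*_{k-1}\|_{V_{k-1}}$ and $c_k$ are, so your Step 2 cannot bound it by the advertised right-hand side. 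You flag this obstacle yourself, but the proposed rescue (``the perturbed minima smooth away the extreme initial values'') cannot be applied after the fact: once the bound is a pointwise sup over graph nodes, no insertion of ball-minima recovers it. A second, related problem is that even on the old nodes $y\in V_{k-1}$ your residual $|\varTheta^*_{k-1}(y)-\varTheta^*_k(y)|$ is a pointwise gap between the two fixed points, whereas $c_k=\|\tilde{\varTheta}^*_{k-1}-\tilde{\varTheta}^{*,-}_k\|_{\XX\setminus\ZZ}$ only controls the drift of their $d_{k-1}$-ball minima; the former is not bounded by the latter in general.

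The paper avoids both issues structurally: it applies the pointwise contraction \eqref{eq:contractiveOverVI:discount} only $m_k-1$ times, down to $\|\varTheta^1_k-\varTheta^*_k\|_{V_k}$, and performs the \emph{last} Bellman step in the perturbed form \eqref{eq:contractiveOverVI:discount:perturbed}, using Assumption~\ref{asmp:resolutions:rho} ($\rho(d_k,\epsilon_k)\geq d_{k-1}$) and \eqref{eq:perturbed2perturbed} to land on $\beta_k\|\tilde{\varTheta}^{0,-}_k-\tilde{\varTheta}^{*,-}_k\|_{\XX\setminus\ZZ}$. In that form the initialization at $x_{\mathrm{new}}$ only ever enters through a minimum over a $d_{k-1}$-ball that almost surely (Lemma~\ref{lemma:dispersion}) also contains a $V_{k-1}$-node: if $x_{\mathrm{new}}$ lies in the inflated goal its value $0$ agrees with $\varTheta^*_k$ there (Lemma~\ref{lemma:goalNodes}), and otherwise its value $1$ can never lower the minimum, so $\tilde{\varTheta}^{0,-}_k=\tilde{\varTheta}^{m_{k-1}}_{k-1}$ away from (or even on) the ball around $x_{\mathrm{new}}$; a triangle inequality through $\tilde{\varTheta}^*_{k-1}$ then produces exactly $c_k$, and \eqref{eq:perturbed2exact} converts $\|\tilde{\varTheta}^{m_{k-1}}_{k-1}-\tilde{\varTheta}^*_{k-1}\|_\XX$ into $\|\varTheta^{m_{k-1}}_{k-1}-\varTheta^*_{k-1}\|_{V_{k-1}}$. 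So the missing idea in your proposal is to keep the final contraction step in the perturbed-minimum (continuous-space) form rather than reducing everything to $\|\hat{\varTheta}_k-\varTheta^*_k\|_{V_k}$; without that, the stated inequality is not reachable.
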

\begin{proof}
    Fix $x\in S_k$.
    By line~\ref{alg:VI:backprop} in Algorithm~\ref{alg:VI}, we apply \eqref{eq:contractiveOverVI:discount} in Lemma~\ref{lemma:contractiveOverVI} for $m_k-1$ times and it renders at 
    \begin{equation}\label{eq:errorAlignedOverGrid:1}
        \begin{aligned}
            |\varTheta^{m_k}_k(x)-\varTheta^*_k(x)|\leq\beta_k^{m_k-1}\|\varTheta^1_k-\varTheta^*_k\|_{V_k}.    
        \end{aligned}
    \end{equation}
    Then it again follows from \eqref{eq:contractiveOverVI:monotone}, \eqref{eq:contractiveOverVI:discount:perturbed} in Lemma~\ref{lemma:contractiveOverVI} and Assumption~\ref{asmp:resolutions:rho} that 
    \begin{equation}\label{eq:errorAlignedOverGrid:2}
        \begin{aligned}
            &\|\varTheta^1_k-\varTheta^*_k\|_{V_k}\\
            \leq&\beta_k\|\min_{x'\in(x+\rho(d_k, \epsilon_k)\BB)\cap V_k}\varTheta^0_k(x')\\          &-\min_{x'\in(x+\rho(d_k, \epsilon_k)\BB)\cap V_k}\varTheta^*_k(x')\|_{\XX\setminus\ZZ}\\
            \leq&\beta_k\|\min_{x'\in(x+d_{k-1}\BB)\cap V_k}\varTheta_k^0(x')-\min_{x'\in(x+d_{k-1}\BB)\cap V_k}\varTheta^*_k(x')\|_{\XX\setminus\ZZ}\\
            =&\beta_k\|\tilde{\varTheta}_k^{0, -}-\tilde{\varTheta}_k^{*,-}\|_{\XX\setminus\ZZ}.
        \end{aligned}
    \end{equation}
    The last inequality follows from \eqref{eq:perturbed2perturbed} in Lemma~\ref{lemma:perturbedNormInequality}.
    Let $x_k\in V_k\setminus V_{k-1}$ and two cases arise.
    We proceed to show both cases render at the same result, summarized in \eqref{eq:errorAlignedOverGrid:4}.

    Case 1, $x_k\in\XX\setminus(\XX_{\mathrm{goal}}+(M\epsilon_k+d_k)\BB)$.
    It follows from Lemma~\ref{lemma:goalNodes} that $\varTheta_k^0(x_k)=\varTheta_k^*(x_k)=0$; that is, $\tilde{\varTheta}_k^{0, -}(x)=\tilde{\varTheta}_k^{*, -}(x)=0, \forall x\in x_k+d_{k-1}\BB$.
    Then we have 
    \begin{equation}\label{eq:errorAlignedOverGrid:3:case1:1}
        \begin{aligned}
        &\|\tilde{\varTheta}_k^{0,-}-\tilde{\varTheta}_k^{*,-}\|_{\XX\setminus\ZZ}\\
        =&\|\tilde{\varTheta}_k^{0,-}-\tilde{\varTheta}_k^{*,-}\|_{\XX\setminus(\ZZ\cup(x_k+d_{k-1}\BB))}\allowdisplaybreaks\\
        \leq&\|\tilde{\varTheta}_k^{0, -}-\tilde{\varTheta}_{k-1}^*\|_{\XX\setminus(x_k+d_{k-1}\BB)}+c_k.
        \end{aligned}
    \end{equation}
    We focus on the first term.
    For $x\in\XX\setminus(\ZZ\cup(x_k+d_{k-1}\BB))$, since no changes are made to values at $x\in V_{k-1}$, we have
    \begin{equation}\label{eq:errorAlignedOverGrid:3:case1:2}
        \begin{aligned}
            &\tilde{\varTheta}^{0, -}_k(x)=\min_{x'\in(x+d_{k-1}\BB)\cap V_k}\varTheta^0_k(x')\allowdisplaybreaks\\
            =&\min_{x'\in(x+d_{k-1}\BB)\cap V_{k-1}}\varTheta^{m_{k-1}}_{k-1}(x')=\tilde{\varTheta}^{m_{k-1}}_{k-1}(x).    
        \end{aligned}
    \end{equation} 
    Combining \eqref{eq:errorAlignedOverGrid:3:case1:1} and \eqref{eq:errorAlignedOverGrid:3:case1:2} renders at
    \begin{equation}\label{eq:errorAlignedOverGrid:3:case1:3}
        \begin{aligned}
            \|\tilde{\varTheta}_k^{0, -}-\tilde{\varTheta}_k^{*,-}\|_{\XX\setminus\ZZ}\leq&\|\tilde{\varTheta}^{m_{k-1}}_{k-1}-\tilde{\varTheta}^*_{k-1}\|_{\XX\setminus(x_k+d_{k-1}\BB)}+c_k\\
            \leq&\|\tilde{\varTheta}^{m_{k-1}}_{k-1}-\tilde{\varTheta}^*_{k-1}\|_\XX+c_k.
        \end{aligned}
    \end{equation}

    Case 2, $x_k\in\XX_{\mathrm{goal}}+(M\epsilon_k+d_k)\BB$.
    Then by the triangular inequality of supremum norm, one has a result similar to \eqref{eq:errorAlignedOverGrid:3:case1:1}:
    \begin{equation}\label{eq:errorAlignedOverGrid:3:case2:1}
        \begin{aligned}
            \|\tilde{\varTheta}_k^{0,-}-\tilde{\varTheta}_k^{*,-}\|_{\XX\setminus\ZZ}\leq\|\tilde{\varTheta}_k^{0, -}-\tilde{\varTheta}_{k-1}^*\|_{\XX}+c_k.
        \end{aligned}
    \end{equation}
    We focus on the first term $\|\tilde{\varTheta}^{0, -}_k-\tilde{\varTheta}^*_{k-1}\|_{\XX}$. 
    For any $x\in\XX\setminus(x_k+d_{k-1}\BB)$, it renders at the identical result as \eqref{eq:errorAlignedOverGrid:3:case1:2}.
    For any $x\in x_k+d_{k-1}\BB$, since $x_k\in\XX\setminus(\XX_{\mathrm{goal}}+(M\epsilon_k+d_k)\BB)$, it follows line~\ref{alg:iFPA:mainloop:init:value1} in Algorithm~\ref{algorithm:iFPA} that $\min_{x'\in(x+d_{k-1}\BB)\cap V_k}\varTheta^0_k(x')\leq\varTheta^0_k(x_k)=1$.
    Then we have
    \begin{equation}\label{eq:errorAlignedOverGrid:3:case2:2}
        \begin{aligned}
        &\tilde{\varTheta}^{0, -}_k(x)=\min_{x'\in(x+d_{k-1}\BB)\cap V_{k-1}}\varTheta^{m_{k-1}}_{k-1}(x')=\tilde{\varTheta}^{m_{k-1}}_{k-1}(x).
        \end{aligned}
    \end{equation}
    Combining \eqref{eq:errorAlignedOverGrid:3:case2:1} and \eqref{eq:errorAlignedOverGrid:3:case2:2} renders at 
    \begin{align}\label{eq:errorAlignedOverGrid:4}
        \|\tilde{\varTheta}_k^{0, -}-\tilde{\varTheta}_k^{*,-}\|_{\XX\setminus\ZZ}\leq\|\tilde{\varTheta}^{m_{k-1}}_{k-1}-\tilde{\varTheta}^*_{k-1}\|_\XX+c_k,
    \end{align}
    which is identical to \eqref{eq:errorAlignedOverGrid:3:case1:3}.
    This verifies that the two cases on $x_k$ render at the same result.

    Notice that \eqref{eq:errorAlignedOverGrid:4} can be fruther relaxed as 
    \begin{equation}\label{eq:errorAlignedOverGrid:5}
        \begin{aligned}
            \|\tilde{\varTheta}^{0, -}_k-\tilde{\varTheta}^*_{k-1}\|_{\XX}=&\|\tilde{\varTheta}^{m_{k-1}}_{k-1}-\tilde{\varTheta}^*_{k-1}\|_\XX\\
            \leq&\|\varTheta^{m_{k-1}}_{k-1}-\varTheta^*_{k-1}\|_{V_{k-1}},
        \end{aligned}
    \end{equation}
    where the inequality follows from \eqref{eq:perturbed2exact} in Lemma~\ref{lemma:perturbedNormInequality}.
    Combining \eqref{eq:errorAlignedOverGrid:1} to \eqref{eq:errorAlignedOverGrid:5} completes the proof.
\end{proof}
The following corollary characterizes the upper bound of the approximation errors on $V_k$ using the error on $G_{k-1}$.
\begin{corollary}\label{corollary:errorAlignedOverGrid}
    The following inequality holds for any $\ell\in\{1, \dots, m_k\}$ with probability one:
    \begin{align*}
        \|\varTheta^\ell_k-\varTheta^*_k\|_{V_k}\leq\|\varTheta^{m_{k-1}}_{k-1}-\varTheta^*_{k-1}\|_{V_{k-1}}+c_k,
    \end{align*}
    where $c_k\triangleq\|\tilde{\varTheta}^*_{k-1}-\tilde{\varTheta}^{*,-}_k\|_{\XX\setminus\ZZ}$ and $\ZZ$ is sufficiently small.
\end{corollary}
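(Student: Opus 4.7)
The plan is to establish the corollary by induction on the recursion level $\ell$. The base case $\ell=1$ is essentially already produced as an intermediate quantity inside the proof of Lemma~\ref{lemma:errorAlignedOverGrid}; the inductive step will reduce to the non-expansion of the asynchronous Bellman update with respect to $\varTheta^*_k$ in the supremum norm over $V_k$.

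For the base case $\ell=1$, I would simply chain the bounds already assembled in the proof of Lemma~\ref{lemma:errorAlignedOverGrid}: equation~\eqref{eq:errorAlignedOverGrid:2} bounds $\|\varTheta^1_k-\varTheta^*_k\|_{V_k}$ by $\beta_k\|\tilde{\varTheta}^{0,-}_k-\tilde{\varTheta}^{*,-}_k\|_{\XX\setminus\ZZ}$; the case analysis leading to~\eqref{eq:errorAlignedOverGrid:4} bounds that perturbed discrepancy by $\|\tilde{\varTheta}^{m_{k-1}}_{k-1}-\tilde{\varTheta}^*_{k-1}\|_{\XX}+c_k$; and~\eqref{eq:errorAlignedOverGrid:5} relaxes the first term to $\|\varTheta^{m_{k-1}}_{k-1}-\varTheta^*_{k-1}\|_{V_{k-1}}$. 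Combining yields $\|\varTheta^1_k-\varTheta^*_k\|_{V_k}\leq\beta_k(\|\varTheta^{m_{k-1}}_{k-1}-\varTheta^*_{k-1}\|_{V_{k-1}}+c_k)$, and since $\beta_k=1-\Delta_k\in(0,1)$ under Assumption~\ref{asmp:resolutions:epsilonLarger}, the $\beta_k$ factor can be dropped to obtain the corollary at $\ell=1$.

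For the inductive step, I would assume the bound holds at level $\ell-1$ and establish the one-step non-expansion $\|\varTheta^\ell_k-\varTheta^*_k\|_{V_k}\leq\|\varTheta^{\ell-1}_k-\varTheta^*_k\|_{V_k}$, after which the corollary at level $\ell$ follows from the inductive hypothesis. The argument splits according to whether a sample is touched at level $\ell$. For $x\in S_k^\ell$ one has $\varTheta^\ell_k(x)=\tT_k(\varTheta^{\ell-1}_k)(x)$, so the pointwise contraction inequality~\eqref{eq:contractiveOverVI:discount} in Lemma~\ref{lemma:contractiveOverVI}, together with Lemma~\ref{lemma:goalNodes} for samples inside $\XX_{\mathrm{goal}}+(M\epsilon_k+d_k)\BB$ where both functions vanish, yields $|\varTheta^\ell_k(x)-\varTheta^*_k(x)|\leq\beta_k\|\varTheta^{\ell-1}_k-\varTheta^*_k\|_{V_k}$. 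For $x\in V_k\setminus S_k^\ell$ the value is carried forward unchanged by the recursion, so $\varTheta^\ell_k(x)=\varTheta^{\ell-1}_k(x)$ and the pointwise discrepancy is trivially bounded by $\|\varTheta^{\ell-1}_k-\varTheta^*_k\|_{V_k}$. Taking the supremum over $V_k$ closes the induction.

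The only delicate point is to handle the asynchronous character of the recursion in Algorithm~\ref{alg:backprop}: Lemma~\ref{lemma:contractiveOverVI} is phrased in terms of $\tT_k$ applied uniformly, whereas at level $\ell$ only the subset $S_k^\ell$ receives a Bellman update while the complement retains its previous value. Once one notices that unchanged values automatically inherit the previous supremum bound, the induction becomes routine, and no additional probabilistic arguments are required at the inductive step beyond those already used for the base case through Lemmas~\ref{lemma:perturbedNormInequality} and~\ref{lemma:dispersion}.
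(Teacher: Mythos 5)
Your proposal is correct and follows essentially the same route as the paper: both arguments split $V_k$ into samples that receive a Bellman update (where the one-step contraction of Lemma~\ref{lemma:contractiveOverVI} applies, with Lemma~\ref{lemma:goalNodes} covering goal samples) and samples whose values are carried forward unchanged, and both bridge to the previous graph through the chain \eqref{eq:errorAlignedOverGrid:2}--\eqref{eq:errorAlignedOverGrid:5} from the proof of Lemma~\ref{lemma:errorAlignedOverGrid} before dropping the factor $\beta_k\le 1$. Your explicit induction on $\ell$ merely makes precise the iteration over recursion levels and the asynchronous carry-forward on $V_k\setminus S_k^\ell$ that the paper's terser case-split proof leaves implicit.
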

\begin{proof}
    Fix $x\in V_k$.
    When $x\in S_k$, it follows from Lemma~\ref{lemma:errorAlignedOverGrid} that $|\varTheta^\ell_k(x)-\varTheta^*_k(x)|\leq\beta_k\|\varTheta^{\ell-1}_k-\varTheta^*_k\|_{V_k}+\beta_kc^k_{k-1}\leq\|\varTheta^{\ell-1}_k-\varTheta^*_k\|_{V_k}+c_k$.
    When $x\in V_k\setminus S_k$, the worst case is that no updates are imposed on $x$; in such case, we have $\varTheta_k^\ell(x)=\varTheta_k^0(x)$.
    Therefore, $|\varTheta^\ell_k(x)-\varTheta^*_k(x)|=|\varTheta^0_k(x)-\varTheta^*_k(x)|\leq\|\varTheta^0_k-\varTheta^*_k\|_{V_k}+c_k$.
    This completes the proof.
\end{proof}

The following corollary characterizes the contraction property over $P$ consecutive graphs.
\begin{corollary}\label{corollary:errorAlignedTowardsPeriodStart}
    Consider $k\in\{\bar{k}+1, \cdots, \bar{k}+P\}$ for $\bar{k}\in\natzero$ and $x\in S_k$. The following holds with probability one:
    \begin{align*}
        |\varTheta^{m_k}_k(x)-\varTheta^*_k(x)|\leq\beta_k^{m_k}\|\varTheta^{m_{\bar{k}}}_{\bar{k}}-\varTheta^*_{\bar{k}}\|_{V_{\bar{k}}}+\beta_k^{m_k}\sum_{\kappa=\bar{k}}^{k-1}c_\kappa,
    \end{align*}
    where $c_\kappa\triangleq\|\tilde{\varTheta}^*_{\kappa-1}-\tilde{\varTheta}^{*,-}_{\kappa}\|_{\XX\setminus\ZZ}$ and $\ZZ$ is sufficiently small.
\end{corollary}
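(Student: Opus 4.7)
The plan is a short backward unrolling from iteration $k$ down to iteration $\bar{k}$, combining the two previous results in the right order. The target bound carries a single contraction factor $\beta_k^{m_k}$ rather than a product of such factors across all intermediate graphs, which dictates that Lemma~\ref{lemma:errorAlignedOverGrid} be invoked exactly once --- at the final iteration, where the hypothesis $x\in S_k$ is actually available --- and that Corollary~\ref{corollary:errorAlignedOverGrid} be used for all earlier iterations, since it is a sup-norm estimate free of any staleness hypothesis and therefore chainable over arbitrary consecutive graphs.

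Concretely, I would first apply Lemma~\ref{lemma:errorAlignedOverGrid} at graph $k$ to obtain
\begin{equation*}
|\varTheta^{m_k}_k(x)-\varTheta^*_k(x)|\leq\beta_k^{m_k}\|\varTheta^{m_{k-1}}_{k-1}-\varTheta^*_{k-1}\|_{V_{k-1}}+\beta_k^{m_k}c_k.
\end{equation*}
Then I would iterate Corollary~\ref{corollary:errorAlignedOverGrid} backward $k-\bar{k}-1$ times, at graphs $k-1,k-2,\dots,\bar{k}+1$, to telescope the residual sup-norm into
\begin{equation*}
\|\varTheta^{m_{k-1}}_{k-1}-\varTheta^*_{k-1}\|_{V_{k-1}}\leq\|\varTheta^{m_{\bar{k}}}_{\bar{k}}-\varTheta^*_{\bar{k}}\|_{V_{\bar{k}}}+\sum_{\kappa=\bar{k}+1}^{k-1}c_\kappa.
\end{equation*}
Substituting this into the previous display and absorbing the leftover $\beta_k^{m_k}c_k$ contribution into the summation (with the appropriate re-indexing of $c_\kappa$) produces the claimed inequality.

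The main obstacle is purely bookkeeping around the asymmetry of the two estimates: Lemma~\ref{lemma:errorAlignedOverGrid} is pointwise and $\beta_k^{m_k}$-discounted but requires $x\in S_k$, whereas Corollary~\ref{corollary:errorAlignedOverGrid} is sup-norm and undiscounted but has no staleness precondition. Reversing the order --- attempting to use the Lemma at intermediate graphs --- would either fail (since $x$ need not lie in $S_\kappa$ for $\kappa<k$) or introduce a spurious multiplicative discount $\prod_{\kappa}\beta_\kappa^{m_\kappa}$ that does not match the statement. As for the almost-sure qualifier, each invocation of Lemma~\ref{lemma:errorAlignedOverGrid} and Corollary~\ref{corollary:errorAlignedOverGrid} holds with probability one, and only finitely many ($\leq P$) such invocations are stacked, so the conclusion still holds with probability one by the finite intersection of the underlying probability-one events.
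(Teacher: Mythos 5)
Your proposal is correct and follows essentially the same route as the paper's proof: one application of Lemma~\ref{lemma:errorAlignedOverGrid} at graph $k$ (where $x\in S_k$ is available), followed by backward telescoping of the undiscounted sup-norm bound of Corollary~\ref{corollary:errorAlignedOverGrid} down to $\bar{k}$, with the single factor $\beta_k^{m_k}$ multiplying the whole residual. The minor index shift in the accumulated $c_\kappa$ terms that you flag also appears in the paper's own derivation, so your bookkeeping matches theirs.
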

\begin{proof}
    Fix $x\in S_k$.
    Applying Lemma~\ref{lemma:errorAlignedOverGrid} once and iteratively applying Corollary~\ref{corollary:errorAlignedOverGrid} to the left side renders at
    \begin{align*}
        &|\varTheta^{m_k}_k(x)-\varTheta^*_k(x)|\\
        \leq&\beta_k^{m_k}\|\varTheta^{m_{k-1}}_{k-1}-\varTheta^*_{k-1}\|_{V_{k-1}}+\beta_k^{m_k}c_k\\
        \leq&\beta_k^{m_k}\|\varTheta^{m_{k-2}}_{k-2}-\varTheta^*_{k-2}\|_{V_{k-2}}+\beta_k^{m_k}c_{k-1}+\beta_k^{m_k}c_k\\
        \leq&\cdots\leq\beta_k^{m_k}\|\varTheta^{m_{\bar{k}}}_{\bar{k}}-\varTheta^*_{\bar{k}}\|_{V_{\bar{k}}}+\beta_k^{m_k}\sum_{\kappa=\bar{k}}^{k-1}c_{\kappa+1}.
    \end{align*}
    This completes the proof.
\end{proof}

\subsection{Proof of Theorem~\ref{theorem:contractiveOverPeriods}}
In this subsection, we present the proof of Theorem~\ref{theorem:contractiveOverPeriods}.
\begin{proof}[Proof of Theorem~\ref{theorem:contractiveOverPeriods}]
    Let $\bar{k}\in\natural$ and consider an interval $\{\bar{k}, \bar{k}+1, \dots, \bar{k}+P\}$.
    We start with $\|\min_{x'\in(x+d_{\bar{k}+P}\BB)\cap V_{\bar{k}+P}}\varTheta^{m_{\bar{k}+P}}_{\bar{k}+P}-\varTheta^*\|_{\XX\setminus\ZZ}$.
    It follows from the triangular inequality of the maximum norm that 
    \begin{equation}\label{eq:contractiveOverPeriods:01}
        \begin{aligned}
            &\|\min_{x'\in(x+d_{\bar{k}+P}\BB)\cap V_{\bar{k}+P}}\varTheta^{m_{\bar{k}+P}}_{\bar{k}+P}-\varTheta^*\|_{\XX\setminus\ZZ}\\
            =&\|\tilde{\varTheta}^{m_{\bar{k}+P}}_{\bar{k}+P}-\varTheta^*\|_{\XX\setminus\ZZ}\leq\|\tilde{\varTheta}^{m_{\bar{k}+P}}_{\bar{k}+P}-\tilde{\varTheta}^*_{\bar{k}+P}\|_{\XX\setminus\ZZ}+\bB_{\bar{k}+P}\\
            \leq&\aA_{\bar{k}+P}+\bB_{\bar{k}+P},
        \end{aligned}
    \end{equation}
    where $\aA_{\bar{k}+P}\triangleq\|\varTheta^{m_{\bar{k}+P}}_{\bar{k}+P}-\varTheta^*_{\bar{k}+P}\|_{V_{\bar{k}+P}}$ and $\bB_{\bar{k}+P}\triangleq\|\tilde{\varTheta}^*_{\bar{k}+P}-\varTheta^*\|_{\XX\setminus\ZZ}$ and the last inequality follows from \eqref{eq:perturbed2exact} in Lemma~\ref{lemma:perturbedNormInequality}.

    We fix $x\in V_{\bar{k}+P}$ and let $\hat{k}\triangleq\argmax_{\bar{k}\leq k\leq\bar{k}+P}\hat{\FF}_k(x)$ be the last time when value at $x$ is updated.
    Notice that $\hat{k}$ is a function of $x$, but for the conciseness of the proof, we omit the depnedency on $x$ in notation.
    It follows from Lemma~\ref{lemma:iterationIntervalInclusion} that $\hat{k}\geq\bar{k}$ and $x\in S_{\hat{k}}$.
    Then the following holds:
    \begin{equation}\label{eq:contractiveOverPeriods:1}
        \begin{aligned}
            &|\varTheta^{m_{\bar{k}+P}}_{\bar{k}+P}(x)-\varTheta^*_{\bar{k}+P}(x)|=|\varTheta^{m_{\hat{k}}}_{\hat{k}}(x)-\varTheta^*_{\bar{k}+P}(x)|\\
            \leq&|\varTheta^{m_{\hat{k}}}_{\hat{k}}(x)-\varTheta^*_{\hat{k}}(x)|+|\varTheta^*_{\hat{k}}(x)-\varTheta^*_{\bar{k}+P}(x)|,
        \end{aligned}
    \end{equation}
    where the inequality is a direct result of the triangular inequality of the maximum norm.

    We first analyze the first term in \eqref{eq:contractiveOverPeriods:1}.
    Since $x\in S_{\hat{k}}$, it follows from Lemma~\ref{lemma:errorAlignedOverGrid} that 
    \begin{align*}
        |\varTheta^{m_{\hat{k}}}_{\hat{k}}(x)-\varTheta^*_{\hat{k}}(x)|\leq\beta_{\hat{k}}^{m_{\hat{k}}}\|\varTheta^{m_{\hat{k}-1}}_{\hat{k}-1}-\varTheta^*_{\hat{k}-1}\|_{V_{\hat{k}-1}}+\beta_{\hat{k}}^{m_{\hat{k}}}c_{\hat{k}}.
    \end{align*}
    Then we apply Corollary~\ref{corollary:errorAlignedTowardsPeriodStart} and it arrives at
    \begin{equation}\label{eq:contractiveOverPeriods:3}
        \begin{aligned}
            |\varTheta^{m_{\hat{k}}}_{\hat{k}}(x)-\varTheta^*_{\hat{k}}(x)|\leq&\beta_{\hat{k}}^{m_{\hat{k}}}\|\varTheta^{m_{\bar{k}-1}}_{\bar{k}-1}-\varTheta^*_{\bar{k}-1}\|_{V_{\bar{k}-1}}+\cC_{\bar{k}-1}(x)\\
            =&\beta_{\hat{k}}^{m_{\hat{k}}}\aA_{\bar{k}-1}+\cC_{\bar{k}-1}(x),
        \end{aligned}
    \end{equation}
    where $\cC_{\bar{k}}(x)\triangleq\beta_{\hat{k}}^{m_{\hat{k}}}\sum_{\kappa=\bar{k}}^{\hat{k}}c_{\kappa}$ is a function of $x$, as $\hat{k}$ inside the definition is a function of $x$.

    We then analyze the second term. 
    It follows from the triangular inequality that the second term renders at $\forall x\in V_k$, 
    \begin{align}\label{claim:contractiveOverPeriods:2}
        |\varTheta^*_{\hat{k}+1}(x)-\varTheta^*_{\bar{k}+P}(x)|\leq\bar{\CC}_{\hat{k}+1}(x)+\bar{\CC}_{\bar{k}+P}(x),
    \end{align}
    where $\bar{\CC}_k(x)\triangleq|\varTheta^*_k(x)-\varTheta^*(x)|, \forall x\in V_k$ and $\bar{\CC}_k(x)=0$ otherwise.
    Combining \eqref{eq:contractiveOverPeriods:3} and \eqref{claim:contractiveOverPeriods:2}, we have 
    \begin{equation}\label{eq:contractiveOverPeriods:4}
        \begin{aligned}
            &|\varTheta^{m_{\bar{k}+P}}_{\bar{k}+P}(x)-\varTheta^*_{\bar{k}+P}(x)|\\
            \leq&\beta_{\hat{k}}^{m_{\hat{k}}}\aA_{\bar{k}-1}+\cC_{\bar{k}-1}(x)+\bar{\CC}_{\hat{k}+1}(x)+\bar{\CC}_{\bar{k}+P}(x).
        \end{aligned}    
    \end{equation}

    Since \eqref{eq:contractiveOverPeriods:4} holds for every $x\in V_{\bar{k}+P}$, taking the maximum over $V_{\bar{k}+P}$ renders at
    \begin{equation}\label{eq:contractiveOverPeriods:5}
        \begin{aligned}
            &\aA_{\bar{k}+P}=\|\varTheta^{m_{\bar{k}+P}}_{\hat{k}+P}(x)-\varTheta^*_{\bar{k}+P}(x)\|_{V_{\bar{k}+P}}\\
            \leq&\bar{\beta}_{\bar{k}}\aA_{\bar{k}-1}+\max_{x\in V_{\bar{k}+P}}\Big[\cC_{\bar{k}-1}(x)+\bar{\CC}_{\hat{k}+1}(x)+\bar{\CC}_{\bar{k}+P}(x)\Big],
        \end{aligned}    
    \end{equation}
    where $\displaystyle\bar{\beta}_{\bar{k}+P}\triangleq\max_{\bar{k}\leq k\leq\bar{k}+P}\beta_k^{m_k}$.
    Now we would like to show the maximum term in \eqref{eq:contractiveOverPeriods:5} converges to zero by Lemma~\ref{lemma:maxConvergence}.
    \begin{claim}\label{claim:contractiveOverPeriods:convergentMaximumTerms}
        The following holds true:
        \begin{equation*}
            \begin{aligned}
                \lim_{\bar{k}\to\infty}\max_{x\in V_{\bar{k}+P}}\Big[\cC_{\bar{k}-1}(x)+\bar{\CC}_{\hat{k}+1}(x)+\bar{\CC}_{\bar{k}+P}(x)\Big]=0.
            \end{aligned}
        \end{equation*}
    \end{claim}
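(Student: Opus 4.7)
The plan is to bound the three terms $\cC_{\bar{k}-1}(x)$, $\bar{\CC}_{\hat{k}+1}(x)$, and $\bar{\CC}_{\bar{k}+P}(x)$ separately. Since $\hat{k} = \hat{k}(x)$ lies in the finite window $\{\bar{k}, \ldots, \bar{k}+P\}$, each of the three $x$-wise maxima over $V_{\bar{k}+P}$ is dominated by a maximum of finitely many scalars indexed by a window of $P+1$ consecutive graph indices. Lemma~\ref{lemma:maxConvergence} then reduces the task to showing that, index by index, each summand vanishes as $\bar{k}\to\infty$.

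For $\cC_{\bar{k}-1}(x) = \beta_{\hat{k}}^{m_{\hat{k}}}\sum_{\kappa=\bar{k}}^{\hat{k}} c_\kappa$, I would discard $\beta_{\hat{k}}^{m_{\hat{k}}} \leq 1$ and note that $\hat{k}-\bar{k}\leq P$, so the sum contains at most $P+1$ terms; the task reduces to showing $c_\kappa\to0$. Inserting $\varTheta^*$ by the triangle inequality gives $c_\kappa \leq \|\tilde{\varTheta}^*_{\kappa-1} - \varTheta^*\|_{\XX\setminus\ZZ} + \|\tilde{\varTheta}^{*,-}_\kappa - \varTheta^*\|_{\XX\setminus\ZZ}$, and both summands vanish almost surely by Corollary~\ref{corollary:uniformConvergence}, which establishes the uniform convergence of the perturbed discretized optimal value functions to $\varTheta^*$ on $\XX\setminus\ZZ$ via the set-valued discretization consistency in \cite{cardaliaguet1999set}. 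For the two $\bar{\CC}_k(x) = |\varTheta^*_k(x) - \varTheta^*(x)|$ terms (taken to vanish off $V_k$), the same uniform convergence combined with the perturbation inequality \eqref{eq:perturbed2exact} in Lemma~\ref{lemma:perturbedNormInequality} controls $\max_{x\in V_k}\bar{\CC}_k(x)$ by a supremum over a $d_k$-perturbation in $\XX\setminus\ZZ$, which tends to zero almost surely.

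The main obstacle is the uniform convergence $\|\tilde{\varTheta}^*_k - \varTheta^*\|_{\XX\setminus\ZZ} \to 0$ almost surely; once this cornerstone is available, the remainder of the argument is bookkeeping via the triangle inequality and Lemma~\ref{lemma:maxConvergence}. The cornerstone rests on three ingredients: the spatial dispersion guarantee of Lemma~\ref{lemma:dispersion}, which ensures $V_k$ asymptotically covers $\XX$ within radius $d_k$ with probability one; Assumption~\ref{asmp:resolutions:convergence}, which drives $\epsilon_k \to 0$ and $d_k/\epsilon_k \to 0$ and so yields consistency of the Euler scheme with the continuous dynamics; and the Kruzhkov transform, which keeps all value functions bounded in $[0,1]$ and permits uniform control throughout. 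The exceptional set $\ZZ$ absorbs the measure-small locus where convergence may fail, exactly as anticipated by the remark following Lemma~\ref{lemma:contractiveOverVI}.
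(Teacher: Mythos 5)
Your handling of the $\cC_{\bar{k}-1}$ term is exactly the paper's argument: drop $\beta_{\hat{k}}^{m_{\hat{k}}}\leq 1$, note the sum has at most $P+1$ summands, and show $c_\kappa\to 0$ by inserting $\varTheta^*$ with the triangle inequality and invoking Corollary~\ref{corollary:uniformConvergence}; the final passage to the maximum via Lemma~\ref{lemma:maxConvergence} is also the paper's. The gap is in your treatment of $\bar{\CC}_{\hat{k}+1}(x)$ and $\bar{\CC}_{\bar{k}+P}(x)$. You propose to control $\max_{x\in V_k}|\varTheta^*_k(x)-\varTheta^*(x)|$ by the uniform convergence of the \emph{perturbed minima} together with \eqref{eq:perturbed2exact}, but \eqref{eq:perturbed2exact} points the wrong way: applied with $\bar{\XX}=V_k$ it yields $\|\min_{x'\in(x+d_k\BB)\cap V_k}\varTheta^*_k(x')-\min_{x'\in(x+d_k\BB)\cap V_k}\varTheta^*(x')\|_{\XX}\leq\|\varTheta^*_k-\varTheta^*\|_{V_k}$, i.e., it bounds a perturbed-minimum difference \emph{by} the grid supremum $\|\varTheta^*_k-\varTheta^*\|_{V_k}$, which is precisely the quantity you need to bound. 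Corollary~\ref{corollary:uniformConvergence} only gives one-sided information at samples: for $x\in V_k$ one has $\tilde{\varTheta}^*_k(x)\leq\varTheta^*_k(x)$, so a nearby sample with a small value can mask an arbitrarily large $\varTheta^*_k(x)$, and since $\varTheta^*$ is in general discontinuous (the minimal-time function of, e.g., the Dubins car) you cannot repair this with a continuity argument.

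What is missing is an upper bound on $\varTheta^*_k(x)$ in terms of $\varTheta^*(x)$ at grid points. The paper obtains it from the appendix: Lemma~\ref{lemma:estimatesUpperBound}, which rests on the viability-kernel inclusion of Proposition~2.21 in \cite{cardaliaguet1999set}, gives $\dsT^\infty_p(x)\leq\dsT^*(x)+d_p$ on the grid, and combined with the lower bound (Claim~\ref{claim:upperbound}) this produces Corollary~\ref{corollary:pointwiseConvergenceOnGrid}, the pointwise convergence $\varTheta^*_k(x)\to\varTheta^*(x)$ for sampled $x$. The paper's proof of the claim cites exactly Corollary~\ref{corollary:pointwiseConvergenceOnGrid} for the two $\bar{\CC}$ terms and then concludes with Lemma~\ref{lemma:maxConvergence}. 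So your overall architecture (termwise limits plus Lemma~\ref{lemma:maxConvergence}) matches the paper, but the $\bar{\CC}$ step genuinely needs Corollary~\ref{corollary:pointwiseConvergenceOnGrid} (hence Lemma~\ref{lemma:estimatesUpperBound}); it cannot be recovered from Corollary~\ref{corollary:uniformConvergence} plus \eqref{eq:perturbed2exact} as written.
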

    \begin{proof}
        We first show $\cC_{\bar{k}-1}(x)$ converges to zero for every $x\in V_{\bar{k}+P}$.    
        For notational simplicity, we use $\cC_{\bar{k}}(x)$ instead of $\cC_{\bar{k}-1}(x)$ in this proof.
        Fix $x\in V_{\bar{k}+P}$.
        Since $\beta_{k}\leq 1$, we have $\cC_{\bar{k}}(x)\leq\sum_{\kappa=\bar{k}}^{\bar{k}+P}c_{\kappa}$, a finite sum of $c_{\kappa}$.
        It follows from Corollary~\ref{corollary:uniformConvergence} that 
        \begin{align*}
            &\lim_{\kappa\to+\infty}c_{\kappa}=\lim_{\kappa\to+\infty}\|\tilde{\varTheta}^*_{\kappa-1}-\tilde{\varTheta}^{*,-}_{\kappa}\|_{\XX\setminus\ZZ}\\
            \leq&\lim_{\kappa\to+\infty}\|\tilde{\varTheta}^*_{\kappa-1}-\varTheta^*\|_{\XX\setminus\ZZ}+\lim_{\kappa\to+\infty}\|\varTheta^*-\tilde{\varTheta}^{*,-}_{\kappa}\|_{\XX\setminus\ZZ}=0.
        \end{align*}
        Then we have 
        \begin{align*}
            \lim_{\bar{k}\to+\infty}\bar{\cC}_{\bar{k}}(x)\leq\lim_{\bar{k}\to+\infty}\sum_{\kappa=\bar{k}}^{\bar{k}+P}c_{\kappa}=\sum_{i=0}^{P}\lim_{\bar{k}\to+\infty}c_{\bar{k}+i}=0.
        \end{align*}
        Since $\bar{\cC}_{\bar{k}}(x)\geq0$, this completes the proof of $\lim_{\bar{k}\to+\infty}\cC_{\bar{k}}(x)=0$.
        
        Then we show the rest two terms also converge to zero.
        It follows from Corollary~\ref{corollary:pointwiseConvergenceOnGrid} that both $\bar{\CC}_{\hat{k}+1}(x)$ and $\bar{\CC}_{\bar{k}+P}(x)$ converges to zero for every $x\in V_{\bar{k}+P}$ as $\bar{k}\to+\infty$.
        Then it follows from Lemma~\ref{lemma:maxConvergence} that $\lim_{\bar{k}\to\infty}\max_{x\in V_{\bar{k}+P}}\Big[\cC_{\bar{k}-1}(x)+\bar{\CC}_{\hat{k}+1}(x)+\bar{\CC}_{\bar{k}+P}(x)\Big]=0.$      
        This completes the proof.
    \end{proof}
    For brevity, we rewrite \eqref{eq:contractiveOverPeriods:5} as
    \begin{equation}\label{eq:contractiveOverPeriods:6}
        \begin{aligned}
            \aA_{\breve{k}(\tau)}
            \leq&\bar{\beta}_{\breve{k}(\tau)}\aA_{\breve{k}(\tau-1)}+\dD_{\breve{k}(\tau)},
        \end{aligned}
    \end{equation}
    where $\breve{k}(\tau)\triangleq\bar{k}+\tau(P+1)$ and $\displaystyle\dD_{\bar{k}+P}\triangleq\max_{x\in V_{\bar{k}+P}}\Big[\cC_{\bar{k}-1}(x)+\bar{\CC}_{\hat{k}+1}(x)+\bar{\CC}_{\bar{k}+P}(x)\Big]$.
    Notice that $\lim_{k\to+\infty}\dD_k=0$ as a result of Claim~\ref{claim:contractiveOverPeriods:convergentMaximumTerms}.
    By Assumption~\ref{asmp:discountWindow}, we continue to relax $\aA_{\breve{k}(\tau-1)}$ over $T$ intervals, each of which consists of $P+1$ graphs, and it holds for any $\zeta\in\natzero$ that
    \begin{equation}
        \begin{aligned}
            &\aA_{\breve{k}((\zeta+1)T)}\\
            \leq&\prod_{\tau=\zeta T+1}^{(\zeta+1)T}\bar{\beta}_{\breve{k}(\tau)}\aA_{\breve{k}(\zeta T)}+\sum_{\tau=\zeta T+1}^{(\zeta+1)T}\dD_{\breve{k}(\tau)}\prod_{\tau'=\tau+1}^{(\zeta+1)T}\bar{\beta}_{\breve{k}(\tau')}\\
            \leq&\bar{\beta}\aA_{\breve{k}(\zeta T)}+\breve{\dD}_{\zeta T},
        \end{aligned}
    \end{equation}   
    where $\breve{\dD}_{\zeta T}\triangleq\sum_{\tau=\zeta T+1}^{(\zeta+1)T}\dD_{\breve{k}(\tau)}\prod_{\tau'=\tau+1}^{(\zeta+1)T}\bar{\beta}_{\breve{k}(\tau')}$ is a finite sum of $\dD_k$.
    Since $\displaystyle\lim_{\tau\to+\infty}\dD_{\breve{k}(\tau)}=\lim_{k\to+\infty}\dD_k=0$, $\breve{\dD}_{\zeta T}$ converges to $0$ as $\zeta\to+\infty$.
    It follows from Lemma~VII.5 in \cite{zhao2020pareto} that $\displaystyle\lim_{k\to+\infty}\aA_k=\lim_{\zeta\to+\infty}\aA_{\breve{k}(\zeta T)}=0$.
    It follows from Corollary~\ref{corollary:uniformConvergence} that $\lim_{\bar{k}\to+\infty}\bB_{\bar{k}}=0$.
    Together with \eqref{eq:contractiveOverPeriods:01}, we have 
    \begin{align*}
        &\lim_{k\to+\infty}\|\min_{x'\in(x+d_k\BB)\cap V_k}\varTheta^{m_k}_k-\varTheta^*\|_{\XX\setminus\ZZ}\\
        =&\lim_{\bar{k}\to+\infty}\|\min_{x'\in(x+d_{\bar{k}+P}\BB)\cap V_{\bar{k}+P}}\varTheta^{m_{\bar{k}+P}}_{\bar{k}+P}-\varTheta^*\|_{\XX\setminus\ZZ}\\
        \leq&\lim_{\bar{k}\to+\infty}\aA_{\bar{k}+P}+\lim_{\bar{k}\to+\infty}\bB_{\bar{k}+P}=0.
    \end{align*}
    This completes the proof.
\end{proof}

\section{Numerical Results and Discussion}\label{sec:Results}
In this section, we first present a computation-saving adaptation of the proposed iPolicy algorithm for a class of simpler but widely used dynamic systems.
Then we present results of numerical experiments for three differential systems - point mass, simple car and Dubins car. 
Using the point mass, we show the correctness of the theoretical results. 
We show several simulations for unicycle robots in obstacle-free as well as cluttered environment to perform complex maneuvers like automatic parking. 
The experiments presented in this section are done to investigate the following characteristics of the proposed iPolicy algorithm.
\begin{enumerate}
    \item Can iPolicy recover the optimal value function for systems with differential constraints?
    \item Can the state-based feedback controllers handle complex motion maneuvers in the presence of obstacles?
    \item Can we demonstrate the anytime and incremental nature of the algorithm? Is the algorithm able to monotonically improve the plan towards the goal?
\end{enumerate}
In the rest of this section, we will try to answer the above questions with systems of increasing complexity.

\begin{figure*}[h] 
	\centering
	\subfloat[The estimated value function with $761$ samples after $21.8$ seconds.]{
		\label{fig:point-mass:1}
		\includegraphics[width=0.48\textwidth]{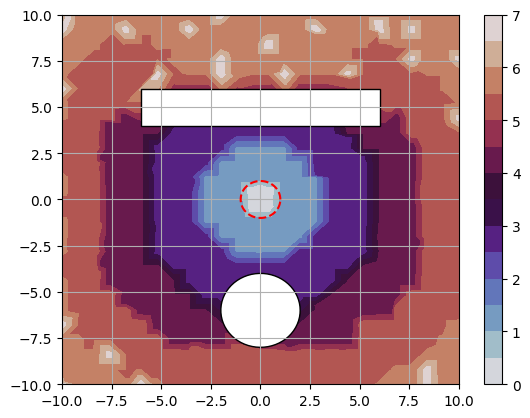}} 
	\subfloat[The estimated value function with $1,934$ samples after $205.7$ seconds.]{
		\label{fig:point-mass:2}
		\includegraphics[width=0.48\textwidth]{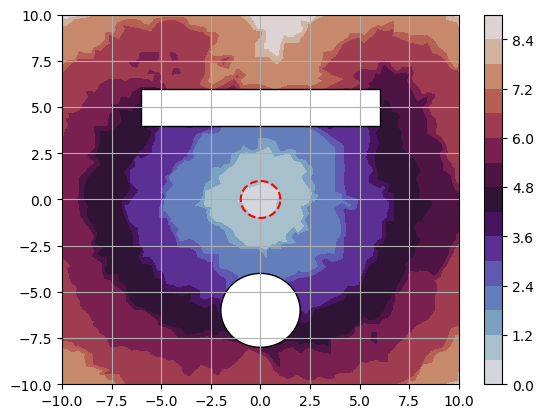}}\\
    \subfloat[The estimated value function with $16,979$ samples after $20062.7$ seconds.]{
		\label{fig:point-mass:3}
		\includegraphics[width=0.48\textwidth]{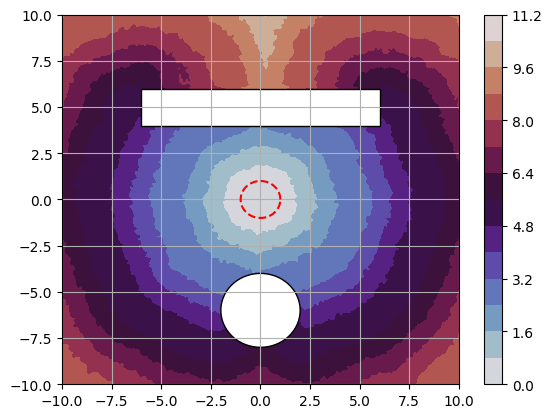}}
		\subfloat[The root of mean squared errors over total number of samples on $5$ runs.]{
		\label{fig:point-mass:error}
		\includegraphics[width=0.48\textwidth]{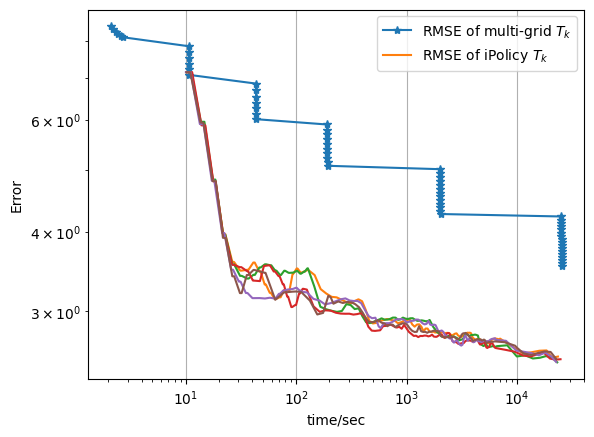}}
	\caption{The estimated value function obtained for a point mass in the presence of obstacles and its convergence with the computational time. Goal region for the point-mass is centered at $(0,0)$ with red dash line. The colorbar in the plots represent the approximate time to the goal region. Figure~\ref{fig:point-mass:error} shows the errors over $5$ independent runs. As seen in Figure~\ref{fig:point-mass:error}, iPolicy achieves faster convergence compared to the multigrid method.}
	\label{fig:point-mass}
\end{figure*}

\subsection{Computation-saving query}
In this subsection, we introduce a class of simpler but widely used dynamical systems in the robotics community and an adaptation for iPolicy that significantly reduces computations in practice but maintains the optimality in Theorem~\ref{theorem:mainResult}.
Consider a class of systems that, in addition to the satisfaction of Assumption~\ref{asmp:system}, also satisfy the following assumption:
\begin{assumption}\label{asmp:stoppable}
    System~\eqref{eqn:systemeqn} satisfies that for any $x\in\XX, \exists u\in\UU$ s.t. $f(x, u)=0$.
\end{assumption}
In other words, Assumption~\ref{asmp:stoppable} indicates the robot can instantly stop at any state.
The following lemma shows the one-hop neighbors $F_k(x)$ is monotonically decreasing in $k$.
\begin{lemma}\label{lemma:simpler}
    Suppose Assumptions~\ref{asmp:system}, \ref{asmp:resolutions} and \ref{asmp:stoppable} hold.
    Then it holds that $F_{k'}(x)\subseteq F_k(x)\cup(V_{k'}\setminus V_k), \forall x\in\XX_{\mathrm{free}}$ and $k'\geq k$.
\end{lemma}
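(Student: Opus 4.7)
The plan is to show that any point in $F_{k'}(x)$ that is not a newly added sample must have already been a one-hop neighbor at iteration $k$. Fix $x' \in F_{k'}(x)$ and split into two cases. If $x' \in V_{k'} \setminus V_k$, the containment is immediate since this set is on the right-hand side. So the substantive case is $x' \in V_k$, where I must verify $x' \in F_k(x)$.

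In this case, unpack the definition \eqref{eq:onehops}: there exist $v \in \bigcup_{u \in \UU} f(x, u)$ and $\delta$ with $\|\delta\| \leq \rho(d_{k'}, \epsilon_{k'})$ such that $x' - x = \epsilon_{k'} v + \delta$. The key manoeuvre is a velocity rescaling. Let $t \triangleq \epsilon_{k'} / \epsilon_k$, which lies in $(0, 1]$ by the monotonicity in Assumption~\ref{asmp:resolutions:convergence}. By Assumption~\ref{asmp:stoppable}, $0 \in \bigcup_{u \in \UU} f(x, u)$, and by Assumption~\ref{asmp:convexity} this set is convex, so $v' \triangleq t v = t v + (1-t)\cdot 0$ also belongs to $\bigcup_{u \in \UU} f(x, u)$. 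By construction $\epsilon_k v' = \epsilon_{k'} v$, giving $x' - x = \epsilon_k v' + \delta$.

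Next I would check the perturbation and free-region inclusions transport across $k$. Since $\rho(d, \epsilon) = 2d + l\epsilon(d + M\epsilon)$ is non-decreasing in each argument and $d_{k'} \leq d_k$, $\epsilon_{k'} \leq \epsilon_k$ by Assumption~\ref{asmp:resolutions:convergence}, we have $\|\delta\| \leq \rho(d_{k'}, \epsilon_{k'}) \leq \rho(d_k, \epsilon_k)$. The same monotonicity gives $\XX_{\mathrm{free}} + d_{k'}\BB \subseteq \XX_{\mathrm{free}} + d_k \BB$, so $x' \in \XX_{\mathrm{free}} + d_k \BB$. Combined with $v' \in \bigcup_{u \in \UU} f(x, u)$ and $x' \in V_k$, the three pieces of \eqref{eq:onehops} are all satisfied at index $k$, yielding $x' \in F_k(x)$.

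The principal obstacle, and indeed the sole reason the extra Assumption~\ref{asmp:stoppable} is invoked, is the velocity rescaling in the second paragraph: without $0 \in \bigcup_{u \in \UU} f(x, u)$, there is no way to shrink the displacement vector $\epsilon_{k'} v$ into the smaller time step $\epsilon_k$ while staying in the admissible vectogram, since Assumption~\ref{asmp:convexity} alone only lets us interpolate between existing elements. Every other step is a straightforward monotonicity argument inherited from Assumption~\ref{asmp:resolutions}.
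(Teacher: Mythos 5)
Your proposal is correct and follows essentially the same route as the paper's proof: the key step in both is the velocity rescaling $\epsilon_{k'}f(x,u') = \tfrac{\epsilon_{k'}}{\epsilon_k}\,\epsilon_k f(x,u')$, justified by combining Assumption~\ref{asmp:stoppable} (which puts $0$ in the vectogram) with the convexity in Assumption~\ref{asmp:convexity}, followed by the monotonicity of $d_k$, $\epsilon_k$ and $\rho(d_k,\epsilon_k)$ to transport the perturbation-ball and free-region inclusions from index $k'$ to $k$. The only cosmetic difference is that you case-split on $x'\in V_{k'}\setminus V_k$ versus $x'\in V_k$ up front, whereas the paper verifies the three defining conditions of $F_k(x)$ directly and disposes of the vertex-set condition via $V_k\cup(V_{k'}\setminus V_k)=V_{k'}$.
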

\begin{proof}
    It follows from Assumption~\ref{asmp:resolutions} that $d_k, \epsilon_k$ and $\rho(d_k, \epsilon_k)$ are monotonically decreasing in $k$.
    Recall the definition of $F_k(x)$ in \eqref{eq:onehops} follows $F_k(x)=(x+\epsilon_k\bigcup_{u\in\UU}f(x, u)+\rho(d_k, \epsilon_k)\BB)\cap V_k\cap(\XX_{\mathrm{free}}+d_k\BB)$.
    Fix $x\in\XX_{\mathrm{free}}$ and $x'\in F_{k'}(x)$.
    We proceed to prove $x'\in F_k(x)\cup(V_{k'}\setminus V_k)$ by showing $x'$ is included in each part of $F_k(x)$
    
    We first show $x'\in x+\epsilon_k\bigcup_{u\in\UU}f(x, u)+\rho(d_k, \epsilon_k)\BB$.
    Since $x'\in x+\epsilon_{k'}\bigcup_{u\in\UU}f(x, u)+\rho(d_{k'}, \epsilon_{k'})\BB$, there exist $u'\in\UU, b\in\BB$ and $\alpha\in[0, \rho(d_{k'}, \epsilon_{k'})]$ s.t. $x'=x+\epsilon_{k'}f(x, u')+\alpha b$.
    It follows from Assumptions~\ref{asmp:convexity} and \ref{asmp:stoppable} that for any $u\in\UU$ and $\zeta\in[0,1]$, $\zeta f(x, u)\in\bigcup_{u\in\UU}f(x, u)$.
    By the decreasing monotonicity of $\epsilon_k$ in $k$, this implies $\epsilon_{k'}f(x, u')=\frac{\epsilon_{k'}}{\epsilon_k}\epsilon_kf(x,u')\in\epsilon_k\bigcup_{u\in\UU}f(x,u)$.
    It again follows from the decreasing monotonicity of $\rho(d_k, \epsilon_k)$ that $\alpha b\in \rho(d_k, \epsilon_k)\BB$.
    In summary, we have $x'\in x+\epsilon_k\bigcup_{u\in\UU}f(x, u)+\rho(d_k, \epsilon_k)\BB$.

    Then we show $x'\in V_k\cup(V_{k'}\setminus V_k)$. 
    Since $V_k\cup(V_{k'}\setminus V_k)=V_{k'}$, this trivially holds.

    Finally, we show $x'\in\XX_{\mathrm{free}}+d_k\BB$.
    Since $x'\in\XX_{\mathrm{free}}+\rho(d_{k'}, \epsilon_{k'})\BB$, there exist $x_f\in\XX_{\mathrm{free}}, b\in\BB$ and $\alpha\in[0, d_{k'}]$ that $x'=x_f+\alpha b$.
    It follows from the decreasing monotonicity of $d_k$ that $\alpha b\in d_k\BB$.
    The proof of $x'\in\XX_{\mathrm{free}}+d_k\BB$ is completed.

    The above arguments show that $x'\in F(x)\cup(V_{k'}\setminus V_k)$. 
    Since this holds for every $x'\in F_{k'}(x)$ and $x\in\XX_{\mathrm{free}}$, the proof is completed.
\end{proof}

\begin{algorithm}[t] \small
    \caption{Computation-saving query of $F$}\label{alg:queryOneHops}
        \textbf{Input:} Graph index $k$, newly added sample $x_{\mathrm{new}}$\;
        Compute $F(x_{\mathrm{new}})$ as \eqref{eq:onehops}\;\label{alg:queryOneHops:xnew}
        \For{$x\in V_k\setminus\{x_{\mathrm{new}}\}$}{\label{alg:queryOneHops:xnew_update:begin}
            \If{$x_{\mathrm{new}}\in x+\epsilon_k\bigcup_{u\in\UU}f(x,u)+\rho(d_k, \epsilon_k)\BB$}{
                $F(x)\leftarrow F(x)\cup\{x_{\mathrm{new}}\}$\;
            }
        }\label{alg:queryOneHops:xnew_update:end}
        \For{$x\in V_k\setminus\{x_{\mathrm{new}}\}$}{\label{alg:queryOneHops:query:begin}
            \For{$x'\in F(x)$}{
                \If{$x'\notin(x+\epsilon_k\bigcup_{u\in\UU}f(x, u)+\rho(d_k, \epsilon_k)\BB)$}{
                    $F(x)\leftarrow F(x)\setminus\{x'\}$\;
                }
                \If{$x'\notin(\XX_{\mathrm{free}}+d_k\BB)$}{
                    $F(x)\leftarrow F(x)\setminus\{x'\}$\;
                }
            }
        }\label{alg:queryOneHops:query:end}
        \Return $F$
\end{algorithm}

Based on Lemma~\ref{lemma:simpler}, we can construct $F_k(x)$ in line~\ref{alg:backprop:onehops} of Algorithm~\ref{alg:backprop} \texttt{BackProp} in a computationally efficient manner, where the execution of the computationally heaviest part, querying $V_k$ within a range of given a random state $x$, is reduced.
Specifically, we treat $F_k(x)$ as a set of samples independent of graph index $k$; i.e., $F_k(x)=F(x)$.
See Algorithm~\ref{alg:queryOneHops}.
When $x$ is added to $V_k$ as $x_{\mathrm{new}}$, we initialize $F(x)$ as its definition \eqref{eq:onehops} using the resolutions $\epsilon_k$ and $d_k$ at that time, and update the one-hop neighbors of every other sampled state as lines~\ref{alg:queryOneHops:xnew_update:begin} to \ref{alg:queryOneHops:xnew_update:end}; when querying for the one-hop neighbors of $x$ in a later iteration, we update $F(x)$ by removing faraway one-hop neighbors according to new resolutions as lines \ref{alg:queryOneHops:query:begin} to \ref{alg:queryOneHops:query:end}.
In contrast to the vanilla query command as line~\ref{alg:backprop:onehops} in \texttt{BackProp}, where searching the whole $V_k$ is executed every time when $F(x)$ is queried, the new adaptation skips this step and saves computations.
This adaptation is leveraged throughout the results in Section~\ref{sec:Results}.

\subsection{Point mass}\label{sec:Results:pointmass}
In this subsection, we try to show the correctness of the algorithm and the rate of convergence through a number of numerical simulations for a point mass.
The point mass dynamic follows a single integrator $\begin{bmatrix}\dot{x}\\\dot{y}\end{bmatrix}=\begin{bmatrix}u_1\\u_2\end{bmatrix}$, where the velocity on each coordinate can be directly manipulated.
The control set is a unit disc; i.e., the maximum velocity is restricted to $1$.
The environment is a square of size $20\times20$ populated with a rectangular obstacle and a circular obstacle, and the robot desires to rest at a central circle.
We used $\epsilon_k=(5d_k)^{2/3}$ and $\rho(\epsilon_k, d_k)=2d_k$ for computational simplicity.
The experiment used $P=50$ and $m_k=500$; i.e., every vertex was updated at-most after $50$ iterations and the recursion depth is a constant of $500$.
Notice that a constant recursion depth does not necessarily satisfy Assumption~\ref{asmp:discountWindow}, but the results still show its applicability in practice.

Figures~\ref{fig:point-mass:1} to \ref{fig:point-mass:3} qualitatively show the estimated value function over the whole environment after the computations of $21.8$ seconds, $205.7$ seconds and $20062.7$ seconds, respectively, where the goal region is marked by the red dash circle. 
It is observed that the estimate of minimal traveling time to the goal region is monotonically improving.

Quantitative characterization of the approximation error is available, as the minimal traveling time function $\dsT^*$ can be analytically computed for a point mass in the presence of regularly shaped obstacles.
The approximation error is computed between the estimated value function $\dsT_k$ and the ground truth $\dsT^*$ and is characterized by the root of mean squared error (RMSE), which follows as $RMSE\triangleq\sqrt{\frac{1}{|V_k|}\sum_{x\in V_k}|\dsT_k(x)-\dsT^*(x)|^2}$.
Notice that the estimated value function $\dsT_k$ can be positive infinity when value iterations are not sufficiently executed and we exclude them in the comparison.
We compare iPolicy with the multi-grid method, the single-robot version of \cite{zhao2020pareto}.
We use $4$ grids with spatial resolutions $\{0.8,0.4,0.2,0.1\}$ and compute the temporal resolutions in the same way as \cite{zhao2020pareto} for the multi-grid method.
All other parameters share the same values with those of iPolicy.
Figure~\ref{fig:point-mass:error} quantitatively visualizes RMSEs over computational times for $5$ independently executed experiments.
It is shown that the approximation error of iPolicy monotonically and continuously decreases in general as more computational time is consumed, and iPolicy obtains lower approximation errors compared to the multi-grid method within the allowed computational resources.

\subsection{Simple car}
In this subsection, we first show the optimality results with the simple car model.
The simple car model follows the unicycle dynamic, a well-known driftless affine system in the configuration space $\real^2\times \mathds S^1$~\cite{L06}. 
The unicycle dynamics is given below:
\begin{align*}
    \begin{bmatrix}
        \dot{x}\\\dot{y}\\\dot{\theta}
    \end{bmatrix} 
    = \begin{bmatrix}
        \cos\theta & 0  \\
        \sin\theta & 0  \\
        0 & 1
    \end{bmatrix}
    \begin{bmatrix}
        u_1\\u_2
    \end{bmatrix},
\end{align*}
where $\theta$ is the steering angle and is a circle $\mathds S^1$.
For computational simplicity, we redefine $\mathds S^1$ by $\theta\in[-\Theta, \Theta]/\sim$, where $\sim$ indicates $-\Theta$ and $\Theta$ are identical, and by using the Carnot-Carath{\'e}odory distance.
To avoid the bias over different coordinates in computing distances, we let the region of interest be a square of $20\times20$ and let $\Theta=10$.
The control set of the simple car is given by $U=[-1,1]\times [-1,1]$, and the goal set is a sub-Riemannian ball of radius $1$ centered at $(0,0,0)$.
Figure~\ref{fig:simple-car-value} shows the estimated value function after approximately $13$k seconds of computations over $5,000$ samples on the plane, where the orientation of the car is roughly parallel to the orientation at goal ($\theta_{\mathrm{goal}}=0$) with a maximum difference of $10^{\circ}$.
The shape of the level sets around the goal depict that the minimal traveling time is higher in the transverse direction when compared to that along the longitudinal direction parallel to the orientation of the goal state, as the car has a non-zero turn radius and it cannot move in the  transverse direction.

\begin{figure}[h] 
	\centering
    \includegraphics[width=0.5\textwidth]{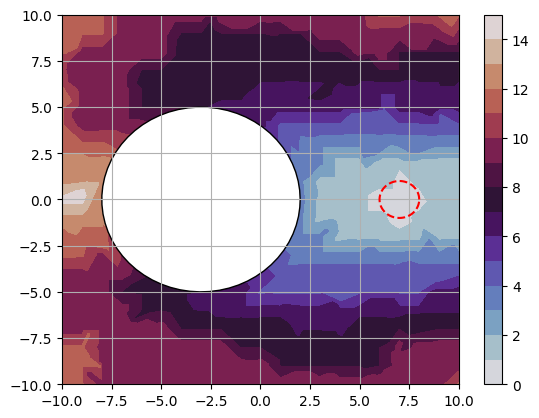}
	\caption{The estimated value function obtained by the proposed incremental algorithm for simple car obtained over $5,000$ samples with orientation $\theta=0$.
    }
	\label{fig:simple-car-value}
\end{figure}

\begin{figure}[h]
	\centering
	\subfloat[The path of head-in parking]{
		\label{fig:simple-car-parking:head-in}
		\includegraphics[width=0.45\textwidth]{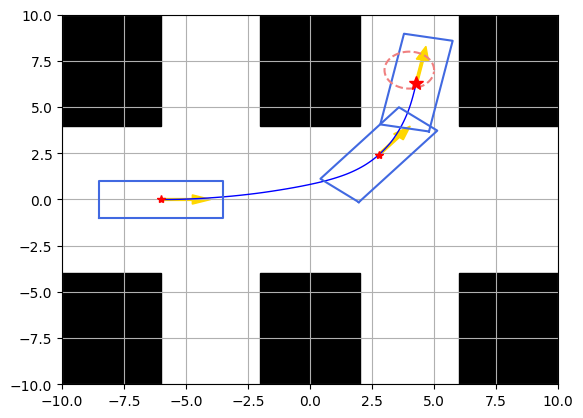}} \\
	\subfloat[The path of parallel parking]{
		\label{fig:simple-car-parking:parallel}
		\includegraphics[width=0.45\textwidth]{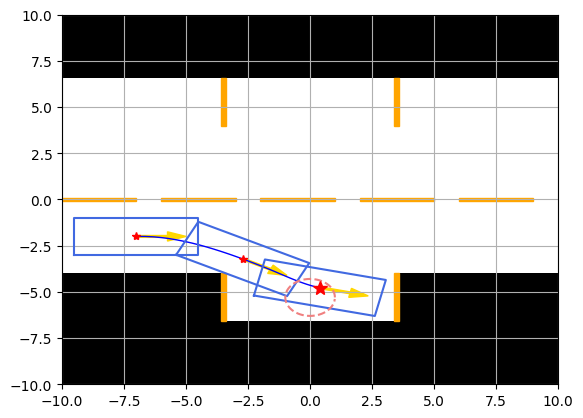}}
	\caption{Trajectory of the simple car accomplishing automated parking in the cluttered environment. Yellow arrows denote the orientation of the car while red stars denote the center. Goal region is marked by a pink dash circle.}
	\label{fig:simple-car-parking}
\end{figure}

In another simulation, we demonstrate the capability of iPolicy of handling automated parking, including head-in parking and parallel parking, of the simple car in the cluttered environment and show anytime property of the algorithm by visualizing the resulting trajectory under the guidance of the computed controller.
See Figure~\ref{fig:simple-car-parking:head-in} and Figure~\ref{fig:simple-car-parking:parallel}.
In the first simulation, a simple car desires to park in a one-way $90^\circ$ head-in-only parking lot, and it starts from $(-6,0,0)$ and needs to safely stop at $(4,7,\pi/2)$.
In the second simulation, we simulate the parallel parking of a simple car on a two-way street, where the car starts from $(-7,-2,0)$ and needs to park at $(4,7,\pi/2)$ without causing any collisions.
In Figure~\ref{fig:simple-car-parking:head-in}, iPolicy uses $290$ samples and runs $53.71$ seconds to compute a safe and feasible controller for head-in parking in Figure~\ref{fig:simple-car-parking:head-in}.
In Figure~\ref{fig:simple-car-parking:parallel}, iPolicy uses $200$ samples and runs $22.23$ seconds to command the simple car to safely park in parallel.
Both cases imply the quick feasibility of iPolicy.
Together with Theorem~\ref{theorem:mainResult} and Figure~\ref{fig:point-mass:error}, where the increasing optimality is verified, the anytime property of iPolicy is demonstrated.
That is, iPolicy can produce a feasible solution given a short computational time, and can continuously optimize its solution if more computational time is given.

\begin{figure}[h] 
	\centering
    \includegraphics[width=0.5\textwidth]{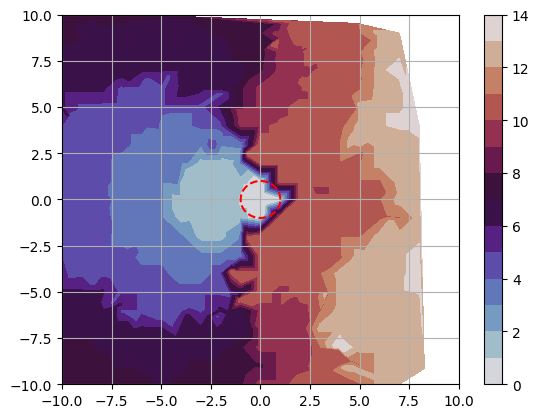}
	\caption{The estimated value function obtained by iPolicy for Dubins car system obtained over $7500$ samples. As the Dubins car can't move backwards, the value functions are discontinuous and it results in more complex reachable sets in the sub-Riemannian manifold.}
	\label{fig:dubins-car}
\end{figure}

\subsection{Dubins car}
In this subsection, we show Algorithm~\ref{algorithm:iFPA} can handle more complex dynamic models.
In particular, we also show the estimate of the value functions obtained using iPolicy for the Dubins car whose dynamics also lies in the $\real^2 \times \mathds{S}^1$ space and is described by the same set of equations as the simple car; however, it can only move forward.
Different from the Dubins car in \cite{LP14} that can instantly stop, the Dubins car can only move forward at a constant speed, making its control set a singleton $U=\{1\}\times [-1,1]$. 
The Dubins car is another example of nonholonomic system but without a reverse gear or a brake \cite{KF13, L06}. 
Moreover, the Dubins car does not satisfy Assumption~\ref{asmp:stoppable}; however, we still apply Algorithm~\ref{alg:queryOneHops} in implementation and later result will show this quick adaptation does not affect the 
The goal set in this case is the sub-Riemannian ball of radius $1$ around the point $(0,0,0)$ and $R=1$ (again the goal is the point where we reach first inside the ball).
In Figure~\ref{fig:dubins-car}, the estimated cost-to-go on a plane are shown where the orientation of the car is parallel to the orientation at goal $\theta_{\rm goal}=0$ with a maximum difference of $10^\circ$. 
The estimate of the value functions show the more complex level sets obtained for the Dubins car as it can not move backwards. 
Unlike the simple car, the level sets are not rotationally symmetric about the origin.
Notice that area on the right-hand side of $x=7.5$ in Figure~\ref{fig:dubins-car} is mostly empty; this indicates iPolicy cannot find a solution in this area, mostly because the Dubins car starting from this area cannot reach the goal region without exiting the region of interest.
\begin{remark}
    Applying iPolicy to larger size environment does not require any additional modifications.
    However, at the early stage of computations, smaller size of samples may not display the desired probabilistic properties and some minor revisions may be needed for quick feasibility. 
    Specifically, we can revise $\epsilon_k$ s.t. the one-hop neighbor of a sample does not exit the region of interest; i.e., $x+f(x, u)\epsilon_k+\rho(\epsilon_k, d_k)\BB\subseteq\XX$ for most $x\in V_k$ and $u\in\UU$.
\end{remark}

\section{Conclusions and Future Work}\label{sec:conclusions}
In this paper we presented an algorithm for feedback motion planning of dynamical systems. 
Using results from set-valued control theory and sampling-based algorithms, we presented the iPolicy algorithm which can be used for feedback motion planning of robotic systems and guarantees asymptotic optimality of the value functions. 
Numerical experiments with point mass, a simple car model and the Dubins car model are presented where the time-to-go costs are recovered using iPolicy.
We discussed an asynchronous update rule of the value functions for computational efficiency of the algorithms and proved that it retains asymptotic optimality guarantees.

It was found that majority of the time of the algorithm is used in local connection and collision checking using the sub-Riemannian metric. 
Computational time is a critical bottleneck for the class of algorithms presented here and needs further research. 
Our future work includes the research on speeding up the calculations.  Using  parallel machine learning algorithms which can assist in connecting the states sampled during state-space construction using sampling could potentially help in relaxing the compute times. 
Learning-based algorithms \cite{chi2023diffusion} are shown to be efficient in addressing high-dimensional problems, which has the potential to help reduce computations of planners and makes it more applicable in practice. 
Furthermore, use of iPolicy for sub-optimal, trajectory-centric control of underactuated systems~\cite{majumdar2013control, kolaric2020local} could also be an interesting topic of future research. Extension to affine or complementarity dynamical systems could be another interesting direction of research~\cite{shirai2023covariance, 9811812, yang2023efficient}.

\bibliographystyle{ieeetr}
\bibliography{references} 

\begin{IEEEbiography}[{\includegraphics[width=1in,height=1.25in,clip,keepaspectratio]{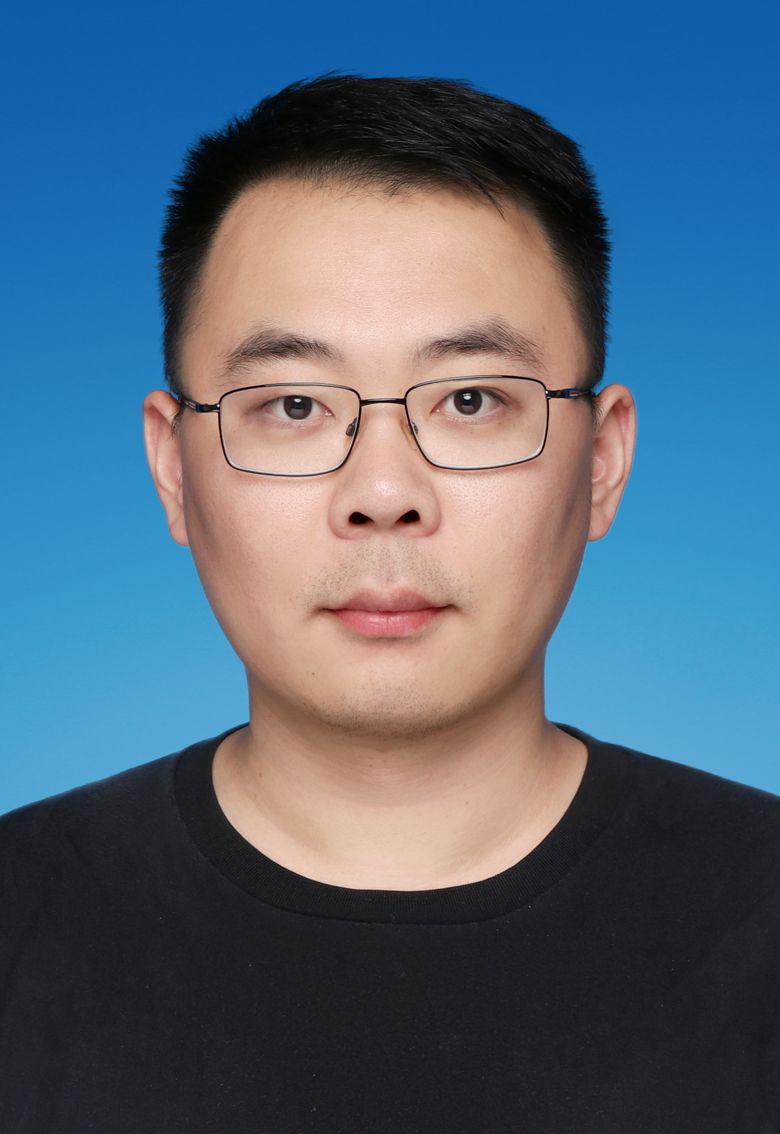}}]%
{Guoxiang Zhao}
is currently an Associate Professor in the School of Future Technology at Shanghai University, Shanghai, China. He received PhD in Electrical Engineering from Pennsylvania State University in May 2022. He received a M.S. degree in Mechanical Engineering from Purdue University, West Lafayette, IN, USA and a B.E. degree from Shanghai Jiao Tong University, Shanghai, China. He was a Software Engineer from 2022 to 2023 with TuSimple, Inc. in San Diego, CA, USA. His research interests are in the areas of motion planning and multi-robot systems. 
\end{IEEEbiography}

\begin{IEEEbiography}[{\includegraphics[width=1in,height=1.25in,clip,keepaspectratio]{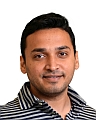}}]%
{Devesh K. Jha}
is currently a Principal Research Scientist at Mitsubishi Electric Research Laboratories (MERL) in Cambridge, MA, USA. He received PhD in Mechanical Engineering from Penn State in Decemeber 2016. He received M.S. degrees in Mechanical Engineering and Mathematics also from Penn State. His research interests are in the areas of Machine Learning, Robotics and Deep Learning. He is a recipient of several best paper awards including the Kalman Best Paper Award 2019 from the American Society of Mechanical Engineers (ASME) Dynamic Systems and Control Division. He is a senior member of IEEE and an associate editor of IEEE Robotics and Automation Letters (RA-L).
\end{IEEEbiography}

\begin{IEEEbiography}[{\includegraphics[width=1in,height=1.25in,clip,keepaspectratio]{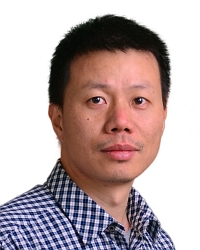}}]%
{Yebin Wang}
(M'10-SM'16) received the B.Eng. degree in Mechatronics Engineering from Zhejiang University, Hangzhou, China, in 1997, M.Eng. degree in Control Theory \& Control Engineering from Tsinghua University, Beijing, China, in 2001, and Ph.D. in Electrical Engineering from the University of Alberta, Edmonton, Canada, in 2008. 

Dr. Wang has been with Mitsubishi Electric Research Laboratories in Cambridge, MA, USA, since 2009, and now is a Senior Principal Research Scientist. From 2001 to 2003 he was a Software Engineer, Project Manager, and Manager of R\&D Dept. in industries, Beijing, China. His research interests include nonlinear control and estimation, optimal control, adaptive systems and their applications including mechatronic systems.

\end{IEEEbiography}

\begin{IEEEbiography}[{\includegraphics[width=1in,height=1.25in,clip,keepaspectratio]{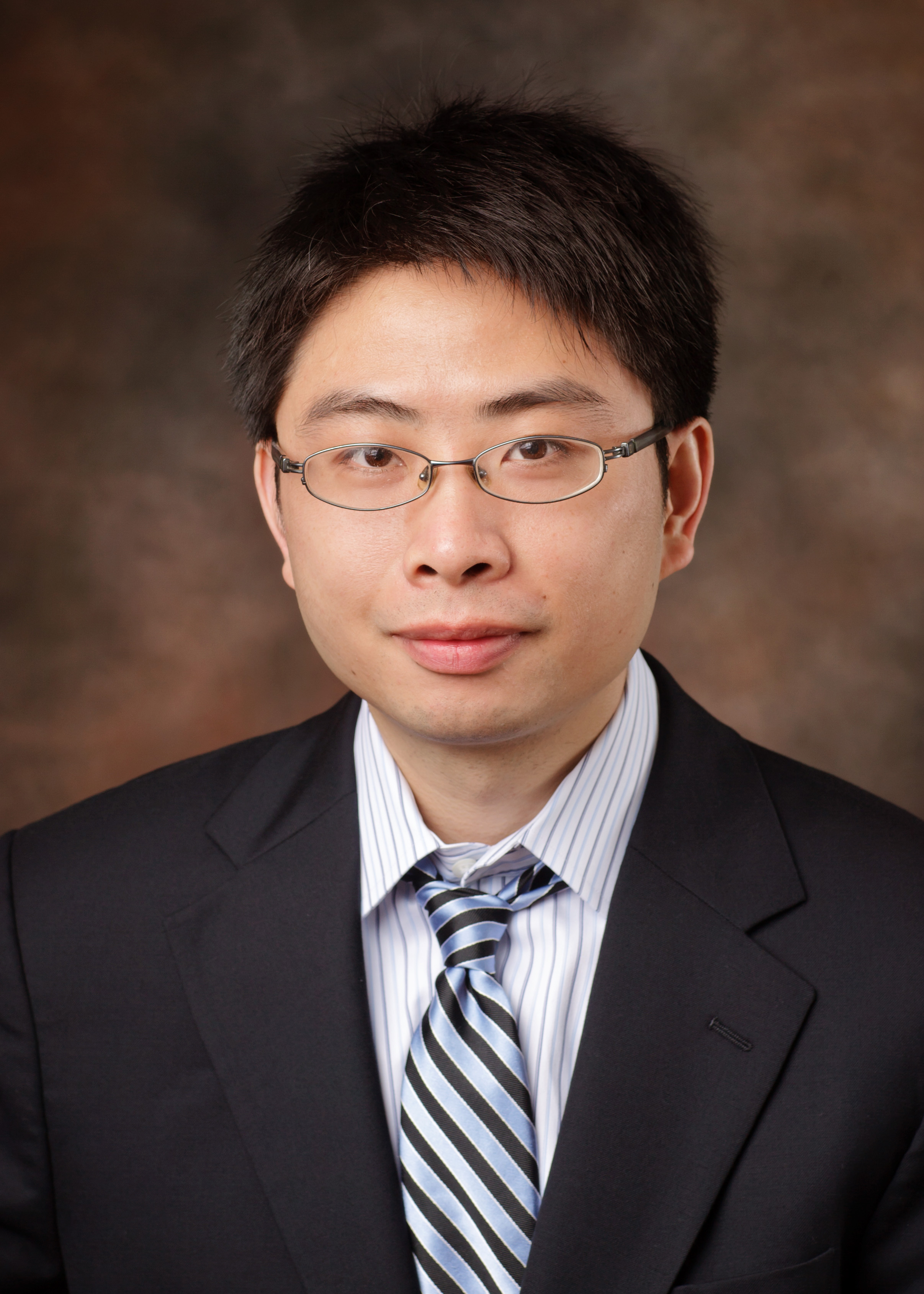}}]%
{Minghui Zhu} is an Associate Professor in the School of Electrical Engineering and Computer Science at the Pennsylvania State University. Prior to joining Penn State in 2013, he was a postdoctoral associate in the Laboratory for Information and Decision Systems at the Massachusetts Institute of Technology. He received Ph.D. in Engineering Science (Mechanical Engineering) from the University of California, San Diego in 2011. His research interests lie in distributed control and decision-making of multi-agent networks with applications in robotic networks, security and the smart grid. He is the co-author of the book "Distributed optimization-based control of multi-agent networks in complex environments" (Springer, 2015). He is a recipient of the award of Outstanding Graduate Student of Mechanical and Aerospace Engineering at UCSD in 2011, the Dorothy Quiggle Career Development Professorship in Engineering at Penn State in 2013, the award of Outstanding Reviewer of Automatica in 2013 and 2014, and the National Science Foundation CAREER award in 2019. He is an associate editor of the IEEE Open Journal of Control Systems, the IET Cyber-systems and Robotics and the Conference Editorial Board of the IEEE Control Systems Society.
\end{IEEEbiography}

\appendix

This section contains preliminary results towards the complete proof of Theorem~\ref{theorem:mainResult}.
Most notations are borrowed from \cite{cardaliaguet1999set} but we make the following changes for the consistency throughout this paper.
Let $K=\XX$ be a closed set and $C=\XX_{\mathrm{goal}}$ be a closed target set of $K$ following the definition in \cite{cardaliaguet1999set}.
We rewrite the minimal time function $\vartheta^K_C$, the estimated value function $T$ and the the spatial resolution $h_p$ in \cite{cardaliaguet1999set} as $\dsT^*$, $\dsT$ and $d_p$ respectively. 
Notice that all superscripts and subscripts of $T$ are kept in $\dsT$.
Denote the domain of $\dsT^*$ by $\DD(\dsT^*)\triangleq\{x\in K|\dsT^*(x)<+\infty\}$.
The following theorem proves a slightly stronger result compared to Corollary 3.7 in \cite{cardaliaguet1999set}, where the perturbation radius can be arbitrarily large.

\begin{theorem}\label{theorem:pointwiseConvergence}
    Given the convergence in the epigraphic sense, i.e., $\Epi(\dsT^*)=\Lim_{p\to+\infty}\Epi(\dsT^\infty_p)$, $\dsT^\infty_p$ converges pointwise to $\dsT^*$, i.e., $\forall x\in\DD(\dsT^*)$, 
    \begin{align*}
        \dsT^*(x)=\lim_{p\to+\infty}\min_{x'\in(x+\eta_p\BB\cap K_{d_p})}\dsT^\infty_p(x'),
    \end{align*}
    where $\lim_{p\to+\infty}\frac{\eta_p}{d_p}\in[1, +\infty)$ and $\eta_p\geq d_p$.
\end{theorem}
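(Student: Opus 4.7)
I intend to prove the two one-sided inequalities $\limsup_{p \to +\infty} v_p(x) \leq \dsT^*(x)$ and $\liminf_{p \to +\infty} v_p(x) \geq \dsT^*(x)$ separately, where I write $v_p(x) \triangleq \min_{x' \in (x + \eta_p\BB) \cap K_{d_p}} \dsT^\infty_p(x')$. Three structural ingredients drive the argument: (i) the functional characterization of epigraphic set convergence in terms of Liminf/Limsup selection sequences, (ii) monotonicity of the minimum operator with respect to the minimization domain, which lets me reduce the upper bound to an already-proven case, and (iii) the fact that $\lim_p \eta_p / d_p \in [1,+\infty)$ combined with $d_p \to 0$ forces $\eta_p \to 0$.

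For the upper bound, since $\eta_p \geq d_p$ I have $(x + d_p\BB) \cap K_{d_p} \subseteq (x + \eta_p\BB) \cap K_{d_p}$, so minimizing over the larger set gives
\[
v_p(x) \;\leq\; \min_{x' \in (x + d_p\BB) \cap K_{d_p}} \dsT^\infty_p(x').
\]
The right-hand side is exactly the quantity appearing in Corollary~3.7 of \cite{cardaliaguet1999set} (the case where the perturbation radius coincides with the spatial dispersion $d_p$), and under the hypothesis of epigraphic convergence that corollary asserts it converges to $\dsT^*(x)$ for every $x \in \DD(\dsT^*)$. This yields $\limsup_p v_p(x) \leq \dsT^*(x)$.

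For the lower bound, I pick a minimizer $y_p \in (x + \eta_p\BB) \cap K_{d_p}$ defining $v_p(x)$; non-emptiness is guaranteed because the dispersion condition on $K_{d_p}$ together with $\eta_p \geq d_p$ forces the intersection to contain a grid point. From $\|y_p - x\| \leq \eta_p \to 0$ I obtain $y_p \to x$. The Limsup half of the epigraphic convergence $\Epi(\dsT^*) \supseteq \Limsup_p \Epi(\dsT^\infty_p)$ is equivalent to the lower-semicontinuity statement that for every sequence $y_p \to x$ with $y_p$ in the domain of $\dsT^\infty_p$, $\liminf_p \dsT^\infty_p(y_p) \geq \dsT^*(x)$. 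Applying this to the chosen $y_p$ yields $\liminf_p v_p(x) \geq \dsT^*(x)$, and combining with the upper bound gives the claimed pointwise convergence on $\DD(\dsT^*)$.

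The principal technical point I anticipate is tracking the joint behaviour of $\eta_p$ and $d_p$: the hypothesis $\eta_p/d_p \to \bar c \in [1,+\infty)$ is precisely what collapses the minimizers $y_p$ onto $x$, enabling the Limsup characterization of epigraphic convergence to be applied at $x$ itself. If instead $\eta_p/d_p \to +\infty$, the minimizers could remain at positive distance from $x$ and we would at best recover a neighborhood-averaged version of $\dsT^*$. Compactness of $K = \XX$ from Assumption~\ref{asmp:compactness} provides the ambient topological setting in which $y_p \to x$ and the extraction of convergent subsequences (should they be needed) are well-defined; the restriction to $x \in \DD(\dsT^*)$ ensures $\dsT^*(x) < +\infty$, ruling out degenerate arithmetic with $+\infty$ in the inequalities above.
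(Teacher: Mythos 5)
Your proof is correct, but it takes a genuinely different route from the paper's. The paper works through a dilated epigraph function $\tilde{\dsT}^\infty_p$ defined by $\Epi(\tilde{\dsT}^\infty_p)=\Epi(\dsT^\infty_p)+\eta_p\BB$, sandwiches the perturbed minimum between $\tilde{\dsT}^\infty_p(x)$ and $\tilde{\dsT}^\infty_p(x)+\eta_p$ (Claim~\ref{claim:TBound}), and then derives the two one-sided bounds from the epigraph inclusions $\Epi(\dsT^*)\subseteq\Epi(\tilde{\dsT}^\infty_p)$ and $\Limsup_{p}\Epi(\tilde{\dsT}^\infty_p)\subseteq\Epi(\dsT^*)$, the latter via an explicit selection-sequence argument. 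You avoid the dilation entirely: your upper bound reduces to the cited special case (perturbation radius $d_p$, i.e., Corollary~3.7 of the set-valued reference) by monotonicity of the minimum in the feasible set, and your lower bound applies the standard equivalence between the Kuratowski upper-limit inclusion $\Limsup_p\Epi(\dsT^\infty_p)\subseteq\Epi(\dsT^*)$ and the liminf inequality along sequences $y_p\to x$, evaluated at the minimizers, with $\eta_p\to 0$ following from $\eta_p/d_p$ bounded and $d_p\to 0$ exactly as you say. What your route buys is brevity and modularity (a textbook epi-convergence fact plus the known base case); what the paper's route buys is self-containedness in case Corollary~3.7's radius or hypotheses do not match verbatim, a quantitative intermediate estimate $\min_{x'\in(x+\eta_p\BB)\cap K_{d_p}}\dsT^\infty_p(x')\leq\dsT^*(x)+\eta_p$, and intermediate claims (notably Claim~\ref{claim:upperbound} and the $\tilde{\dsT}^\infty_p$ machinery) that are reused later in Corollary~\ref{corollary:pointwiseConvergenceOnGrid}; with your streamlined proof that corollary would need a short independent argument. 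The only point to verify before leaning on the citation is that the referenced corollary indeed covers exactly the radius-$d_p$ case under the same epigraphic-convergence hypothesis, which is how the paper itself frames it; your handling of non-emptiness of the feasible set and of possibly infinite values is otherwise sound.
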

\begin{proof}
    Define $\tilde{\dsT}^\infty_p$ s.t. $\Epi(\dsT^\infty_p)+\eta_p\BB=\Epi(\tilde{\dsT}^\infty_p)$.
    \begin{claim}\label{claim:TBound}
        The upper and lower bound of $\dsT^\infty_p(x)$ for $x\in\DD(\dsT^*)$ follows respectively
        \begin{align}\label{eq:TUpperbound}
            \tilde{\dsT}^\infty_p(x)+\eta_p\geq \min_{x'\in(x+\eta_p\BB)\cap K_{d_p}}\dsT^\infty_p(x'),
        \end{align}
        \begin{align}\label{eq:TLowerbound}
            \tilde{\dsT}^\infty_p(x)\leq\min_{x'\in (x+\eta_p\BB)\cap K_{d_p}}\dsT^\infty_p(x').
        \end{align}
    \end{claim}
    \begin{proof}
    For any $x\in\DD(\dsT^*)$ and $(x, \tilde{\dsT}^\infty_p(x))\in\Epi(\tilde{\dsT}^\infty_p)$, there exists $(x', t')\in\Epi(\dsT^\infty_p)$ s.t. $\|(x, \tilde{\dsT}^\infty_p(x))-(x', t')\|\leq\eta_p$.
    This implies $\|x-x'\|\leq\eta_p$ and $|\tilde{\dsT}^\infty_p(x)-t'|\leq \eta_p$.
    Therefore, 
    \begin{align*}
    \tilde{\dsT}^\infty_p(x)+\eta_p\geq t'\geq \dsT^\infty_p(x')\geq\min_{x'\in(x+\eta_p\BB)\cap K_{d_p}}\dsT^\infty_p(x').
    \end{align*}
    This completes the proof of \eqref{eq:TUpperbound}.

    Following the definition of $\tilde{\dsT}^\infty_p$, we have
    \begin{equation}\label{eq:TLowerbound:1}
        \begin{aligned}
            &\tilde{\dsT}^\infty_p(x)=\inf\{t|(x, t)\in\Epi(\dsT^\infty_p)+\eta_p\BB\}\\
            =&\inf\{t|\exists x'\in K_{d_p}, t'\geq \dsT^\infty_p(x')\text{ s.t. }\|(x, t)-(x',t')\|\leq\eta_p\}.
        \end{aligned}
    \end{equation}
    Notice that $\|(x, t)-(x',t')\|\leq\eta_p\Rightarrow\|x-x'\|\leq\eta_p$.
    We can simplify the conditions on $(x, t)$ as 
    \begin{align*}
        \tilde{\dsT}^\infty_p(x)
        =&\inf\{t|\forall x'\in (x+\eta_p\BB)\cap K_{d_p}, t'\geq \dsT^\infty_p(x')\\
        &\quad\text{ s.t. }\|(x, t)-(x',t')\|\leq\eta_p\}.
    \end{align*}
    It again follows from \eqref{eq:TLowerbound:1} that $|t-t'|\leq\eta_p$.
    Since $\tilde{\dsT}^\infty_p(x)$ searches for an infimum of $t$, taking an infimum of $t'$ would render at the same result but remove unattainable candidate minimizers.
    Thus, the following holds by letting $t'=\dsT^\infty_p(x')$:
    \begin{equation}
        \begin{aligned}
            &\tilde{\dsT}^\infty_p(x)=\inf\{t|\forall x'\in (x+\eta_p\BB)\cap K_{d_p}\\
            &\quad\quad\quad\quad\quad\text{ s.t. }\|(x, t)-(x',\dsT^\infty_p(x'))\|\leq\eta_p\}\\
            =&\inf\{t|\forall x'\in (x+d_p\BB)\cap K_{d_p}, (x, t)\in(x', \dsT^\infty_p(x'))+\eta_p\BB\}\\
            \leq&\min_{x'\in (x+\eta_p\BB)\cap K_{d_p}}\dsT^\infty_p(x').
        \end{aligned}
    \end{equation}
    where the last inequality is a direct result of expansion of $d_p\BB$ on $(x', \dsT^\infty_p(x'))$.
    This completes the proof of \eqref{eq:TLowerbound}.
    \end{proof}
    
    We proceed to show that $\dsT^*(x)$ is bounded both from below and above by $\dsT^\infty_p(x)$.
    The following claim shows $\dsT^*(x)$ is lower bounded by $\dsT^\infty_p(x)$.
    \begin{claim}\label{claim:lowerbound}
        It holds that for any $x\in\DD(\dsT^*)$,
        \begin{align*}
            \dsT^*(x)\geq\min_{x'\in(x+\eta_p\BB)\cap K_{d_p}}\dsT^\infty_p(x')-\eta_p.
        \end{align*}
    \end{claim}
    \begin{proof}
        Following the argument in the proof of Corollary~3.7 in \cite{cardaliaguet1999set}, we have $\Epi(\dsT^*)\subseteq\Epi(\tilde{\dsT}^\infty_p)$ and it renders at $\tilde{\dsT}^\infty_p(x)\leq\dsT^*(x)$.
        Then it follows from \eqref{eq:TUpperbound} in Claim~\ref{claim:TBound} that $\min_{x'\in(x+\eta_p\BB)\cap K_{d_p}}\dsT^\infty_p(x')\leq\tilde{\dsT}^\infty_p(x)+\eta_p\leq\dsT^*(x)+\eta_p$.
        This completes the proof.
    \end{proof}
    
    The following claim shows the $\dsT^*(x)$ is upper bounded by the limit of $\dsT^\infty_p(x)$.
    \begin{claim}\label{claim:upperbound}
        For any $x\in\DD(\dsT^*)$, it holds that 
        \begin{align*}
            \dsT^*(x)\leq\liminf_{p\to+\infty}\min_{x'\in(x+\eta_p\BB)\cap K_{d_p}}\dsT^\infty_p(x').
        \end{align*}
    \end{claim}
    \begin{proof}
        Following the argument in the proof of Corollary~3.7 in \cite{cardaliaguet1999set}, we have $\Limsup_{p\to+\infty}\Epi(\tilde{\dsT}^\infty_p)\subseteq\Epi(\dsT^*)$.
        By the definition of $\Limsup$, we have $\forall (x, t)\in\Limsup_{p\to+\infty}\Epi(\tilde{\dsT}^\infty_p)$, $\liminf_{p\to+\infty}\dist((x, t), \Epi(\tilde{\dsT}^\infty_p))=0$.
        In other words, there exists a sequence $\{(x_p, t_p)\in\Epi(\tilde{\dsT}^\infty_p)\}$ s.t. $\liminf_{p\to+\infty}\dist((x, t), (x_p, t_p))=0$.
        
        Fix $x\in\DD(\dsT^*)$ and consider a sequence $\{(x_p, t_p)\}$ s.t. $x_p=x, t_p=\tilde{\dsT}^\infty_p(x_p)$.
        It follows from Claim~\ref{claim:lowerbound} that $t_p\leq\dsT^*(x)<+\infty$, where the last inequality follows from the definition of $\DD(\dsT^*)$.
        Therefore, $\{(x_p, t_p)\}$ is bounded.
        Then it follows from the supplementary statement about the definition of $\Limsup$ on page 236 in \cite{cardaliaguet1999set} that $\mathrm{Limsup}_{p\to+\infty}\{(x_p, t_p)\}\neq\emptyset$.
        Since $(x_p, t_p)\in\Epi(\tilde{\dsT}^\infty_p)$, we have 
        \begin{align*}
            \Limsup_{p\to+\infty}\{(x_p, t_p)\}\subseteq\Limsup_{p\to+\infty}\Epi(\tilde{\dsT}^\infty_p)\subseteq\Epi(\dsT^*).
        \end{align*}
        Let $(x^*, t^*)\in\Limsup_{p\to+\infty}\{(x_p, t_p)\}$ s.t. $t^*\leq t, \forall (x, t)\in\Limsup_{p\to+\infty}\{(x_p, t_p)\}$.
        Therefore, $t^*\geq\dsT^*(x^*)=\dsT^*(x)$.
        Then we have $\liminf_{p\to+\infty}\tilde{\dsT}^\infty_p(x)=t^*\geq\dsT^*(x)$.
        It follows from \eqref{eq:TLowerbound} in Claim~\ref{claim:TBound} that 
        \begin{align*}
            \dsT^*(x)\leq\liminf_{p\to+\infty}\min_{x'\in(x+\eta_p\BB)\cap K_{d_p}}\dsT^\infty_p(x').
        \end{align*}
        This completes the proof.
    \end{proof}
    Combining Claims~\ref{claim:lowerbound} and \ref{claim:upperbound} renders at 
    \begin{align*}
        &\min_{x'\in(x+\eta_p\BB)\cap K_{d_p}}\dsT^\infty_p(x')-\eta_p\leq\dsT^*(x)\\\leq&\liminf_{p\to+\infty}\min_{x'\in(x+\eta_p\BB)\cap K_{d_p}}\dsT^\infty_p(x').
    \end{align*}
    By taking $p\to+\infty$ on the left side, both $\eta_p$ and $d_p$ disappear at the same rate and we complete the proof.
\end{proof}

We proceed to show the convergence of the value function estimate on each point $x\in K_{d_p}$.
Different from Theorem~\ref{theorem:pointwiseConvergence}, we remove the perturbation $x'\in(x+\eta_p\BB)\cap K_{d_p}$ in the theorem statement and directly analyze the values $\dsT_p^\infty(x), \forall x\in K_{d_p}$.
Towards this end, we first prove the following lemma quantifying the lowerbound of $\dsT^*(x), \forall x\in K_{d_p}$.
\begin{lemma}\label{lemma:estimatesUpperBound}
    Given all conditions in Theorem~\ref{theorem:pointwiseConvergence}, the following relationship holds:
    \begin{align*}
        \dsT^*(x)\geq \dsT^\infty_p(x)-d_p, \forall x\in K_{d_p}.
    \end{align*}
\end{lemma}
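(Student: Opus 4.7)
The plan is to leverage the epigraphic inclusion $\Epi(\dsT^*) \subseteq \Epi(\tilde{\dsT}^\infty_p) = \Epi(\dsT^\infty_p) + d_p\BB$ that is established in the proof of Claim~\ref{claim:lowerbound} at the admissible choice $\eta_p = d_p$ (permitted since the hypothesis of Theorem~\ref{theorem:pointwiseConvergence} requires only $\eta_p \geq d_p$). Unrolling this inclusion immediately yields $\tilde{\dsT}^\infty_p(x) \leq \dsT^*(x)$ for every $x \in \DD(\dsT^*)$, so the task reduces to establishing the complementary bound $\tilde{\dsT}^\infty_p(x) \geq \dsT^\infty_p(x) - d_p$ at grid nodes $x \in K_{d_p}$; chaining the two inequalities then gives the lemma.

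The core observation supporting the complementary bound is that when $x$ lies on the grid, the vertical line $\{x\} \times \real$ meets $\Epi(\dsT^\infty_p)$ at $(x, \dsT^\infty_p(x))$, so translating this boundary point by the vector $(0, -d_p)$, which has norm exactly $d_p$, places the translated point inside $\Epi(\dsT^\infty_p) + d_p\BB = \Epi(\tilde{\dsT}^\infty_p)$. This already gives the easy direction $\tilde{\dsT}^\infty_p(x) \leq \dsT^\infty_p(x) - d_p$. If one can additionally verify that the vertical witness $(x, \dsT^\infty_p(x))$ actually achieves the infimum in the definition of $\tilde{\dsT}^\infty_p(x)$, then equality holds and, combined with $\tilde{\dsT}^\infty_p(x) \leq \dsT^*(x)$, it produces $\dsT^\infty_p(x) - d_p = \tilde{\dsT}^\infty_p(x) \leq \dsT^*(x)$, which is exactly the claim.

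The main obstacle is verifying the matching reverse inequality $\tilde{\dsT}^\infty_p(x) \geq \dsT^\infty_p(x) - d_p$: a priori a distinct grid neighbour $x' \in (x + d_p\BB) \cap K_{d_p}$ with $\dsT^\infty_p(x') < \dsT^\infty_p(x)$ could give a non-vertical witness pulling the infimum strictly below $\dsT^\infty_p(x) - d_p$, so the vertical witness would no longer be optimal. To rule this out I would invoke the underlying Bellman structure of $\dsT^\infty_p$, in which a one-hop transition on the grid incurs a cost bounded below in terms of $\epsilon_p$ rather than $d_p$, preventing values at $d_p$-neighbours from dropping by more than the vertical gap. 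Should this structural route fail to deliver the tight constant, a fallback is to bypass $\tilde{\dsT}^\infty_p$ altogether and argue directly by trajectory shadowing: take an optimal continuous trajectory of $\dsT^*$ issued from $x \in K_{d_p}$ with total time $\dsT^*(x)$, build a discrete-time grid trajectory on $K_{d_p}$ that tracks it and terminates in the goal region with a single rounding penalty of at most $d_p$, and then use the minimality of $\dsT^\infty_p(x)$ on the graph to conclude $\dsT^\infty_p(x) \leq \dsT^*(x) + d_p$, which is equivalent to the lemma.
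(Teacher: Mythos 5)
Your primary route has a genuine gap: the intermediate inequality $\tilde{\dsT}^\infty_p(x)\geq\dsT^\infty_p(x)-d_p$ at grid points is false in general. From the definition of $\tilde{\dsT}^\infty_p$ one only gets the analogue of \eqref{eq:TUpperbound}, namely $\tilde{\dsT}^\infty_p(x)\geq\min_{x'\in(x+d_p\BB)\cap K_{d_p}}\dsT^\infty_p(x')-d_p$, and indeed \eqref{eq:TLowerbound} shows $\tilde{\dsT}^\infty_p(x)\leq\dsT^\infty_p(x')$ for any grid neighbour $x'$ with $\|x-x'\|\leq d_p$. But the finite values of $\dsT^\infty_p$ are sums of the per-hop cost $\epsilon_p-d_p$, so two $d_p$-close grid points with different values differ by at least $\epsilon_p-d_p$, which eventually exceeds $d_p$ by Assumptions~\ref{asmp:resolutions:convergence} and \ref{asmp:resolutions:epsilonLarger}; whenever $x$ has such a neighbour with the smaller value (which happens across every level jump of the discrete value function, and systematically near obstacles or for nonholonomic models like the Dubins car), $\tilde{\dsT}^\infty_p(x)\leq\dsT^\infty_p(x)-(\epsilon_p-d_p)<\dsT^\infty_p(x)-d_p$, so your vertical witness is not optimal and the chain breaks. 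Your appeal to the Bellman structure cuts the wrong way: a one-hop cost of order $\epsilon_p$ makes nearby grid values differ by \emph{more} than $d_p$, not less. More fundamentally, the only inclusion you use, $\Epi(\dsT^*)\subseteq\Epi(\dsT^\infty_p)+d_p\BB$ (the one behind Claim~\ref{claim:lowerbound}), can only ever control the minimum of grid values over a $d_p$-ball, never $\dsT^\infty_p(x)$ at the grid point itself, so no refinement of the witness argument closes the gap from that inclusion alone.

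The paper's proof uses a different ingredient, an inclusion oriented the other way and restricted to the grid: by Proposition~2.21 of \cite{cardaliaguet1999set}, $(Viab_F(\HH)+d_p\BB)\cap\HH_{d_p}$ is a discrete viability domain for $\Gamma_{\epsilon_p,d_p}$, and via Theorem~3.2 and Lemma~3.6 there this becomes $(\Epi(\dsT^*)+d_p\BB)\cap\HH_{d_p}\subseteq\Epi(\dsT^\infty_p)$; evaluating at $(x,\dsT^*(x)+d_p)$ for a grid point $x\in K_{d_p}$ gives $\dsT^\infty_p(x)\leq\dsT^*(x)+d_p$ directly (and the case $\dsT^*(x)=+\infty$ is trivial). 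Your fallback of shadowing an optimal continuous trajectory by a grid trajectory is precisely what this viability result packages, but as sketched it is not yet a proof: you would need existence of a minimizing trajectory, a shadowing construction whose per-step error fits inside $\rho(d_p,\epsilon_p)$, and a cost accounting that returns exactly the additive constant $d_p$ --- the naive count incurs an extra partial step of cost $\epsilon_p-d_p$, which is not bounded by $d_p$ under the paper's resolution assumptions, so one must exploit the inflated discrete target $\XX_{\mathrm{goal}}+(M\epsilon_p+d_p)\BB$ and the per-step cost $\epsilon_p-d_p$ exactly as the cited results do. In short, the fallback amounts to re-deriving Proposition~2.21, Theorem~3.2 and Lemma~3.6 of \cite{cardaliaguet1999set} rather than avoiding them.
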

\begin{proof}
    It follows from Proposition~2.21 in \cite{cardaliaguet1999set} that $(Viab_F(\HH)+d_p\BB)\cap\HH_{d_p}$ is a discrete viability domain for $\Gamma_{\epsilon_p, d_p}$.
    Thus, it holds that $(Viab_F(\HH)+d_p\BB)\cap\HH_{d_p}\subseteq Viab_{\Gamma_{\epsilon_p, d_p}}(\HH_{d_p})$.
    It follows from Theorem~3.2 and Lemma~3.6 that we can replace the discrete viability kernals with epigraphs of traveling time functions; i.e., 
    \begin{align*}
        (\Epi(\dsT^*)+d_p\BB)\cap\HH_{d_p}\subseteq\Epi(\dsT^\infty_p).
    \end{align*}
    Fix $x\in K_{d_p}$ and there exists $t\in\dsT^*(x)+d_p\BB$ s.t. $(x, t)\in\Epi(\dsT^\infty_p)$; that is, $t\geq \dsT^\infty_p(x)$.
    This implies $\dsT^*(x)+d_p\geq \dsT^\infty_p(x)$.
    This completes the proof.
\end{proof}
The following corollary characterized the pointwise convergence of the value function estimate.
\begin{corollary}\label{corollary:pointwiseConvergenceOnGrid}
    Given the conditions in Theorem~\ref{theorem:pointwiseConvergence} and $x\in K_{d_{\bar{p}}}\cap\DD(\dsT^*)$ for some $\bar{p}\geq1$. 
    Then the following holds for every $x\in K_{d_{\bar{p}}}\cap\DD(\dsT^*)$:
    \begin{align*}
        \dsT^*(x)=\lim_{p\to+\infty}\dsT^\infty_p(x).
    \end{align*}    
\end{corollary}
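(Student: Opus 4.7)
\textbf{Proof plan for Corollary~\ref{corollary:pointwiseConvergenceOnGrid}.} The plan is a squeeze argument: Theorem~\ref{theorem:pointwiseConvergence} supplies a lower bound on $\liminf_{p\to+\infty}\dsT^\infty_p(x)$, while Lemma~\ref{lemma:estimatesUpperBound} supplies an upper bound on $\limsup_{p\to+\infty}\dsT^\infty_p(x)$, and both bounds coincide with $\dsT^*(x)$.

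First I would exploit the incremental nature of iPolicy: since $V_p\subseteq V_{p'}$ whenever $p'\geq p$ (new samples are only ever added, cf.\ line~\ref{alg:iFPA:expansion} of Algorithm~\ref{algorithm:iFPA}), any $x\in K_{d_{\bar p}}$ satisfies $x\in K_{d_p}$ for every $p\geq\bar p$. In particular, $x$ itself is admissible in the minimization defining the perturbed value in Theorem~\ref{theorem:pointwiseConvergence}, so
\begin{equation*}
\min_{x'\in(x+\eta_p\BB)\cap K_{d_p}}\dsT^\infty_p(x')\leq\dsT^\infty_p(x),\quad\forall p\geq\bar p.
\end{equation*}
Combining this pointwise inequality with Theorem~\ref{theorem:pointwiseConvergence} yields
\begin{equation*}
\dsT^*(x)=\lim_{p\to+\infty}\min_{x'\in(x+\eta_p\BB)\cap K_{d_p}}\dsT^\infty_p(x')\leq\liminf_{p\to+\infty}\dsT^\infty_p(x).
\end{equation*}

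Next I would invoke Lemma~\ref{lemma:estimatesUpperBound}, which gives $\dsT^\infty_p(x)\leq\dsT^*(x)+d_p$ for every $p\geq\bar p$. By Assumption~\ref{asmp:resolutions:convergence} $d_p\to 0$, so taking $\limsup$ on both sides produces
\begin{equation*}
\limsup_{p\to+\infty}\dsT^\infty_p(x)\leq\dsT^*(x).
\end{equation*}
Chaining the two inequalities,
\begin{equation*}
\dsT^*(x)\leq\liminf_{p\to+\infty}\dsT^\infty_p(x)\leq\limsup_{p\to+\infty}\dsT^\infty_p(x)\leq\dsT^*(x),
\end{equation*}
forces $\lim_{p\to+\infty}\dsT^\infty_p(x)=\dsT^*(x)$, completing the argument.

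The only nontrivial step is the containment $x\in K_{d_p}$ for all $p\geq\bar p$, which is where the incremental (monotone) structure of the sampling routine is essential; without it, the lower bound via Theorem~\ref{theorem:pointwiseConvergence} could not be specialized to the unperturbed value $\dsT^\infty_p(x)$. Once that monotonicity is recorded, the remainder is a routine squeeze between the two previously established results.
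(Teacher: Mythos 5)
Your proposal is correct and follows essentially the same route as the paper: a squeeze between the lower bound coming from the epigraphical-convergence result and the upper bound $\dsT^\infty_p(x)\leq\dsT^*(x)+d_p$ from Lemma~\ref{lemma:estimatesUpperBound}, with $d_p\to0$. The only cosmetic difference is that you extract the lower bound from the stated limit in Theorem~\ref{theorem:pointwiseConvergence} together with the trivial inequality $\min_{x'\in(x+\eta_p\BB)\cap K_{d_p}}\dsT^\infty_p(x')\leq\dsT^\infty_p(x)$ (valid since $x\in K_{d_p}$ by nestedness of the sample sets), whereas the paper invokes Claim~\ref{claim:upperbound} via $\tilde{\dsT}^\infty_p(x)\leq\dsT^\infty_p(x)$ --- these are equivalent.
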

\begin{proof}
    The proof generally follows the arguments towards Theorem~\ref{theorem:pointwiseConvergence}.
    For $\tilde{T}_p^\infty$ defined as $\Epi(\tilde{T}_p^\infty)=\Epi(\dsT_p^\infty)+\eta_p\BB$, $\tilde{T}_p^\infty(x)\leq \dsT_p^\infty(x)$.
    Then it follows from Claim~\ref{claim:upperbound} that $\liminf_{p\to+\infty}\dsT_p^\infty(x)\geq\liminf_{p\to+\infty}\tilde{T}_p^\infty(x)\geq\dsT^*(x)$.
    It follows from Lemma~\ref{lemma:estimatesUpperBound} that $\dsT^\infty_p(x)\leq\dsT^*(x)+d_p$.
    Taking limits $p\to+\infty$, $d_p\to0$ completes the proof.
\end{proof}

Notice that Theorem~\ref{theorem:pointwiseConvergence} only characterizes the pointwise convergence over $\DD(\dsT^*)$.
On $K\setminus\DD(\dsT^*)$, the value of $\dsT^*$ is not well defined and the convergence cannot be analyzed.
The Kruzhkov transform can address this issue and let $\varTheta^*_k(x)=1, \forall x\in K\setminus\DD(\dsT^*)$.
Without proof, the theorem is stated below.
\begin{lemma}\label{lemma:transformedPointwiseConvergence}
    Given the convergence in the epigraphic sense, i.e., $\Epi(\dsT^*)=\Lim_{k\to+\infty}\Epi(\dsT^*_k)$, $\varTheta^*_k$ converges pointwise to $\varTheta^*$, i.e., $\forall x\in\XX$, 
    \begin{align*}
        \varTheta^*(x)=\lim_{k\to+\infty}\min_{x'\in(x+\ell_k\BB)\cap V_k}\varTheta^*_k(x'),
    \end{align*}
    where $\lim_{k\to+\infty}\frac{\ell_k}{d_k}\in[1, +\infty)$ and $\ell_k\geq d_k$.
\end{lemma}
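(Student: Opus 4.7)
\textbf{Proof proposal for Lemma~\ref{lemma:transformedPointwiseConvergence}.} The plan is to reduce the statement to Theorem~\ref{theorem:pointwiseConvergence} via the Kruzhkov homeomorphism, splitting the state space into $\DD(\dsT^*)$ and $\XX\setminus\DD(\dsT^*)$. On $\DD(\dsT^*)$, the transform $\Psi$ converts the already-established pointwise convergence of $\dsT^*_k$; on the complement, where $\dsT^*$ is not finite and the original theorem is silent, I exploit that $\Psi$ compactifies the codomain to $[0,1]$ so that epigraphic convergence can be used directly.

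First, for $x\in\DD(\dsT^*)$ I would apply Theorem~\ref{theorem:pointwiseConvergence} with $\eta_p$ replaced by $\ell_k$ and $K_{d_p}$ replaced by $V_k$, obtaining
\[
\lim_{k\to+\infty}\min_{x'\in (x+\ell_k\BB)\cap V_k}\dsT^*_k(x')=\dsT^*(x).
\]
Since $\Psi$ is continuous and strictly increasing, the minimum commutes with $\Psi$, so
\[
\min_{x'\in (x+\ell_k\BB)\cap V_k}\varTheta^*_k(x')=\Psi\!\Bigl(\min_{x'\in (x+\ell_k\BB)\cap V_k}\dsT^*_k(x')\Bigr),
\]
and continuity of $\Psi$ at the finite value $\dsT^*(x)$ yields the limit $\Psi(\dsT^*(x))=\varTheta^*(x)$.

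Second, for $x\in\XX\setminus\DD(\dsT^*)$ we have $\varTheta^*(x)=1$. The upper bound $\limsup_{k}\min_{x'}\varTheta^*_k(x')\le 1$ is immediate because $\varTheta^*_k$ takes values in $[0,1]$. For the lower bound I argue by contradiction: assume a subsequence $k_j$ and $\delta>0$ with $\min_{x'\in(x+\ell_{k_j}\BB)\cap V_{k_j}}\varTheta^*_{k_j}(x')\le 1-\delta$. Let $x'_{k_j}$ be the corresponding minimizers, so $\|x'_{k_j}-x\|\le\ell_{k_j}\to 0$ and $(x'_{k_j},1-\delta)\in\Epi(\varTheta^*_{k_j})$. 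Applying $\Psi$ to the $t$-coordinate turns the hypothesis $\Epi(\dsT^*)=\Lim_k\Epi(\dsT^*_k)$ into $\Epi(\varTheta^*)=\Lim_k\Epi(\varTheta^*_k)$, because $(x,t)\mapsto(x,\Psi(t))$ is a homeomorphism of $\XX\times[0,+\infty]$ onto $\XX\times[0,1]$ extending continuously at $t=+\infty$. In particular $\Limsup_k\Epi(\varTheta^*_k)\subseteq\Epi(\varTheta^*)$, so the cluster point $(x,1-\delta)$ of $(x'_{k_j},1-\delta)$ lies in $\Epi(\varTheta^*)$, forcing $\varTheta^*(x)\le 1-\delta<1$, a contradiction.

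The main obstacle I anticipate is justifying cleanly that the epigraphic-limit hypothesis for $\dsT^*_k$ transfers to $\varTheta^*_k$, especially the boundary behaviour at $t=+\infty\mapsto s=1$. This is where the Kruzhkov transform is essential: it compactifies the value axis so that the Painlev\'e--Kuratowski limit on $\XX\times[0,1]$ is well behaved, and it is what allows the contradiction argument on $\XX\setminus\DD(\dsT^*)$ to go through. A secondary bookkeeping issue is the role of the perturbation radius $\ell_k$ with the ratio condition $\lim_k\ell_k/d_k\in[1,+\infty)$; this is handled exactly as in Theorem~\ref{theorem:pointwiseConvergence} by sandwiching $\ell_k\BB$ between $d_k\BB$ and a comparable enlargement, so the same Claim~\ref{claim:TBound}-style bounds carry over without modification.
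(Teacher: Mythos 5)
The paper states this lemma explicitly ``without proof,'' so there is no in-paper argument to compare against; your proposal supplies the missing content, and its structure matches what the surrounding text evidently intends: split $\XX$ into $\DD(\dsT^*)$, where Theorem~\ref{theorem:pointwiseConvergence} applies (with $\eta_p$ playing the role of $\ell_k$ and the sample set playing the role of the grid $K_{d_p}$), and its complement, where $\varTheta^*\equiv 1$ and the untransformed statement is meaningless. The first half of your argument is correct: $\Psi$ is continuous and strictly increasing, so it commutes with the minimum over the finite set $(x+\ell_k\BB)\cap V_k$, and continuity of $\Psi$ at the finite value $\dsT^*(x)$ converts the conclusion of Theorem~\ref{theorem:pointwiseConvergence} into the transformed one.

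The one soft spot is exactly the step you flag yourself: you assert that the hypothesis $\Epi(\dsT^*)=\Lim_k\Epi(\dsT^*_k)$ transfers to $\Epi(\varTheta^*)=\Lim_k\Epi(\varTheta^*_k)$ via the homeomorphism $(x,t)\mapsto(x,\Psi(t))$ extended to $t=+\infty$. That transfer is not automatic: the original Kuratowski limit lives in $\real^n\times[0,+\infty)$, while the compactified statement adds the slice $t=+\infty$ (equivalently $s=1$), and both the $\Liminf$ and $\Limsup$ inclusions at that boundary require separate verification (e.g., producing points of $\Epi(\dsT^*_k)$ over states near $x$ with heights tending to $+\infty$). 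Fortunately your contradiction argument does not need the full transferred equality. Under the contradiction hypothesis the minimizers satisfy $\varTheta^*_{k_j}(x'_{k_j})\le 1-\delta$, i.e. $\dsT^*_{k_j}(x'_{k_j})\le \Psi^{-1}(1-\delta)=-\ln\delta<+\infty$, so $(x'_{k_j},-\ln\delta)\in\Epi(\dsT^*_{k_j})$ at a fixed finite height. Since $\ell_{k_j}\to 0$ (because $d_k\to0$ and $\ell_k/d_k$ has a finite limit), these points converge to $(x,-\ln\delta)$, which by the untransformed hypothesis $\Limsup_{k}\Epi(\dsT^*_{k})\subseteq\Epi(\dsT^*)$ lies in $\Epi(\dsT^*)$, giving $\dsT^*(x)\le-\ln\delta<+\infty$ and contradicting $x\notin\DD(\dsT^*)$ directly. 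With that replacement (or with a genuine proof of the compactified transfer), your argument is complete; note also that the nonemptiness of $(x+\ell_k\BB)\cap V_k$ should be invoked via Lemma~\ref{lemma:dispersion}, consistent with the almost-sure, asymptotic reading used elsewhere in the paper.
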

The following corollary shows the uniform convergence for the transformed value function estimates.
\begin{corollary}\label{corollary:uniformConvergence}
    The transformed value function estimate $\{\varTheta^*_k\}$ converges uniformly to $\varTheta^*$ almost everywhere on $\XX$; i.e., 
    \begin{align*}
        \lim_{k\to+\infty}\|\varTheta^*-\min_{x'\in(x+\ell_k\BB)\cap V_k}\varTheta^*_k(x')\|_{\XX\setminus\ZZ}=0,
    \end{align*}
    where $\ZZ$ has sufficiently small measure, $\lim_{k\to+\infty}\frac{\ell_k}{d_k}\in[1, +\infty)$ and $\ell_k\geq d_k$.
\end{corollary}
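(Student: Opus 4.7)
The plan is to derive the corollary as a direct application of Egorov's theorem. Egorov requires three ingredients: a finite measure space, pointwise a.e.\ convergence, and measurability (with uniform boundedness guaranteeing measurability of the relevant limits). I will produce each in turn, and the set $\ZZ$ of ``sufficiently small measure'' will be exactly the exceptional set Egorov exhibits.

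First I would note that $\XX$ is compact by Assumption~\ref{asmp:compactness}, so the Lebesgue measure on $\XX$ is finite, giving the finite-measure space Egorov needs. Second, set $g_k(x)\triangleq\min_{x'\in(x+\ell_k\BB)\cap V_k}\varTheta^*_k(x')$. Because $V_k$ is a finite point set, one may rewrite $g_k(x)=\min\{\varTheta^*_k(v):v\in V_k,\ \|v-x\|\leq\ell_k\}$, and consequently $\{x\in\XX:g_k(x)\leq t\}=\bigcup_{v\in V_k,\ \varTheta^*_k(v)\leq t}(v+\ell_k\BB)\cap\XX$, a finite union of closed balls; hence each $g_k$ is Borel-measurable. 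Lemma~\ref{lemma:transformedPointwiseConvergence} then supplies the pointwise convergence $g_k(x)\to\varTheta^*(x)$ for every $x\in\XX$ (so in particular a.e.). Third, the Kruzhkov transform maps $[0,+\infty]$ into $[0,1]$, so $g_k,\varTheta^*\in[0,1]$ and the functions are trivially uniformly bounded on $\XX$.

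With these three facts in hand, Egorov's theorem yields: for every $\delta>0$ there exists a measurable $\ZZ\subseteq\XX$ with $\mu(\ZZ)<\delta$ such that $g_k\to\varTheta^*$ uniformly on $\XX\setminus\ZZ$. Choosing $\delta$ as small as required by the downstream arguments (the invocation of the corollary in the proof of Lemma~\ref{lemma:contractiveOverVI} and Theorem~\ref{theorem:contractiveOverPeriods}) gives exactly the claimed statement $\lim_{k\to+\infty}\|\varTheta^*-g_k\|_{\XX\setminus\ZZ}=0$. The growth condition $\ell_k/d_k\in[1,+\infty)$ is inherited directly from the hypothesis of the corollary, so Lemma~\ref{lemma:transformedPointwiseConvergence} applies without modification.

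The main obstacle I anticipate is the interaction between the randomness in the sampling $\{V_k\}$ and the deterministic measure-theoretic machinery on $\XX$. The pointwise convergence of Lemma~\ref{lemma:transformedPointwiseConvergence} holds on a probability-one event of the sample space; on that event the Egorov construction applies sample-path-wise, so the exceptional set $\ZZ$ is in principle a random subset of $\XX$, but its Lebesgue measure is controlled by the prescribed $\delta$ almost surely. A secondary technicality is the possibility that $(x+\ell_k\BB)\cap V_k=\emptyset$ for small $k$, which one handles by discarding an initial index block (whose removal does not affect the tail limit) or by invoking Lemma~\ref{lemma:dispersion} to ensure non-emptiness with probability one in the asymptotic regime.
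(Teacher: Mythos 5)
Your proposal is correct and follows essentially the same route as the paper: compactness of $\XX$ gives a finite measure space, the pointwise convergence supplied by Lemma~\ref{lemma:transformedPointwiseConvergence} (the paper cites Theorem~\ref{theorem:pointwiseConvergence}, its untransformed counterpart) feeds into Egorov's theorem, and $\ZZ$ is the Egorov exceptional set of prescribed small measure. Your additional checks --- measurability of the perturbed minima via finite unions of balls, uniform boundedness from the Kruzhkov transform, and the sample-path-wise handling of the probability-one event --- are details the paper leaves implicit, so nothing is missing.
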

\begin{proof}
    The proof leverages Egoroff's theorem \cite{royden2010real}.
    Since $\XX$ is compact, $\XX$ has finite measure.
    It follows from Theorem~\ref{theorem:pointwiseConvergence} that $\{\varTheta^*_k\}$ are measurable functions converging on $\XX$ pointwise to $\varTheta^*$.
    Then $\{\varTheta^*_k\}$ converges uniformly to $\varTheta^*$ almost everywhere to $\XX\setminus\ZZ$ by the Egoroff's theorem, where $\ZZ$ has sufficiently small measure.
    This completes the proof of the first relation.
\end{proof}
\end{document}